\newlist{inparaenum}{enumerate}{2}
\setlist[inparaenum]{nosep}
\setlist[inparaenum,1]{label=\bfseries\arabic*.}
\setlist[inparaenum,2]{label=\bfseries\roman*)}
\pgfplotsset{compat=newest}
\newcommand{\bX}{\boldsymbol{X}}
\newcommand{\bY}{\boldsymbol{Y}}
\newcommand{\bM}{\boldsymbol{M}}
\newcommand{\bE}{\boldsymbol{E}}
\newcommand{\bA}{\boldsymbol{A}}
\newcommand{\bB}{\boldsymbol{B}}
\newcommand{\bU}{\mathbf{U}}
\newcommand{\bV}{\mathbf{V}}
\newcommand{\bZ}{\mathbf{Z}}
\newcommand{\bZo}[1]{\mathbf{Z}_{#1,out}}
\newcommand{\mult}{\,\mathsf{x}\,}
\newcommand{\E}{{\rm I}\kern-0.18em{\rm E}}
\definecolor{orange_fig}{RGB}{233,164, 108}
\definecolor{blue_fig}{RGB}{161,193,228}
\definecolor{green_fig}{RGB}{176,206,149}
\newcommand{\cF}{\mathcal{F}}
\newcommand{\cE}{\mathcal{E}}
\newcommand{\cS}{\mathcal{S}}
\newcommand{\hf}{\widehat{f}}
\newcommand{\hx}{\widehat{x}}
\newcommand{\tx}{\widetilde{x}}
\newcommand{\hy}{\widehat{y}}
\newcommand{\recer}{{\sf EstErr}}
\newcommand{\fore}{{\sf ForErr}}
\newcommand{\pbeta}{{\beta}^*}
\newcommand{\bhX}{\widehat{\bX}}
\newcommand{\btX}{\widetilde{\bX}}
\newcommand{\btdU}{\widetilde{\bU}}
\newcommand{\btdV}{\widetilde{\bV}}
\newcommand{\btdSigma}{\widetilde{\boldsymbol{\Sigma}}}
\newcommand{\btdUo}{\btdU_{out}}
\newcommand{\btdVo}{\btdV_{out}}
\newcommand{\btdSigmao}{\btdSigma_{out}}
\newcommand{\bhU}{\widehat{\bU}}
\newcommand{\bhV}{\widehat{\bV}}
\newcommand{\bhSigma}{\widehat{\bSigma}}
\newcommand{\bUo}{\bU_{out}}
\newcommand{\bVo}{\bV_{out}}
\newcommand{\bSigmao}{{\bSigma}_{out}}
\newcommand{\bhM}{\widehat{\bM}}
\newcommand{\bhZ}{\widehat{\bZ}}
\newcommand{\bhZo}[1]{\bhZ_{#1, out}}
\newcommand{\bhZp}{\widehat{\bZ}^\prime}
\newcommand{\bSigma}{\boldsymbol{\Sigma}}
\newcommand{\Pb}{\mathbb{P}}
\newcommand{\Rb}{\mathbb{R}}
\newcommand{\Reals}{\mathbb{R}}
\DeclarePairedDelimiter{\norm}{\lVert}{\rVert}
\DeclarePairedDelimiter{\abs}{\lvert}{\rvert}
\newcommand{\sbeta}{\widehat{\beta}}
\DeclareMathOperator{\argmin}{argmin}
\newtheorem{theorem}{Theorem}[section]
\newtheorem{cor}{Corollary}[section]
\newtheorem{lemma}{Lemma}[section]
\newtheorem{proposition}{Proposition}[section]
\newtheorem{definition}{Definition}[section]
\newtheorem{remark}{Remark}[section]
\newtheorem{assumption}{Assumption}[section]
\DeclareMathOperator{\ar}{AR}
\DeclareMathOperator{\subG}{subG}
\newcommand{\iid}{i.i.d.}
\newcommand{\AR}[1]{\ar(#1)}
\newcommand{\MA}[1]{\textrm{MA}(#1)}
\newcommand{\detrm}[1]{\operatorname{det}\left( #1\right)}
\newcommand{\tr}[1]{\mathrm{Tr}(#1)}
\newcommand{\enorm}[1]{\left\| #1 \right\|_2}
\newcommand{\fnorm}[1]{\| #1 \|_{F}}
\newcommand{\bv}{\mathbf{v}}
\newcommand{\bu}{\mathbf{u}}
\newcommand{\bx}{\mathbf{x}}
\newcommand{\by}{\mathbf{y}}
\newcommand{\bfX}{\mathbf{X}}
\newcommand{\bDelta}{\boldsymbol{\Delta}}
\newcommand{\prob}[1]{\mathbb{P}\left(#1\right)}
\newcommand{\expect}[1]{\mathbb{E} \left[ #1 \right]}
\newcommand{\sradius}[1]{\rho\left( #1 \right)}
\newcommand{\samossa}{SAMoSSA\xspace}
\newacronym{ar}{AR}{Autoregressive}
\newacronym{ssa}{SSA}{Singular Spectrum Analysis}
\newacronym{mssa}{mSSA}{multivariate Singular Spectrum Analysis}
\title{\samossa:  Multivariate Singular Spectrum Analysis with Stochastic Autoregressive Noise}
\author{%
  Abdullah Alomar \\
  MIT \\
  \texttt{aalomar@mit.edu} \\
  \And
  Munther Dahleh \\
  MIT \\
  \texttt{dahleh@mit.edu} \\
  \And
  Sean Mann \\
  MIT \\
  \texttt{seanmann@mit.edu} \\
  \And
  Devavrat Shah \\
  MIT \\
  \texttt{devavrat@mit.edu} \\
}
\begin{document}

\maketitle

\begin{abstract}

The well-established practice of time series analysis involves estimating deterministic, non-stationary {\em trend} and {\em seasonality} components followed by learning the residual stochastic, stationary components.
Recently, it has been shown that one can learn the deterministic non-stationary components accurately using \gls{mssa} in the absence of a correlated stationary component; meanwhile, in the absence of deterministic non-stationary components, the \gls{ar} stationary component can also be learnt readily, e.g. via Ordinary Least Squares (OLS). 
However, a theoretical underpinning of multi-stage learning algorithms involving both deterministic and stationary components has been absent in the literature despite its pervasiveness.
We resolve this open question by establishing desirable theoretical guarantees for a natural two-stage algorithm, where \gls{mssa} is first applied to estimate the non-stationary components despite the presence of a correlated stationary \gls{ar} component, which is subsequently learned from the residual time series. 
We provide a finite-sample forecasting consistency bound for the proposed algorithm, \samossa, which is data-driven and thus requires minimal parameter tuning.
To establish theoretical guarantees, we overcome three hurdles:
(i) we characterize the spectra of Page matrices of stable \gls{ar} processes, thus extending the analysis of \gls{mssa};
(ii) we extend the analysis of \gls{ar} process identification in the presence of arbitrary bounded perturbations;
(iii) we characterize the out-of-sample or forecasting error, as opposed to solely considering model identification.
Through representative empirical studies, we validate the superior performance of \samossa compared to existing baselines.
Notably, \samossa's ability to account for \gls{ar} noise structure yields improvements ranging from  {5\% to 37\%} across various benchmark datasets.

\end{abstract}

\vspace{-5mm}
\section{Introduction}
\vspace{-2mm}
\noindent{\bf Background.} Multivariate time series have often been  modeled as mixtures of stationary stochastic processes (e.g. \gls{ar} process) and deterministic non-stationary components (e.g. polynomial and harmonics).
To handle such mixtures, classical time series forecasting algorithms first attempt to estimate and then remove the non-stationary components. 
For example, before fitting an Autoregressive Moving-average (ARMA) model, polynomial trend and  seasonal components must be estimated and then removed from the time series. 
Once the non-stationary components have been eliminated\footnote{Note that  Autoregressive Integrated Moving-average (ARIMA) and Seasonal ARIMA models  use differencing in conjunction with unit root tests to remove \textit{stochastic} non-stationary components, and are not suited for the model we consider.}, the ARMA model is learned. 
This estimation procedure often relies on domain knowledge and/or fine-tuning, and theoretical analysis for such multiple-stage algorithms is limited in the literature.

Prior work presented a solution for estimating non-stationary deterministic components without domain knowledge or fine-tuning using \gls{mssa}~\cite{agarwal2022multivariate}.  This framework systematically models a wide class of (deterministic) non-stationary multivariate time series as linear recurrent formulae (LRF), encompassing a wide class of spatio-temporal factor models that includes harmonics, exponentials, and polynomials.
However,  \gls{mssa},  both algorithmically and theoretically, does not handle additive correlated stationary noise, an important noise structure in time series analysis. 
Indeed, all theoretical results of \gls{mssa} are established in the noiseless setting \cite{SSA_book} or under the assumption of independent and identically distributed (\iid)  noise \cite{agarwal2018model, agarwal2022multivariate}.

On the other hand, every stable stationary process can be approximated as a finite order \gls{ar} process \cite{anderson2011statistical, shumway2000time}. The classical OLS procedure has been shown to accurately learn finite order AR processes \cite{gonzalez2020finite}. However, the feasibility of identifying AR processes in the presence of a non-stationary deterministic component has not been addressed.

In summary, despite the pervasive practice of first estimating non-stationary deterministic components and then learning the stationary residual component, neither an elegant unified algorithm nor associated theoretical analyses have been put forward in the literature.

A step towards resolving these challenges is to answer the following questions: 
(i) Can \gls{mssa} consistently estimate non-stationary deterministic components in the presence of correlated stationary noise?
(ii) Can the AR model be accurately identified using the residual time series, after removing the non-stationary deterministic components estimated by \gls{mssa}, potentially with errors? 
%
%
(iii) Can the out-of-sample or forecasting error of such a multi-stage algorithm be analyzed?


In this paper, we resolve all three questions in the affirmative: we present \samossa, a two-stage procedure, where in the first stage, we apply \gls{mssa} on the observations to extract the non-stationary components; and in the second stage, the stationary \gls{ar} process is learned using the residuals.  
%

\vspace{2mm}

\noindent{\bf Setup.} 
%
We consider the discrete time setting where we observe a multivariate time series ${Y}(t) \coloneqq [ y_1(t), \dots, y_N(t)] \in \Rb^N$ at each time index $t \in [T] \coloneqq  \{1, \dots, T\}$ where $T\ge N$ \footnote{if $T<N$, we divide the $N$ time series into $\lceil N/T\rceil $ sets of time series where this condition will hold.}.
For each $n\in [N]$, and each timestep $t \in [T]$,  the observations take the form
\begin{align} \label{eq:observation_model}
    y_n(t) = f_n(t) + x_{n}(t),
\end{align}
where $f_n: \mathbb{Z^+} \to \mathbb{R}$ denotes the non-stationary deterministic component, and $x_{n}(t)$ is a stationary \gls{ar} noise process of order $p_n$ ($\AR{p_n}$). Specifically, each $x_n(t)$ is governed by 
\begin{equation} \label{eq:ardef}
    x_n(t) = \sum_{i=1}^{p_n} \alpha_{ni} x_n(t - i) + \eta_n(t),
\end{equation}
where $\eta_n(t)$ refers to the per-step noise, modeled as mean-zero \iid\ random variables, and $\alpha_{ni} \in \mathbb{R} ~\forall ~i\in[p_n]$ are the  parameters of the $n$-th  $\AR{p_n}$ process.

\vspace{2mm}
\noindent \textbf{Goal.} For each $n \in [N]$, our objective is threefold. The first is estimating $f_n(t)$ from the noisy observations $y_n(t)$ for all $t \in [T]$. The second is identifying the \gls{ar} process' parameters $\alpha_{ni} ~\forall ~i\in[p_n]$. The third is out-of-sample forecasting of $y_n(t)$ for $t > T$.

\vspace{1mm}

\noindent{\bf Contributions.}
%
The main contributions of this work is \samossa, an elegant two-stage algorithm, which manages to learn both non-stationary deterministic and stationary stochastic components of the underlying time series. A detailed summary of the contributions is as follows. 

\vspace{0.5mm}

\noindent \textit{(a) Estimating non-stationary component with \gls{mssa} under \gls{ar} noise. } The prior 
theoretical results for \gls{mssa} are established in the deterministic setting \cite{SSA_book} or under the assumption of 
\iid\ noise~\cite{agarwal2018model, agarwal2022multivariate}.
In Theorem \ref{thm:rec_error}, we establish that the mean squared estimation error scales as $\sim 1/\sqrt{NT}$ under \gls{ar} noise -- the same rate was achieved by  
\cite{agarwal2022multivariate} under \iid\ noise. 
Key to this result is establishing spectral properties of the ``Page'' matrix of \gls{ar} processes, which may be of interest in its own right (see Lemma \ref{lem:op_norm_bound}).

\vspace{0.5mm}

\noindent \textit{(b) Estimating \gls{ar} model parameters under bounded perturbation.} 
We bound the estimation error for the OLS estimator of \gls{ar} model parameters under \textit{arbitrary and bounded} observation noise, which could be of independent interest. Our results build upon the recent work of \cite{gonzalez2020finite} which derives similar results but {\em without} any observation noise. In that sense, ours can be viewed as a robust generalization of \cite{gonzalez2020finite}.
%

\vspace{0.5mm}

\noindent \textit{(c) Out-of-sample finite-sample consistency of \samossa.} 
We provide a finite-sample analysis for the forecasting error of \samossa -- such analysis of two-stage procedures in this setup is nascent despite its ubiquitous use in practice. 
Particularly, we establish in Theorem \ref{thm:forecast_error} that the out-of-sample forecasting error for \samossa for the $T$ time-steps ahead 
scales as $\sim \frac{1}{T} + \frac{1}{\sqrt{NT}}$ with high probability. 

\vspace{0.5mm}

\noindent \textit{(d) Empirical results.} We demonstrate superior performance of \samossa  using both real-world and synthetic datasets.
We show that accounting for the \gls{ar} noise structure, as implemented in \samossa, consistently enhances the forecasting performance compared to the baseline presented in~\cite{agarwal2022multivariate}. 
Specifically, these enhancements range from  {5\% to 37\%} across benchmark datasets.

\vspace{1mm}

\noindent{\bf Related Work.}
Time series analysis is a well-developed field. We focus on two pertinent topics.

\textit{\gls{mssa}.}
\gls{ssa}, and its multivariate extension \gls{mssa}, are well-studied methods for time series analysis which have been used heavily in  a wide array of problems including imputation \cite{agarwal2018model, agarwal2022multivariate}, forecasting \cite{hassani2013multivariate, agarwal2021tspdb}, and change point detection \cite{microsoftssa, alanqary2021change}.
Refer to  \cite{SSA_book, golyandina2018singular} for a good overview of the \gls{ssa} literature.
The classical \gls{ssa} method consists of the following steps: (1) construct a Hankel matrix of the time series of interest; (2) apply Singular Value Decomposition (SVD) on the constructed Hankel matrix, (3) group the singular triplets to separate the different components of the time series;  (4) learn a linear model for each component to forecast. 
\noindent \gls{mssa} is an extension of \gls{ssa} which handles multiple  time series simultaneously and attempts to exploit the shared structure between them~\cite{hassani2013multivariate}. 
The only difference between \gls{mssa} and \gls{ssa} is in the first step, where Hankel matrices of individual series are ``stacked'' together to create a single stacked Hankel matrix. 
Despite its empirical success,  \gls{mssa}'s classical analysis has mostly focused on identifying which time series have a low-rank Hankel representation, and defining sufficient {\em asymptotic} conditions for signal extraction, i.e., when the various time series components are separable.  
All of the classical analysis mostly focus on the deterministic case, where no observation noise is present.  

Recently, a variant of  \gls{mssa} was introduced for the tasks of forecasting and imputation~\cite{agarwal2022multivariate}. 
This variant, which we extend in this paper, uses the Page matrix representation instead of the Hankel matrix, which enables the authors to establish finite-sample bounds on the imputation and forecasting error. 
However, their work assumes the observation noise to be \iid, and does not accommodate correlated noise structure, which is often assumed in the time series literature~\cite{shumway2000time}.  
Our work extends the analysis in Agarwal et al. \cite{agarwal2022multivariate} to observations under \gls{ar} noise structure. We also extend the analysis of the forecasting error by studying how well we can learn (and forecast) the \gls{ar} noise process with perturbed observations. 

\vspace{1mm}

\textit{Estimating \gls{ar} parameters.}  
\gls{ar} processes are ubiquitous and of interest in many fields, including time series analysis, control theory, and machine learning.
In these fields, it is often the goal to estimate the parameters of an \gls{ar} process from a sample trajectory. 
Estimation is often carried out through  OLS, which is asymptotically optimal~\cite{durbin1960estimation}. 
The asymptotic analysis of the OLS estimator is well established, and recently, given the newly developed statistical tools of  high dimensional probability, cf. ~\cite{wainwright2019high, vershynin2010introduction}, various recent works have tackled its finite-time analysis as well.
For example, several works established results for the finite-time identification for general first order vector \gls{ar} systems~\cite{jedra2020finite, jedra2019sample,  horia2018learning, sarkar2019near, jedra2022finite}.
Further, Gonz{\'a}lez et al.~\cite{gonzalez2020finite} provides a finite-time bound on the deviation of the OLS estimate for general $\AR{p}$ processes. 
To accommodate our setting, we extend the results of~\cite{gonzalez2020finite} in two ways. First, we extend the analysis to accommodate any sub-gaussian noise instead 
of assuming gaussianity; Second, and more importantly, we extend it to handle arbitrary bounded observation errors in the sampled trajectory, which can be of independent interest.

\vspace{-2mm}
\section{Model}  
\vspace{-2mm}
\label{sec:ts_model}


\noindent{\bf Deterministic Non-Stationary Component.} We adopt the spatio-temporal factor model of \cite{agarwal2022multivariate, alanqary2021change} described next.
The spatio-temporal factor model holds when two assumptions are satisfied: the first assumption concerns the spatial structure, 
i.e., the structure across the $N$ time series $f_1, \dots, f_N$; the second assumption pertains to the ``temporal'' structure. 
Before we describe the assumptions, we first define a key time series representation: the Page matrix.
\begin{definition}[Page Matrix]
\label{def:page_matrix}
Given a time series $f: \mathbb{Z^+} \to \mathbb{R}$, and an initial time index $t_0>0$, the Page matrix representation over the $T$ entries $f(t_0), \dots, f(t_0+T-1)$ with parameter $1 \leq L \leq T$ 
is given by the matrix $\mathbf{Z}(f,L, T, t_0) \in \mathbb{R}^{ L \times \lfloor T/L \rfloor}$ with $\mathbf{Z}(f,L, T, t_0)_{ij} = f(t_0 + i - 1 + (j-1)\times L)$ for $i \in [L], ~j \in  [\lfloor T/L \rfloor]$. 
\end{definition}
In words, to construct the $L \times \lfloor T/L \rfloor$  Page matrix of the entries  $f(t_0), \dots, f(t_0+T-1)$,  partition them into $\lfloor T/L \rfloor$ segments of $L$  contiguous entries and concatenate these segments column-wise (see Figure \ref{fig:alg}). 
Now we introduce the main assumptions for $f_1, \dots, f_N$.
\begin{assumption}[Spatial structure]
\label{assm:spatial}
For each $n \in [N]$, $f_n(t) = \sum^{R}_{r=1} u_{nr} w_r(t)$
for some  $ u_{nr} \in \Rb$ and $w_r: \mathbb{Z^+} \to \mathbb{R}$. 
\end{assumption} 
Assumption \ref{assm:spatial} posits that each time series $f_n (\cdot)~\forall n\in [N]$ can be described as a linear combination of  $R$ ``fundamental'' time series. The second assumption relates to the temporal structure of the 
fundamental time series $ w_{r}(\cdot)~\forall r \in [R]$, which we describe next. 
\begin{assumption}[Temporal structure]
\label{assm:temporal}
For each $r \in [R]$ and for any $T > 1, ~ 1\leq L \leq T, t_0 > 0$, $\text{rank}(\bZ(w_{r}, L, T, t_0)) \leq G$. 
\end{assumption}
Assumption \ref{assm:temporal} posits that the Page matrix of each ``fundamental'' time series $w_r$ has finite rank.
While this imposed temporal structure  may seem restrictive at first, it has been shown that many standard functions that model 
time series dynamics satisfy this property \cite{agarwal2018model, agarwal2022multivariate}.
These include any finite sum of products of harmonics, low-degree polynomials, and exponential functions (refer to Proposition 2.1 in \cite{agarwal2022multivariate}).

\medskip 

\noindent{\bf Stochastic Stationary Component.} We adopt the following two assumptions about the \gls{ar} processes $x_n ~\forall n\in[N]$.
We first assume that these \gls{ar} processes are stationary (see Definition \ref{hdef:stationary} in Appendix \ref{app:defintions}). 
\begin{assumption}[Stationarity and distinct roots] \label{assm:stationary_process}
$x_n(t)$ is a stationary $\AR{p_n}$ process $\forall n \in [N]$. That is, let $\lambda_{ni} \in \mathbb{C}$, $i \in [p_n]$ denote the roots of $g_n(z) \coloneqq z^{p_n} - \sum_{i=1}^{p_n} \alpha_{ni} z^{p_n - i}$. Then, $|\lambda_{ni}|< 1$ $\forall  i \in [p_n]$. Further, $\forall n \in [N]$, the roots of $g_n(z)$ are distinct.
\end{assumption}
Further, we assume the per-step noise $\eta_n(t)$ are \iid~sub-gaussian random variables (see Definition \ref{hdef:subg} in Appendix \ref{app:defintions}).
\begin{assumption}[Sub-gaussian noise] \label{assm:subgausian_noise}
For $n \in [N], t \in [T]$, $\eta_{n}(t)$ are zero-mean \iid~sub-gaussian random variables with variance $\sigma^2$.
\end{assumption}

\medskip 

\noindent{\bf Model Implications.} In this section, we state two important implications of the model stated above. The first is that the stacked Page matrix defined as 
$$
    \bZ_f(L, T, t_0) = \begin{bmatrix} \bZ(f_1,L, T, t_0) & \bZ(f_2,L, T, t_0) & \dots&  \bZ(f_n,L, T, t_0)\end{bmatrix},
$$
is low-rank. Precisely, we recall the following Proposition stated in  \cite{agarwal2022multivariate}.
\begin{proposition} [Proposition 2 in \cite{agarwal2022multivariate} ]\label{prop:flattened_mean_low_rank_representation}
Let Assumptions \ref{assm:spatial} and  \ref{assm:temporal} hold. 
Then for any $L \leq \lfloor \sqrt{T} \rfloor$ with any $T \geq 1$, $t_0>0$, the rank of the Page matrix $\bZ(f_n,L, T, t_0)$ for $n \in [N]$ is at most $R \times G$. 
Further, the rank of the stacked Page matrix $\bZ_f(L, T, t_0)$ is $k \leq R \times G$. 
\end{proposition}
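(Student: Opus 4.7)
The plan is to reduce both claims to direct linearity arguments using subadditivity of rank and linear combinations of column spaces. First, I would rewrite the entries of the Page matrix $\bZ(f_n, L, T, t_0)$ by substituting Assumption~\ref{assm:spatial}: for any $i \in [L]$ and $j \in [\lfloor T/L \rfloor]$,
\begin{align*}
\bZ(f_n, L, T, t_0)_{ij} &= f_n\big(t_0 + i - 1 + (j-1)L\big) \\
&= \sum_{r=1}^R u_{nr}\, w_r\big(t_0 + i - 1 + (j-1)L\big) = \sum_{r=1}^R u_{nr}\, \bZ(w_r, L, T, t_0)_{ij}.
\end{align*}
Since this holds entrywise, we get the matrix identity $\bZ(f_n, L, T, t_0) = \sum_{r=1}^R u_{nr}\, \bZ(w_r, L, T, t_0)$. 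Applying subadditivity of rank together with Assumption~\ref{assm:temporal}, which guarantees $\text{rank}(\bZ(w_r, L, T, t_0)) \leq G$ for each $r$, immediately yields $\text{rank}(\bZ(f_n, L, T, t_0)) \leq R G$.

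For the stacked matrix, the key observation is that concatenating matrices column-wise gives a column space equal to the sum (as subspaces) of the individual column spaces. From the decomposition above, every column of $\bZ(f_n, L, T, t_0)$ lies in $\sum_{r=1}^R \text{col}(\bZ(w_r, L, T, t_0))$, because the coefficients $u_{nr}$ depend only on $n$ and not on the column index $j$. Since this containment holds uniformly across $n \in [N]$, the column space of $\bZ_f(L, T, t_0)$ is contained in the single fixed subspace $\mathcal{V} := \sum_{r=1}^R \text{col}(\bZ(w_r, L, T, t_0))$. By subadditivity of dimension under sums of subspaces, $\dim(\mathcal{V}) \leq \sum_{r=1}^R G = R G$, which gives $k = \text{rank}(\bZ_f(L, T, t_0)) \leq R G$.

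There is essentially no hard step here; the argument is bookkeeping on top of Assumptions~\ref{assm:spatial} and~\ref{assm:temporal}. The only subtlety worth noting is the role of the side constraint $L \leq \lfloor \sqrt{T} \rfloor$: the rank inequality above holds for any valid Page-matrix parameter, but this constraint ensures that $\lfloor T/L \rfloor \geq L$, so that the $L \times \lfloor T/L \rfloor$ matrices are tall enough (and likewise the stacked version has enough columns) for the bound $R G$ to be the binding one rather than the trivial dimensional bound $\min(L,\, N\lfloor T/L\rfloor)$. Thus no additional analytical work is required beyond carefully invoking linearity and the rank hypothesis on the fundamental time series.
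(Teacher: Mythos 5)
Your argument is correct: the entrywise substitution of Assumption \ref{assm:spatial} gives $\bZ(f_n,L,T,t_0)=\sum_{r=1}^R u_{nr}\,\bZ(w_r,L,T,t_0)$, rank subadditivity with Assumption \ref{assm:temporal} bounds each individual Page matrix by $RG$, and the observation that every column of every $\bZ(f_n,L,T,t_0)$ lies in the single subspace $\sum_{r=1}^R \mathrm{col}(\bZ(w_r,L,T,t_0))$ of dimension at most $RG$ handles the stacked matrix. This paper does not reprove the proposition -- it imports it from Proposition 2 of Agarwal et al.~\cite{agarwal2022multivariate} -- but your proof is exactly the standard argument used there, and you are also right that the condition $L \leq \lfloor \sqrt{T}\rfloor$ plays no role in the rank bound itself (though note that it makes the Page matrix wide, $L \le \lfloor T/L\rfloor$, rather than tall as your aside suggests).
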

Throughout, we will use the shorthand $\bZ_f \coloneqq \bZ_f(L, T, 1)$ and $\bZ_f(t_0) \coloneqq \bZ_f(L, T, t_0)$ \footnote{We will use the same shorthand for the stacked page matrices of $y$ and $x$: namely, $\bZ_y$ and $\bZ_x$.}.  
The second implication is that there exists a linear relationship between the last row of $\bZ_f $ and its top $L-1$ rows.  
The following proposition establishes this relationship. 
First, Let  $[\bZ_f]_{L\cdot}$ denote the $L$-th row of $\bZ_f$ and let $\bZ'_f\in \Rb^{(L-1)\times (N \lfloor T/L \rfloor)}$ denote the sub-matrix that consist of the top $L-1$ rows of $\bZ_f$. 
\begin{proposition}[Proposition 3 in \cite{agarwal2022multivariate} ] \label{prop:exact_low_rank_linear}
Let Assumptions \ref{assm:spatial} and  \ref{assm:temporal} hold. 
Then, for $L > RG$, there exists $\pbeta \in \Rb^{L-1}$ such that 
$
[\bZ_f]_{L\cdot}^\top = {\bZ_f'}^\top \pbeta.
$
Further, $\| \pbeta \|_0 \le R G$.
\end{proposition}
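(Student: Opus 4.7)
The plan is to reduce the claim to the existence of a finite-order linear recurrence satisfied by every $f_n$, which will directly supply the sparse coefficient vector $\pbeta$. As a starting point, I would invoke Proposition \ref{prop:flattened_mean_low_rank_representation} to get $\rank(\bZ_f) \le RG < L$, so that the $L$ rows of $\bZ_f$ are linearly dependent; any dependence $c^\top \bZ_f = 0$ with $c_L \ne 0$ immediately yields $\pbeta_i = -c_i/c_L$ for $i \in [L-1]$. What remains is to exhibit such a dependence whose support has size at most $RG$.

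To this end, I would first show that each fundamental series $w_r$ satisfies a linear recurrence of order at most $G$. This step is where I expect the main obstacle, since it truly exploits the fact that Assumption \ref{assm:temporal} constrains $\rank(\bZ(w_r, L, T, t_0)) \le G$ \emph{uniformly} in $L, T, t_0$. I would fix $L = G+1$ and very large $T$, observe that each $\bZ(w_r, G+1, T, t_0)$ has a nontrivial row null vector, and then argue---using the overlap between Page matrices with consecutive starting indices $t_0$ and $t_0+1$ together with the uniqueness of the minimal annihilating polynomial of a scalar sequence---that a single shift-invariant relation $w_r(t) = \sum_{i=1}^G \gamma^{(r)}_i w_r(t-i)$ can be chosen to hold for all sufficiently large $t$. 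This is the linear-recurrent-formula characterization used throughout the SSA literature (cf.\ \cite{agarwal2022multivariate, agarwal2018model}).

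Once the individual recurrences are in hand, I would set $p_r(z) = z^G - \sum_{i=1}^G \gamma^{(r)}_i z^{G-i}$ and $P(z) = \lcm(p_1,\ldots,p_R)$, which has degree $D \le \sum_r \deg p_r \le RG$. Because the space of sequences annihilated by $P$ (viewed as a shift operator) is linear and contains every $w_r$, it also contains every $f_n = \sum_{r} u_{nr} w_r$ by Assumption \ref{assm:spatial}; hence a common recurrence $f_n(t) = \sum_{k=1}^D \gamma_k f_n(t-k)$ holds for all $n \in [N]$ and $t > D$. Setting $\pbeta_{L-k} := \gamma_k$ for $k \in [D]$ and $\pbeta_j := 0$ otherwise then gives $\|\pbeta\|_0 \le D \le RG$.

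The final step will be to verify the matrix identity entry-by-entry. The entry of $[\bZ_f]_{L\cdot}^\top$ indexed by time series $n$ and column $j$ is $f_n(jL)$, while the matching entry of ${\bZ_f'}^\top \pbeta$ is $\sum_{i=1}^{L-1} \pbeta_i\, f_n(i + (j-1)L) = \sum_{k=1}^D \gamma_k\, f_n(jL - k)$. These coincide by the common recurrence, which applies because $jL \ge L > RG \ge D$.
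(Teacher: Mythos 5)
First, a point of calibration: the paper does not prove this proposition at all --- it is imported verbatim as ``Proposition 3 in \cite{agarwal2022multivariate}'' and used as a black box --- so there is no in-paper argument to compare yours against. Your overall route (reduce the matrix identity to a common finite-order linear recurrence for the $f_n$, obtained by taking the least common multiple of order-$\le G$ recurrences for the fundamental series $w_r$) is the standard one in the SSA literature and is essentially how the cited work treats such statements. Your bookkeeping in the last two paragraphs --- the lcm degree bound $D \le RG$, the placement of the $D$ nonzero coefficients in $\pbeta$, and the entrywise verification $f_n(jL) = \sum_{k=1}^D \gamma_k f_n(jL-k)$ --- is correct \emph{conditional on} the recurrence holding for all $t > D$.

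The genuine gap is exactly the step you flag: extracting from Assumption \ref{assm:temporal} a \emph{single} shift-invariant recurrence of order $\le G$ for $w_r$ that is valid for \emph{all} $t > G$, with a nonzero leading coefficient so that it can be written in the forward form $w_r(t)=\sum_{i=1}^{G}\gamma^{(r)}_i w_r(t-i)$. Your sketch (stitching row null vectors of Page matrices with consecutive $t_0$ via uniqueness of the minimal annihilating polynomial) does not close this: the Page matrices for different $t_0$ see disjoint, non-overlapping column sets, each row null space may differ across residue classes of $t_0$ modulo $L$, and uniqueness of a minimal annihilating polynomial requires the window vectors to span a space of dimension exactly $G$, which the rank condition does not guarantee. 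Worse, the step cannot be closed from Assumptions \ref{assm:spatial}--\ref{assm:temporal} alone: take $R=G=1$ and $w_1(1)=0$, $w_1(2)=1$, $w_1(t)=0$ for $t\ge 3$. Every Page matrix of $w_1$ has at most one nonzero entry, hence rank $\le 1$, so the assumptions hold; yet no relation $w_1(t)=\gamma\, w_1(t-1)$ holds at $t=2$, and indeed with $L=2$ the claimed identity would force $1=w_1(2)=\pbeta_1 w_1(1)=0$. In other words, your ``for all sufficiently large $t$'' is the best one can get in general, but the final verification needs the recurrence already at $t=L$, and that mismatch is fatal for degenerate sequences. The cited reference avoids this by working with explicit linear recurrent families (harmonics, polynomials, exponentials), for which the order-$G$ recurrence with nonzero leading coefficient holds for all $t$ by construction; their Proposition 2.1 goes in the direction LRF $\Rightarrow$ low Page rank, not the converse you need. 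To make your proof complete you would have to either import that LRF characterization as an additional hypothesis or add a nondegeneracy condition ruling out examples like the one above.
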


\vspace{-2mm}
\section{Algorithm}
\label{sec:algorithm}
The proposed algorithm provides two main functionalities. 
The first one is \textit{decomposing} the observations  $y_n(t)$ into an estimate of the non-stationary and stationary components for $t\leq T$. 
The second is forecasting $y_n(t) $ for $t >T$, which involves learning a forecasting model for both $f_n(t)$ and $x_n(t)$. 

\vspace{2mm}

\noindent \textbf{Univariate Case}.
For ease of exposition, we will first describe the algorithm for the univariate case ($N = 1$).  
The algorithm has the following parameters: $1<L\leq\sqrt{T}$,  and $1\le \hat{k} \le L$ (refer to Appendix \ref{app:exp_algorithms} for how to choose these parameters). 
For clarity, we will assume, without loss of generality, that $L$ is chosen such that  $T/L$ is an integer\footnote{ Otherwise, one can apply this algorithm to the two ranges $\{1, \dots, L \times \lfloor T/L \rfloor\}$ and $ \{(T \mod L) +1, \dots,  T\}$.}.
In the first step of the algorithm, we transform the observations  $y_1(t), t\in[T]$ into the $L \times T/L$ Page matrix $\bZ(y_1, L, T, 1)$. 
We will use the shorthand $\bZ_{y_1} \coloneqq \bZ(y_1, L, T, 1)$ henceforth. 

\vspace{1mm}

\noindent \textit{Decomposition}. 
We compute the SVD
$
\bZ_{y_1} = \sum_{\ell=1}^{L}  s_\ell u_\ell v_\ell^\top, 
$
where $s_1 \geq s_2 \dots \geq s_{L} \geq 0$ denote its ordered singular values, and $u_\ell \in \Reals^{L}, v_\ell \in \Reals^{ T/L }$ denote its left and right singular vectors, respectively, for $\ell \in [L]$. 
Then, we obtain  $\bhZ_{f_1}$ by retaining the top $\hat{k}$ singular components of $\bZ_{y_1}$ (i.e., by applying Hard Singular Value Thresholding (HSVT) with threshold $\hat{k}$). That is, 
%
%
$
\bhZ_{f_1}  = \sum_{\ell=1}^{\hat{k}} s_\ell u_\ell v_\ell^\top. 
$

We denote by $\hf_1(t)$ and $\hx_1(t)$ the {\em estimates} of  $f_1(t)$ and $x_1(t)$ respectively.
We read off the estimates $\hf_1(t)$ directly from the matrix $\bhZ_{f_1}$ using the entry that corresponds to $t \in [T]$. 
More precisely, for $t \in [T]$, $\hf_1(t)$ equals the entry of $\bhZ_{f_1}$ in row $(t-1 \mod L)+1$ and column $\lceil t/L\rceil$, while $\hx_1(t) = y_1(t) -\hf_1(t)$.

\vspace{1mm}

\noindent \textit{Forecasting.}
To forecast $y_1(t)$ for $t >T$, we produce a forecast for both $\hf_1(t)$ and $\hx_1(t)$. 
Both forecasts are performed through linear models ($\sbeta$ for $\hf_1(t)$ and $\widehat{\alpha}_1$ for $\hx_1(t)$.)
For $\hf_1(t)$, we first learn a linear model $\widehat{\beta}$ defined as
\begin{align}\label{eq:ols.mssa_main_u}
\widehat{\beta} = {\argmin}_{\beta \in \Reals^{L-1}} \quad \sum_{m=1}^{ T/L} (y_1(Lm) - \beta^\top \widehat{F}_m)^2, 
\end{align}
where  $\widehat{F}_m  = [\hf_1(L(m-1) +1), \dots, \hf_1(L\times m -1)]$ for $m \in [T/L]$ \footnote{ In the forecasting algorithm, the estimates $[\hf_1(L(m-1) +1), \dots, \hf_1(L\times m -1)]$ are obtained by applying HSVT on a sub-matrix of $\bZ_{y_1}$ which consists of its  first $L-1$ rows. This is done to establish the theoretical results as it helps us avoid dependencies in the noise between $y_1(Lm)$ and $\widehat{F}_m $ for $m \in [T/L]$.}
We then use $\sbeta$ and the $L-1$ lagged observations to produce  $\hf_1(t)$. That is
$
    \hf_1(t) = \widehat{\beta}^\top Y_1({t-1}),
$
where $Y_1(t-1) = [y_1(t -1), \dots, y_1(t-L)]$. 

\smallskip 

For $\hx_1(t)$,  we first estimate the parameters for the  $\AR{{p}_1}$ process. Specifically, define the ${p}_1$-dimensional vectors $\widehat X_1(t) \coloneqq [\hx_1(t),\dots, \hx_1(t-p_1+1)]$\footnote{Note that we assume knowledge of the true parameter $p_1$ (and $p_n$ in the multivariate case).}. Then, define the OLS estimate as,
\begin{align}\label{eq:ols.mssa.alpha}
\widehat{\alpha}_1 = {\argmin}_{\alpha \in \Reals^{{p}_1}} \quad \sum_{t={p}_1}^{ T-1} (\hx_1(t+1) - \alpha^\top \widehat{X}_1(t))^. 
\end{align}
Then, let $\widetilde{x}_1(t') = y_1(t') -\hf_1(t')$ for any $t'<t$, 
\footnote{Note the subtle difference between $\hx_1(t)$ and $\widetilde{x}_1(t)$ -- precisely, for $t > T$, $\hx_1(t)$ is an estimate of $x_1(t)$ before observing $y_1(t)$ (i.e., a forecast), whereas  $\widetilde{x}_1(t)$ is an estimate of $x_1(t)$ after observing $y_1(t)$. 
    }.
We then use $\widehat{\alpha}_1$ and the ${p}_1$ lagged entries of $\widetilde{x}_1(\cdot)$  to produce  $\hx_1(t)$. That is 
$\hx_1(t) = \widehat{\alpha}_1^\top \widetilde{X}_1(t-1),$ 
where $\widetilde{X}_1(t-1) = [ \widetilde{x}_1(t -1) , \dots,  \widetilde{x}_1(t -{p}_1) ]$. Finally, produce the forecast
    $
    \hy_1(t) = \hf_1(t) + \hx_1(t). 
    $

\vspace{2mm}

\noindent{\textbf{Multivariate Case.}}
The key change for the case $N>1$ is the use of the stacked Page matrix $\bZ_y$, which is the column-wise concatenation of the Page matrices induced by individual time series. Specifically, consider the stacked Page matrix $\bZ_y \in \Rb^{L \times NT/L}$ defined as 
\begin{align}
\label{eq:stacked_page_matrix}
    \bZ_y = \begin{bmatrix} \mathbf{Z}(y_1,L, T, 1) & \mathbf{Z}(y_2,L, T, 1) & \dots&  \mathbf{Z}(y_n,L, T, 1)\end{bmatrix}. 
\end{align}

\noindent \textit{Decomposition}. The  procedure for learning the non-stationary component $f_1, \dots, f_N$ for $t\leq T$ is similar to that of the univariate case. 
Specifically, we  perform HSVT with threshold $\hat{k}$ on $\bZ_y$ to produce $\bhZ_y$. 
Then, we read off the  {\em estimate} of  $f_n(t)$ from $\bhZ_y$.
Specifically, for $t \in [T]$, and $n\in [N]$, let $\bhZ_{f_n}$ refer to sub-matrix of $\bhZ_f$ induced by selecting only its $[(n-1) \times (T / L)  +1,  \dots, n \times T/L$] columns. 
Then for $t \in [T]$, $\hf_n(t)$ equals the entry of $\bhZ_{f_n}$ in row $(t-1 \mod L)+1$ and column $\lceil t/L\rceil$.
We then produce an  {\em estimate} for $x_n(t)$ as  $\hx_n(t) = y_n(t) -\hf_n(t)$.

\noindent \textit{Forecasting.}
To forecast $y_n(t)$ for $t >T$, as in the univariate case, we learn  a linear model $\widehat{\beta}$ defined as
\begin{align}\label{eq:ols.mssa_main}
\widehat{\beta} = {\argmin}_{\beta \in \Reals^{L-1}} \quad \sum_{m=1}^{ N \times T/L} (y_m - \beta^\top \widehat{F}_m)^2, 
\end{align}
where $y_m$ is the $m$-th component of $ [y_1(L), \ y_1(2\times L),  \dots, y_1(T), \ y_2(L), \dots, y_2(T), \dots, \allowbreak y_N(T)] \in \Reals^{N \times T/L}$, and $\widehat{F}_m \in \Reals^{L -1}$ corresponds to the vector formed by the entries of the first $L-1$ rows in the $m$th column of $\bhZ_f$ \footnote{
To be precise, $\bhZ_f$ here is the truncated SVD of a sub-matrix of $\bZ_y$ which consist of its  first $L-1$ rows.} 
for $m \in [N \times T/L]$. 
We then use $\sbeta$ to produce  $\hf_n(t) = \widehat{\beta}^\top Y_n({t-1}) $, where again $Y_n(t-1)$ is the vector of the $L - 1$  lags of $y_n(t-1)$. That is  $Y_n(t-1) = [y_n(t - 1)) \dots y_n(t-L)]$. 

Then, for each $n \in [N]$,  we estimate $\alpha_{ni} ~\forall i \in[{p}_n]$, the  parameters for the $n$-th $\AR{p_n}$ process. Let $\widehat{\alpha}_n $ denote the OLS estimate  defined as 
\begin{align} \label{eq:alpha_OLS}
   \widehat{\alpha}_n = {\argmin}_{\alpha \in \Reals^{{p}_n}} \quad \sum_{t={p}_n}^{ T-1} (\hx_n(t+1) - \alpha^\top \widehat{X}_n(t))^2. 
\end{align}
Then, produce a forecast for $x_n$ as 
$
\hx_n(t) = \widehat{\alpha}_n^\top \widetilde{X}_n(t-1),
$ 
where again $\widetilde{X}_n(t-1) = [ \widetilde{x}_n(t -1) , \dots,  \widetilde{x}_n(t -p_n) ]$. Finally, produce the forecast for $y_n$ as $\hy_n(t) = \hf_n(t) + \hx_n(t)$. 
For a visual depiction of the algorithm, refer to Figure \ref{fig:alg}.
\begin{figure}
    \centering
    \includegraphics[width=\linewidth]{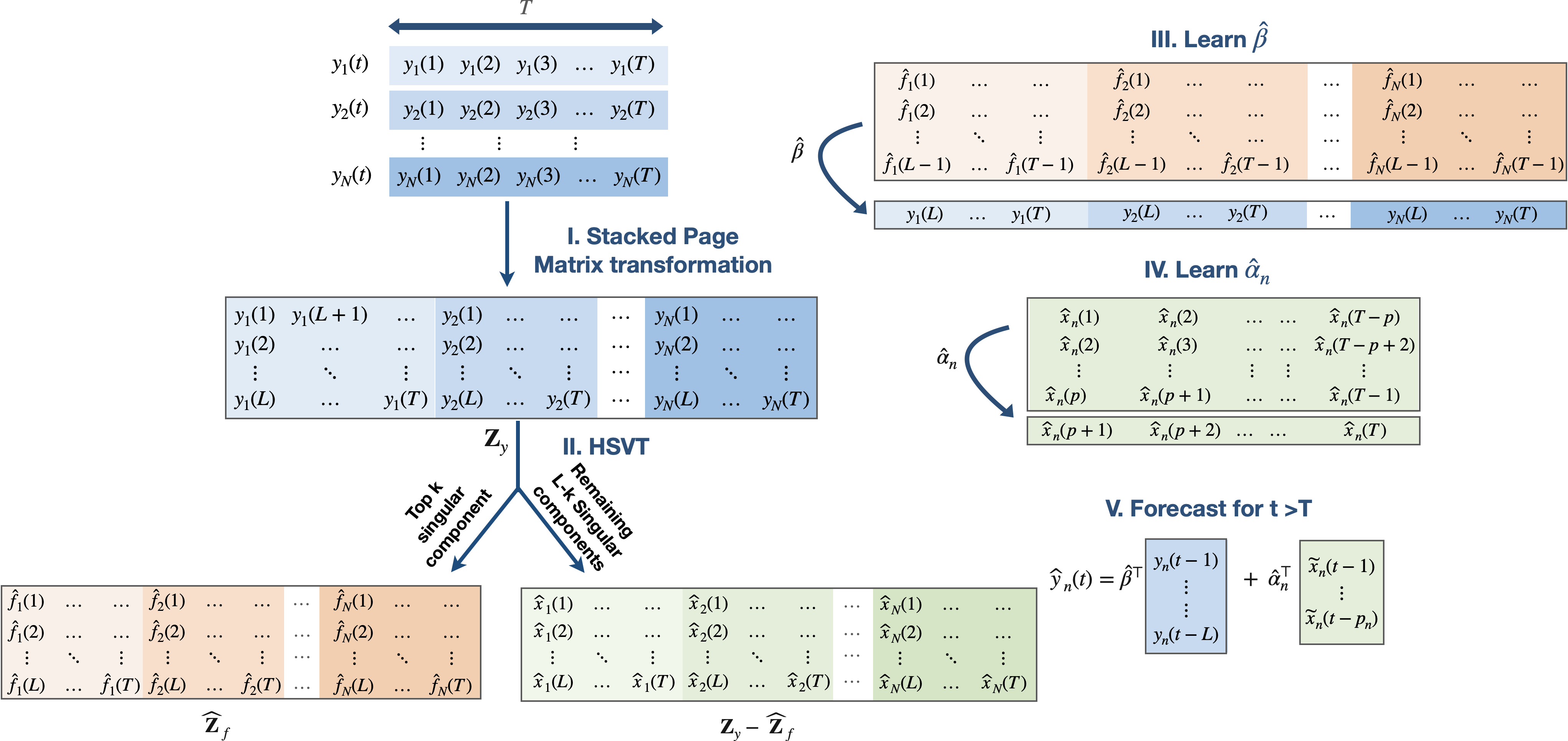}
    \caption{A visual depiction of \samossa. The algorithm  five steps are (i) transform the  time series into its stacked Page matrix representation; (ii) decompose {\color{blue_fig} time series} into  {\color{orange_fig} non-stationary  } and  {\color{green_fig} stationary }  components (iii) estimate $\sbeta$; (iv)  estimates $\hat{\alpha}_n ~ \forall n \in [N]$; (v)  produce the forecast $\hy_n(t)$ for $t > T$. }
    \label{fig:alg}
\end{figure}

\vspace{-2mm}
\section{Results}
\vspace{-2mm}
\label{sec:results}
In this section, we provide finite-sample high probability bounds on the following quantities:

\smallskip

\noindent \textbf{1. Estimation error of non-stationary component .} First, we give an upper bound for the estimation error of each one of $f_1(t), \dots, f_N(t)$ for $t \in [T]$. Specifically, we upper bound the following  metric
\begin{align}
       \recer(N, T, n) = \frac{1}{T}  \sum_{t=1}^T (\hf_n(t) - f_n(t))^2, 
\end{align}
where $\hf_n(t) ~n \in [N], t \in[T]$ are the estimates produced by the algorithm we proposed in Section \ref{sec:algorithm}.

\smallskip
\noindent \textbf{2. Identification of \gls{ar} parameters.} Second, we analyze the accuracy of the estimates  $\widehat{\alpha}_{n} ~\forall n \in [N]$ produced by the proposed algorithm. Specifically, we upper bound the following metrics
\begin{align}
       \enorm{\widehat{\alpha}_n - \alpha_n} \qquad \forall n \in [N], 
\end{align}
where $\alpha_n = [\alpha_{n1}, \dots, \alpha_{np_n}]$.

\smallskip
\noindent \textbf{3. Out-of-sample forecasting error.} Finally, we provide an upper bound for the forecasting error of $y_1(t), \dots, y_N(t)$ for $t  \in \{T+1, \dots, 2T\}$. Specifically, we upper bound the following  metric
\begin{align}
\fore(N, T) = \frac{1}{NT} \sum_{n=1}^N \sum_{t=T+1}^{2T} \left(\hy_n(t) - \expect{y_n(t) \mid y_n(t-1), \dots, y_n(1) }\right)^2, 
\end{align}
where $\hy_n(\cdot) ~\forall ~n \in [N], t \in \{T+1, \dots, 2T\}$ are the forecasts produced by the algorithm we propose in Section \ref{sec:algorithm}.
Before stating the main results, we state key additional assumptions. 

\begin{assumption}[Balanced spectra]\label{assm:spectra}
 Let $\bZ_f(t_0)$ denote the $L \times NT/L$ stacked Page matrix associated with all $N$ time series $f_1(\cdot), \dots, f_N(\cdot)$ for their $T$ consecutive entries starting from $t_0$. 
Let $ k= \text{rank}(\bZ_f(t_0))$, and let $\sigma_k(\bZ_f(t_0))$ denote the $k$-th singular value for  $\bZ_f(t_0)$. 
Then, for any $t_0>0$, 
$\bZ_f(t_0)$ is such that $\sigma_k(\bZ_f(t_0)) \geq \gamma \sqrt{NT}/\sqrt{k}$ for some absolute constant $\gamma > 0$.
\end{assumption} 
This assumption holds whenever the the non-zero singular values are ``well-balanced'', a standard assumption in the matrix/tensor estimation literature~\cite{agarwal2022multivariate,alanqary2021change}. Note that this assumption, as stated, ensures balanced spectra for the stacked Page matrix  of \textit{any} set of $T$ consecutive entries of $f_1(\cdot), \dots, f_N(\cdot)$. 

Finally, we will impose an additional necessary restriction on the complexity of the $N$ time series $f_1(t), \dots, f_N(t)$ for $t>T$  (as is done in \cite{agarwal2022multivariate, agarwal2020principal}). 
Let $\bZ'_f$ denote the $(L-1) \times (NT/L)$ matrix formed using the top $L-1$ rows of $\bZ_f$.
Further, for any $t_0 \in [T+1]$,  let $\bZ'_f(t_0)$ denote the $(L-1) \times (NT/L)$ matrix formed using the top $L-1$ rows of $\bZ_f(t_0)$.
For any matrix $\bM$, let $\text{colspan}(\bM)$  denote the subspace spanned by the its columns. 
We assume the following property.
\begin{assumption}[Subspace inclusion] \label{assumption:subspaceinclusion}

For any $t_0  \in [T+1]$, $\text{colspan}(\bZ'_f(t_0))  \subseteq \text{colspan}(\bZ'_f)$. 
\end{assumption} 
This assumption is necessary as it  requires the stacked Page matrix of the out-of-sample time series $\bZ'_f(t_0)$ to be only as ``rich'' as that of the  stacked Page matrix of the ``in-sample'' time series $\bZ'_f$.

\subsection{Main Results} \label{sec:main_results} 
First, recall that $R$ is defined in Assumption \ref{assm:spatial}, $G$  in Assumption \ref{assm:temporal},  $k$ is in Proposition \ref{prop:flattened_mean_low_rank_representation},   while $\gamma$ is defined in Assumption \ref{assm:spectra}. 
Further, recall that $\lambda_{ni}$ for $i \in [p_n]$ are the roots of the characteristic polynomial of the $n$-th \gls{ar}  process, as defined in Assumption \ref{assm:stationary_process}. 
Throughout, let  $c$ and $C$ be  absolute constants, $f_{\max}\coloneqq \max_{n, t \leq 2T} |f_n(t)|$,  $p \coloneqq \max_n p_n$, $\alpha_{\max} = \max_{n} \enorm{ \alpha_{n}}$,  and $C( f_{\max},  \gamma)$ denote a constant that depends only (polynomially) on model parameters $ f_{\max}$ and $\gamma$.
Last but not least, define the key quantity 
$$\sigma_x \coloneqq \frac{ c_\lambda \sigma  }{(1-\lambda_\star)}$$
where $\lambda_{\star} = \max_{i,n} |\lambda_{ni}|$
and 
$c_\lambda $ is a constant that depends only on $\lambda_{ni}$\footnote{ See Appendix \ref{app:ar_noise_proofs} for an explicit expression of $c_\lambda$.}.
Note that $\sigma^2_x$ is a key quantity that we use to bound important spectral properties of \gls{ar}  processes, as Lemma \ref{lem:op_norm_bound} establishes. 
\subsubsection{Estimation Error of Non-Stationary Component}

\begin{theorem}[Estimation error of $f$]\label{thm:rec_error}
Assume access to the observations $y_1(t), \cdots, y_N(t)$ for $t \in[T]$ as defined in \eqref{eq:observation_model}. Let  assumptions \ref{assm:spatial}, \ref{assm:temporal}, \ref{assm:stationary_process},  \ref{assm:subgausian_noise} and \ref{assm:spectra} hold. 
Let $L =\sqrt{N T}$ and $\hat{k} = k$, then, for any $n \in [N]$,  with probability of at least $ 1 - \frac{c}{(NT)^{10}}$ 
\begin{align}
\recer(N, T, n) & \leq 
C( f_{\max},  \gamma) \bigg( \frac{\sigma_x^4 G R  \log(NT)  }{\sqrt{NT}}  
\bigg). 
\end{align} 
\end{theorem}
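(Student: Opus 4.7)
The plan is to recast the estimation of the non-stationary component as low-rank matrix denoising on the stacked Page matrix and invoke Lemma \ref{lem:op_norm_bound} to absorb the correlated AR noise. Starting from the decomposition $\bZ_y = \bZ_f + \bZ_x$, where $\bZ_x$ is the stacked Page matrix of the AR noise processes, Proposition \ref{prop:flattened_mean_low_rank_representation} with $L = \sqrt{NT}$ guarantees $\rank(\bZ_f) \leq k \leq RG$. Because $\hat{k}=k$, the estimator $\bhZ_f$ is the best rank-$k$ Frobenius approximation of $\bZ_y$; equivalently $\bhZ_f = P\bZ_y$ where $P$ is the orthogonal projector onto the top-$k$ left singular subspace of $\bZ_y$.

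The first step is a standard HSVT perturbation bound: since $\bZ_f$ itself has rank at most $k$, Mirsky's theorem combined with the triangle inequality yields
\[
\fnorm{\bhZ_f - \bZ_f} \;\leq\; 2\sqrt{k}\,\opnorm{\bZ_x}.
\]
Summing per-series errors gives $\sum_{n=1}^N \recer(N,T,n) = \tfrac{1}{T}\fnorm{\bhZ_f - \bZ_f}^2 \leq \tfrac{4k}{T}\opnorm{\bZ_x}^2$. With $L = \sqrt{NT}$, the matrix $\bZ_x$ is approximately square of side $\sqrt{NT}$, and Lemma \ref{lem:op_norm_bound} delivers, with probability at least $1 - c(NT)^{-10}$, an operator-norm estimate $\opnorm{\bZ_x}^2 \lesssim \sigma_x^4 \sqrt{NT}\,\log^{c'}(NT)$. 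Substituting already yields the aggregate bound at the advertised rate.

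To upgrade to the per-$n$ bound stated in the theorem, I would decompose $\bhZ_{f_n} - \bZ_{f_n} = (P - P_f)\bZ_{f_n} + P\bZ_{x_n}$, where $P_f$ is the projector onto the top-$k$ left singular subspace of $\bZ_f$ (so $P_f \bZ_f = \bZ_f$). Wedin's $\sin\Theta$ theorem, combined with the spectral gap $\sigma_k(\bZ_f) \geq \gamma\sqrt{NT/k}$ from Assumption \ref{assm:spectra}, gives $\opnorm{P - P_f} \lesssim \sqrt{k}\,\opnorm{\bZ_x}/(\gamma\sqrt{NT})$, hence $\fnorm{(P - P_f)\bZ_{f_n}}^2 \lesssim f_{\max}^2 \, T \cdot k\,\opnorm{\bZ_x}^2/(\gamma^2 NT)$. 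For the noise term, $P_f$ is deterministic so one can control $\Ex[\fnorm{P_f\bZ_{x_n}}^2] \lesssim k\,(T/L)\,\sigma_x^2$ by tracing the covariance of the AR trajectory and then transfer to a high-probability bound via a Hanson--Wright-type inequality; the cross term $\fnorm{(P - P_f)\bZ_{x_n}}^2$ is handled identically to the first piece. Dividing the resulting column-wise bounds by $T$ gives the target $RG\sigma_x^4\log(NT)/\sqrt{NT}$ scaling uniformly in $n$.

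The main obstacle is Lemma \ref{lem:op_norm_bound} itself. The entries within each column of $\bZ_x$ are consecutive samples of a stationary AR$(p_n)$ process and are therefore strongly correlated, so classical random-matrix bounds for matrices with independent sub-gaussian entries do not directly apply. A natural approach is to express each column via the moving-average expansion in the i.i.d.\ innovations $\eta_n(t)$ and apply a matrix-Bernstein or Hanson--Wright concentration while tracking the geometric decay $1 - \lambda_\star$ of the AR coefficients. The stationary-variance inflation factor $1/(1-\lambda_\star)$ is precisely what enters $\sigma_x$, and squaring the resulting operator-norm estimate (together with the worst-case sub-gaussian proxy that already scales as $\sigma_x^2$ entry-wise under the AR structure) produces the $\sigma_x^4$ prefactor appearing in the final rate.
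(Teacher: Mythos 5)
Your proposal is correct and follows essentially the same route as the paper: the per-column decomposition of $\bhZ_{f_n}-\bZ_{f_n}$ into $(P-P_f)\bZ_{f_n}$, $(P-P_f)\bZ_{x_n}$, and $P_f\bZ_{x_n}$, controlled by Wedin's $\sin\Theta$ theorem together with Assumption~\ref{assm:spectra} and a sub-gaussian projection bound for the noise columns, is exactly the paper's Lemma~\ref{lemma:column_error} (stated there for right singular projectors of the transposed matrix), and the operator-norm control of the AR Page matrix via the $\MA{\infty}$ expansion and an $\varepsilon$-net is the paper's Lemma~\ref{lem:op_norm_bound}. One minor slip: that lemma gives $\opnorm{\bZ_x}^2 \lesssim \sigma_x^2\sqrt{NT}$, not $\sigma_x^4\sqrt{NT}$; the $\sigma_x^4$ in the final rate arises, as you correctly note at the end, from the product of $\opnorm{\bZ_x}^2/\sigma_k(\bZ_f)^2 \sim \sigma_x^2$ with the $\sigma_x^2$-scale column-norm terms.
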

This theorem implies that the mean squared error of estimating $f$ scales as $\tilde{O}\left(\frac{1}{\sqrt{NT}}\right)$\footnote{The $\tilde{O}(\cdot)$ notation is analogous to the standard $O(\cdot)$ while ignoring log dependencies.} with high probability. 
The proof of Theorem \ref{thm:rec_error} is in Appendix \ref{app:rec_proof}.

\subsubsection{Identification of AR  Processes}
\begin{theorem}[AR Identification]\label{thm:alpha_id}
Let the conditions of Theorem \ref{thm:rec_error} hold. 
Let $\widehat \alpha_n$ be as defined in \eqref{eq:alpha_OLS}, and $\alpha_n = [\alpha_{n1}, \dots, \alpha_{n{p_n}}]$ as defined in \eqref{eq:ardef}.  Then, 
for a sufficiently large T such that
$\frac{\log(T)}{\sqrt{T}} < \frac{C \sigma^2}{p \sigma_x^2 \alpha^2_{\max}} \left(\frac{ \lambda_{\min}(\Psi)}{\lambda_{\max} (\Gamma)}\right)$,
 and for any $n \in [N]$ where 
 $\recer(N, T, n) \leq \frac{\sigma^2 \lambda_{\min}(\Psi)}{6p}$,
we have with probability of at least $1- \frac{c}{T^{10}}$, 
\begin{align}
      \| \widehat{\alpha}_n - \alpha_n \|_2^2  &\leq   \frac{C p} {\lambda_{\min}(\Psi)}
      \left(
        \frac{p}{T}  \log \left( \frac{T\lambda_{\max}(\Psi) }{\lambda_{\min}(\Psi)  }\right) +   \frac{\sigma_x^2 \recer(N, T, n) \log(T)}{\sigma^2 \min\{1, \sigma^2 \lambda_{\min}(\Psi) \}} \right) . 
\end{align}


\end{theorem}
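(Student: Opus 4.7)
\textbf{Proof plan for Theorem \ref{thm:alpha_id}.}
The plan is to combine a perturbation-robust OLS analysis with the entrywise estimation guarantee of Theorem \ref{thm:rec_error}. Fix $n\in[N]$ and set $\epsilon_n(t)\coloneqq f_n(t)-\hf_n(t)$, so that $\sum_{t\le T}\epsilon_n(t)^2\le T\recer(N,T,n)$. Let $\mathbf{X}_n$ be the noise-free $(T-p_n)\times p_n$ AR design with rows $X_n(t)^\top=[x_n(t),\dots,x_n(t-p_n+1)]$, and let $\widehat{\mathbf{X}}_n=\mathbf{X}_n+\mathbf{E}_n$, where each entry of $\mathbf{E}_n$ is some $\epsilon_n(s)$ appearing at most $p_n$ times, so $\|\mathbf{E}_n\|_F^2\le p_n T\recer(N,T,n)$. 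Unrolling $\hx_n(t+1)=\alpha_n^\top X_n(t)+\eta_n(t+1)+\epsilon_n(t+1)$, the normal equations give
\[
\widehat{\alpha}_n-\alpha_n=\bigl(\widehat{\mathbf{X}}_n^\top\widehat{\mathbf{X}}_n\bigr)^{-1}\widehat{\mathbf{X}}_n^\top\bigl(\boldsymbol{\eta}_n+\boldsymbol{\epsilon}_n^{(+)}-\mathbf{E}_n\alpha_n\bigr),
\]
where $\boldsymbol{\eta}_n$ and $\boldsymbol{\epsilon}_n^{(+)}$ stack the entries $\eta_n(t+1)$ and $\epsilon_n(t+1)$ respectively.

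I would first lower bound the Gram matrix. The sub-gaussian extension of the finite-sample bound of \cite{gonzalez2020finite}, which is precisely contribution (b) of the paper, yields $\lambda_{\min}(\mathbf{X}_n^\top\mathbf{X}_n)\ge cT\lambda_{\min}(\Psi)$ with high probability once $T$ exceeds the stated threshold. Weyl's inequality together with $\|\mathbf{X}_n\|_{\op}^2\lesssim T\lambda_{\max}(\Gamma)\lesssim T\sigma_x^2$ (the second inequality coming from Lemma \ref{lem:op_norm_bound}) and $\|\mathbf{E}_n\|_F^2\le pT\recer$ then yields $\lambda_{\min}(\widehat{\mathbf{X}}_n^\top\widehat{\mathbf{X}}_n)\gtrsim T\lambda_{\min}(\Psi)$; the hypothesis $\recer(N,T,n)\le \sigma^2\lambda_{\min}(\Psi)/(6p)$ is exactly what makes the cross-terms from $\mathbf{E}_n$ subdominant.

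For the numerator I would split $\widehat{\mathbf{X}}_n^\top(\boldsymbol{\eta}_n+\boldsymbol{\epsilon}_n^{(+)}-\mathbf{E}_n\alpha_n)$ into three pieces. The stochastic piece $\mathbf{X}_n^\top\boldsymbol{\eta}_n$ is controlled by a self-normalised sub-gaussian martingale inequality adapted to the AR covariate structure, giving $\|\mathbf{X}_n^\top\boldsymbol{\eta}_n\|_2^2\lesssim pT\sigma^2\lambda_{\max}(\Psi)\log T$, while the cross-term $\mathbf{E}_n^\top\boldsymbol{\eta}_n$ is handled by Cauchy--Schwarz and sub-gaussian concentration of $\|\boldsymbol{\eta}_n\|_2$. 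The two deterministic-perturbation pieces are bounded through $\|\widehat{\mathbf{X}}_n^\top\boldsymbol{\epsilon}_n^{(+)}\|_2\le \|\widehat{\mathbf{X}}_n\|_{\op}\sqrt{T\recer}$ and $\|\widehat{\mathbf{X}}_n^\top\mathbf{E}_n\alpha_n\|_2\le \|\widehat{\mathbf{X}}_n\|_{\op}\|\mathbf{E}_n\|_F\alpha_{\max}$. Squaring the sum and dividing by $\lambda_{\min}(\widehat{\mathbf{X}}_n^\top\widehat{\mathbf{X}}_n)^2\gtrsim T^2\lambda_{\min}(\Psi)^2$ produces the $(p/T)\log(T\lambda_{\max}(\Psi)/\lambda_{\min}(\Psi))$ contribution from the stochastic part and the $\sigma_x^2\recer\log T/(\sigma^2\min\{1,\sigma^2\lambda_{\min}(\Psi)\})$ contribution from the perturbation parts, matching the stated bound after collecting constants into $C/\lambda_{\min}(\Psi)$.

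The main obstacle is the perturbation-aware OLS bound itself. Because $\mathbf{E}_n$ is produced by first-stage HSVT applied to the same observations that generate $\mathbf{X}_n$, the two matrices are statistically coupled, ruling out any argument that treats $\mathbf{E}_n$ as exogenous noise. The way through is that Theorem \ref{thm:rec_error} controls the entire $\ell_2$ energy of $\epsilon_n(\cdot)$ with high probability, uniformly over realisations, so a worst-case Cauchy--Schwarz bound on terms such as $\mathbf{X}_n^\top\mathbf{E}_n$ is tight enough and no independence between $\mathbf{E}_n$ and $\mathbf{X}_n$ is ever invoked. A secondary obstacle is the sub-gaussian extension of the self-normalised concentration in \cite{gonzalez2020finite}, which is precisely what contribution (b) of the paper is dedicated to establishing.
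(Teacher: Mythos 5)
Your proposal is correct in substance but follows a genuinely different route from the paper. You expand the normal equations in one step, writing $\widehat{\alpha}_n-\alpha_n=(\widehat{\mathbf{X}}_n^\top\widehat{\mathbf{X}}_n)^{-1}\widehat{\mathbf{X}}_n^\top(\boldsymbol{\eta}_n+\boldsymbol{\epsilon}_n^{(+)}-\mathbf{E}_n\alpha_n)$ and then bounding the stochastic, cross, and deterministic pieces separately. The paper instead introduces the oracle estimator $\bar{\alpha}_n=(\bX_n^\top\bX_n)^{-1}\bX_n^\top Y_n$ computed on the \emph{unperturbed} AR trajectory, splits $\|\widehat{\alpha}_n-\alpha_n\|^2\le 2\|\widehat{\alpha}_n-\bar{\alpha}_n\|^2+2\|\bar{\alpha}_n-\alpha_n\|^2$, invokes Corollary \ref{cor:gap} (the clean sub-gaussian extension of \cite{gonzalez2020finite}) as a black box for the second term, and controls the first term via the Moore--Penrose perturbation inequality $\enorm{\bX^\dag-\bhX^\dag}\le 2\max\{\enorm{\bX^\dag}^2,\enorm{\bhX^\dag}^2\}\enorm{\bDelta}$ (Lemma \ref{lemma:op_diff_pinv}) together with $\enorm{\bDelta}_F^2\le pT\recer$ and $\enorm{Y}\lesssim\sigma_x\sqrt{T\log T}$. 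The shared load-bearing ingredients are identical: the Gram-matrix lower bound $\lambda_{\min}(\bX_n^\top\bX_n)\gtrsim T\sigma^2\lambda_{\min}(\Psi)$ from Corollary \ref{cor:spectrum}, the hypothesis $\recer\le\sigma^2\lambda_{\min}(\Psi)/(6p)$ keeping the design perturbation subdominant under Weyl, and worst-case Cauchy--Schwarz for all terms involving $\mathbf{E}_n$ (you correctly identify that the statistical coupling between $\mathbf{E}_n$ and $\bX_n$ forbids anything else). What the paper's route buys is modularity: the self-normalised martingale argument lives entirely inside the clean-data theorem and never has to be re-derived in the perturbed setting. What your route buys is a single unified error decomposition with no pseudo-inverse perturbation lemma. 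Two small cautions if you execute your plan: (i) bounding $\|\bX_n^\top\boldsymbol{\eta}_n\|_2^2\lesssim pT\sigma^2\lambda_{\max}(\Psi)\log T$ and then dividing by $\lambda_{\min}(\widehat{\mathbf{X}}_n^\top\widehat{\mathbf{X}}_n)^2$ yields a $\lambda_{\max}(\Psi)/\lambda_{\min}(\Psi)^2$ prefactor rather than the stated $\frac{1}{\lambda_{\min}(\Psi)}\log(T\lambda_{\max}(\Psi)/\lambda_{\min}(\Psi))$; to match the theorem you must keep the self-normalisation by $(\bX_n^\top\bX_n)^{-1/2}$ as in Theorem \ref{thm:error} rather than passing to eigenvalue bounds prematurely; and (ii) Lemma \ref{lem:op_norm_bound} concerns the stacked Page matrix, not the $(T-p)\times p$ lag-design matrix, so the bound $\|\bX_n\|_{\op}^2\lesssim T\sigma_x^2$ should instead be sourced from Lemma \ref{lem:subg_vector} (sub-gaussian rows) or from the upper half of Corollary \ref{cor:spectrum}.
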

Recall that $\widehat{\alpha}_n$ is estimated using $\widehat{x}_n(\cdot)$, a perturbed version of the true \gls{ar} process $x_n(\cdot)$. This perturbation comes from the estimation error of $x_n(\cdot)$, which is a consequence of  $\recer(N, T, n)$. Hence, it is not surprising that  $\recer(N, T,n)$ shows up in the upper bound.
Indeed, the upper bound we provide here consists of two terms: the first, which scales as $\tilde{O}\left(\frac{1}{T}\right)$ is the bound one would get with access to the true \gls{ar} process $x_n(\cdot)$, as shown in \cite{gonzalez2020finite}.
The second term characterizes the contribution of the perturbation caused by the estimation error, and it scales as ${O}\left(\recer(N, T,n)\right)$, which we show is $\tilde{O}\left(\frac{1}{\sqrt{NT}}\right)$ with high probability in Theorem \ref{thm:rec_error}.
Also, note that the theorem holds when the estimation error is sufficiently small ($\recer(N, T, n) \leq \frac{\sigma^2 \lambda_{\min}(\Psi)}{6p}$). This condition ensures that the perturbation arising from the estimation error remains below the lower bound for the minimum eigenvalue of the sample covariance matrix associated with the autoregressive processes.
Note that $\lambda_{\max}(\Psi), \lambda_{\min}(\Psi)$ and $\lambda_{\max}(\Gamma)$ are quantities that relate to the parameters of the $\AR{p}$ processes and their ``controllability Gramian'' as we detail in Appendix \ref{sec:notation}. 
The proof of Theorem \ref{thm:alpha_id} is in Appendix \ref{app:alpha_id_proof}.

\vspace{-2mm}

\subsubsection{Out-of-sample Forecasting Error}
\vspace{-1mm}

We first establish an upper bound on the error of our estimate $\sbeta$.

\begin{theorem}[Model Identification ($\beta^*$)]\label{thm:beta_id}
Let the conditions of Theorem \ref{thm:rec_error} hold. 
Let $\sbeta$ be defined as in \eqref{eq:ols.mssa_main}, and $\pbeta$ as defined in Proposition \ref{prop:exact_low_rank_linear}. Then, with probability of at least $1 - \frac{c}{(NT)^{10}}$ 
\begin{align}
  \| \sbeta - \pbeta  \|_2^2  
  &\leq 
C( f_{\max},  \gamma)  
    \bigg(  
        \frac{\sigma^4_xG^2R^2 \log(NT)} {\sqrt{NT}} 
    \bigg) 
     \max\{\|\pbeta\|_1^2, 1\}.
\end{align} 
\end{theorem}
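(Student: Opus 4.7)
The plan is to express $\sbeta$ through the Moore–Penrose pseudoinverse of the design matrix and isolate the contribution of the HSVT perturbation against the clean least-squares template supplied by Proposition \ref{prop:exact_low_rank_linear}. Let $\hat{A} \in \Reals^{(NT/L)\times(L-1)}$ denote the design matrix whose $m$-th row is $\widehat{F}_m^\top$, and let $\bar{A}$ be the analogous noiseless matrix formed from the first $L-1$ rows of $\bZ_f$. By Proposition \ref{prop:exact_low_rank_linear} the response decomposes as $y = \bar{A}\pbeta + \epsilon$, where $\epsilon_m = x_n(Lj)$ for the $(n,j)$ indexing $m$. Setting $\Delta := \hat{A} - \bar{A}$ and writing $\sbeta = \hat{A}^+ y$, I split
\[
\sbeta - \pbeta \;=\; (\hat{A}^+ \hat{A} - I)\,\pbeta \;-\; \hat{A}^+ \Delta\,\pbeta \;+\; \hat{A}^+ \epsilon,
\]
to be referred to below as the subspace-mismatch, HSVT-perturbation, and target-noise terms.

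The key spectral ingredient is a lower bound on $\sigma_k(\hat{A})$. Assumption \ref{assm:spectra} gives $\sigma_k(\bar{A}) \gtrsim \gamma\sqrt{NT/k}$ (dropping the $L$-th row changes $\sigma_k$ by a lower-order term), a deterministic HSVT/Weyl estimate yields $\|\Delta\|_{\op} \lesssim \|\bZ'_x\|_{\op}$, and Lemma \ref{lem:op_norm_bound} controls this AR spectral norm. Together, $\sigma_k(\hat{A}) \geq \sigma_k(\bar{A})/2$ with high probability, hence $\|\hat{A}^+\|_{\op} \lesssim \sqrt{k/NT}/\gamma$. For the HSVT-perturbation term I use $\|\Delta\pbeta\|_2 \leq \|\pbeta\|_1 \max_j \|\Delta_{:,j}\|_2 \leq \|\pbeta\|_1 \|\Delta\|_F$ and obtain $\|\Delta\|_F^2 \lesssim \sigma_x^4 GR\,\log(NT)\sqrt{NT}$ by summing the entrywise MSE of Theorem \ref{thm:rec_error} over all $t,n$; with $\|\hat{A}^+\|_{\op}^2 \lesssim k/(NT)$ and $k \leq RG$, this gives the dominant $\sigma_x^4 G^2R^2 \log(NT)/\sqrt{NT}$ rate multiplied by $\|\pbeta\|_1^2$. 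The subspace-mismatch term is controlled by choosing $\pbeta$ as the minimum-norm representative (so that $\pbeta \in \text{rowspan}(\bar{A})$, using Assumption \ref{assumption:subspaceinclusion}); Wedin's $\sin\Theta$ theorem then bounds it by $(\|\Delta\|_{\op}/\sigma_k(\bar{A}))\|\pbeta\|_2$, which is absorbed into the stated rate. The target-noise term yields $\|\hat{A}^+\epsilon\|_2^2 \lesssim (k/NT)\sigma_x^2 \log(NT)$ via sub-Gaussian concentration on the $NT/L$ entries of $\epsilon$, a summand captured by the $\max\{\|\pbeta\|_1^2, 1\}$ factor.

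The principal obstacle is the statistical dependence between $\epsilon$ (the $L$-th row AR values) and $\hat{A}$ (the HSVT of the top $L-1$ rows of $\bZ_y$): within each column the AR noise is temporally correlated, so even though the top $L-1$ rows do not contain $\epsilon$ entrywise, they are not independent of it. I handle this by decomposing $\epsilon_m = \alpha_n^\top \widetilde{X}_n(Lj-1) + \eta_n(Lj)$: the first piece is a linear functional of the preceding $p_n$ lags already embedded in $\hat{A}$, so it merges into the HSVT-perturbation bound up to an $\alpha_{\max}$ prefactor, while $\eta_n(Lj)$ is a fresh innovation independent of $\hat{A}$ and thus amenable to direct sub-Gaussian concentration. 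This decoupling is exactly what motivates forming $\hat{A}$ from only the first $L-1$ rows of $\bZ_y$, as flagged in the algorithm's footnote; the remainder of the proof is a careful assembly of Weyl's inequality, Wedin's $\sin\Theta$ theorem, Hanson–Wright, and Lemma \ref{lem:op_norm_bound}, together with the previously established Theorem \ref{thm:rec_error}.
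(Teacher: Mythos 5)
Your decomposition $\sbeta - \pbeta = (\hat{A}^{+}\hat{A}-I)\pbeta - \hat{A}^{+}\Delta\,\pbeta + \hat{A}^{+}\epsilon$ is, up to notation, the same split the paper performs: the first term is exactly $-\btdU^{\perp}(\btdU^{\perp})^\top\pbeta$ and is handled by Wedin's theorem, while the other two constitute the in-space component, and the supporting ingredients (Weyl, Wedin, Lemma \ref{lem:op_norm_bound} for $\|\bZ'_x\|_2$, Assumption \ref{assm:spectra} for $\sigma_k$, Theorem \ref{thm:rec_error} for the HSVT error) are the ones the paper invokes. Your route through $\|\Delta\|_F^2 \lesssim \sigma_x^4 GR \log(NT)\sqrt{NT}$ is looser than the paper's $(2,\infty)$-norm bound but is compensated by $\|\hat{A}^{+}\|_2^2 \lesssim k/(\gamma^2 NT)$, so the perturbation term still lands on the stated rate. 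One small misattribution: the identity $\pbeta = \bU\bU^\top \pbeta$ for the in-sample matrix follows from taking $\pbeta$ to be the minimum-norm solution of Proposition \ref{prop:exact_low_rank_linear}, not from Assumption \ref{assumption:subspaceinclusion}, which concerns out-of-sample Page matrices and is only needed for Theorem \ref{thm:forecast_error}.

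The genuine gap is in your treatment of the target-noise term: the claim that $\eta_n(Lj)$ is ``a fresh innovation independent of $\hat{A}$'' is false. The matrix $\hat{A}$ is built from the first $L-1$ rows of $\bZ_y$ across \emph{all} columns, and for every $t > Lj$ the value $x_n(t)$ (hence $y_n(t)$, which sits in a later column of the Page matrix) depends on $\eta_n(Lj)$ through the AR recursion; the footnote's device of dropping the $L$-th row removes the entrywise overlap between $y_n(Lm)$ and $\widehat{F}_m$ but does not decorrelate the innovation at time $Lj$ from the design matrix. The paper never invokes such independence: it reduces the noise term to $\|[\bZ_x]_{L\cdot}^\top \btdV\|_2$, passes via Wedin to the \emph{deterministic} frame $\bV$ of right singular vectors of $\bZ'_f$ (paying $\|\bZ'_x\|_2\|[\bZ_x]_{L\cdot}\|_2/\sigma_k(\bZ'_f)$), and only then applies sub-Gaussian concentration of the row $[\bZ_x]_{L\cdot}$ against the fixed $k$-dimensional subspace. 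As written your step would fail, though the damage is contained: the crude bound $\|\hat{A}^{+}\epsilon\|_2^2 \le \|\hat{A}^{+}\|_2^2\,\|\epsilon\|_2^2 \lesssim (k/(\gamma^2 NT))\cdot q\,\sigma_x^2 = k\sigma_x^2/(\gamma^2\sqrt{NT})$, which needs no independence at all, already fits within the theorem's rate, so the decomposition of $\epsilon_m$ into predictable part plus innovation (and its attendant $\alpha_{\max}$ prefactor, which the constant $C(f_{\max},\gamma)$ in the statement does not accommodate) is both unnecessary and the one unsound point of the argument.
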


This shows that the error of estimating $\pbeta$ (in squared euclidean norm) scales as $\tilde{O}\left(\frac{1}{\sqrt{NT}}\right)$. 

\begin{theorem}
\label{thm:forecast_error}
Let the conditions of Theorem \ref{thm:alpha_id} and Assumption \ref{assumption:subspaceinclusion} hold.
Then, with probability of at least $1 - \frac{c}{T^{10}}$ 
\begin{align*}
   \fore(N, T)  &\leq
    \tilde{C} G^3 R^3 p^2  \sigma_x^{6}  \Bigg(  \frac{p \sigma^2 \log \left(T\right)  }{T}    
    +  \frac{GR \sigma_x^{6}}{\min\left\{\sigma^2, \sigma^4 \right\} }  \frac{\log(NT)^2} {\sqrt{NT}}   \Bigg) , 
\end{align*}
where $c$ is an absolute constant, and $\tilde{C}$ denotes a constant that depends only (polynomially) on $ f_{\max},  \gamma , \lambda_{\min}(\Psi), \lambda_{\max}(\Psi), \pbeta$  and $\alpha_{\max}$.
\end{theorem}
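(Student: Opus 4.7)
The plan is to decompose the one-step-ahead forecast residual $\hy_n(t) - \Ex[y_n(t) \mid y_n(t-1),\ldots,y_n(1)]$ into four contributions, each of which is controlled by one of Theorems~\ref{thm:rec_error}, \ref{thm:alpha_id}, or~\ref{thm:beta_id}, combined with standard sub-gaussian concentration for the AR noise process. By Proposition~\ref{prop:exact_low_rank_linear} extended to out-of-sample indices via Assumption~\ref{assumption:subspaceinclusion}, $f_n(t) = (\pbeta)^\top F_n(t-1)$ where $F_n(t-1) := [f_n(t-1),\ldots,f_n(t-L+1)]$, and since $f_n$ is deterministic the conditional expectation equals $(\pbeta)^\top F_n(t-1) + \alpha_n^\top X_n^\alpha(t-1)$ with $X_n^\alpha(t-1) := [x_n(t-1),\ldots,x_n(t-p_n)]$. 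Substituting $Y_n(s) = F_n(s) + X_n^\beta(s)$ and $\tx_n(s) = x_n(s) - \delta_n(s)$ with $\delta_n(s) := \hf_n(s) - f_n(s)$, the residual decomposes as
\begin{equation*}
  \hy_n(t) - \Ex[y_n(t) \mid \cdot] = (\sbeta - \pbeta)^\top Y_n(t-1) + (\pbeta)^\top X_n^\beta(t-1) + (\widehat{\alpha}_n - \alpha_n)^\top X_n^\alpha(t-1) - \widehat{\alpha}_n^\top \Delta_n^\alpha(t-1),
\end{equation*}
where $\Delta_n^\alpha(t-1) := [\delta_n(t-1),\ldots,\delta_n(t-p_n)]$ and $X_n^\beta(t-1) := [x_n(t-1),\ldots,x_n(t-L+1)]$.

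Next, applying $(a+b+c+d)^2 \leq 4(a^2+b^2+c^2+d^2)$ and averaging over $n$ and $t \in \{T+1,\ldots,2T\}$, I would bound each of the four terms separately. The first is handled by $\|\sbeta-\pbeta\|_2^2 \cdot \max_{n,t}\|Y_n(t-1)\|_2^2$: Theorem~\ref{thm:beta_id} controls the first factor at $\tilde{O}(1/\sqrt{NT})$, while sub-gaussian concentration of the stationary AR process (leveraging the spectral control from Lemma~\ref{lem:op_norm_bound}) gives $\|Y_n(t-1)\|_2^2 = O(L(f_{\max}^2 + \sigma_x^2 \log NT))$. The third term is controlled analogously using Theorem~\ref{thm:alpha_id} and the tail bound $\|X_n^\alpha(t-1)\|_2^2 = O(p \sigma_x^2 \log T)$, yielding the $\tilde O(p\sigma^2 \log T / T)$ contribution. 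For the fourth term, I would split $\delta_n(s)$ according to whether $s\leq T$ (in-sample, with squared average directly controlled by $\recer(N,T,n)$ via Theorem~\ref{thm:rec_error}) or $s>T$ (out-of-sample, where $\delta_n(s) = (\sbeta-\pbeta)^\top Y_n(s-1) + (\pbeta)^\top X_n^\beta(s-1)$ and thus reduces to the first and second terms at shifted indices). The second term, together with the out-of-sample piece of the fourth, simplifies via the AR identity $x_n(s) - \sum_i \alpha_{ni} x_n(s-i) = \eta_n(s)$: the combination $(\pbeta)^\top X_n^\beta(t-1) - \sum_i \alpha_{ni}(\pbeta)^\top X_n^\beta(t-i-1)$ collapses into $\sum_l \pbeta_l \eta_n(t-l)$, a sub-gaussian linear combination of i.i.d.\ innovations whose squared average over $t$ concentrates at rate $\tilde O(\sigma^2\|\pbeta\|_2^2/T)$ by Hanson--Wright.

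The hard part will be controlling the recursive coupling between the two forecasting stages: for out-of-sample $s>T$, the residual $\delta_n(s)$ depends on the learned $\sbeta$ and on past observation noise $X_n^\beta(s-1)$, which propagates into the AR forecast through $\widehat{\alpha}_n^\top \Delta_n^\alpha$. Naively this produces an apparently irreducible $\Theta(\sigma_x^2\|\pbeta\|_2^2)$ contamination of the forecast; the AR-identity cancellation between the second term and the out-of-sample part of the fourth term is the critical algebraic step that reduces it to an $\eta_n$-driven contribution at rate $\tilde O(\sigma^2/T)$, consistent with the first term inside the parentheses of the theorem statement. Once the four terms are bounded pointwise, a union bound over the at most $2NT$ indices $(n,t)$ upgrades the sub-gaussian and Hanson--Wright tail bounds to the event at probability $1 - c/T^{10}$, and substituting the rates from Theorems~\ref{thm:rec_error}, \ref{thm:alpha_id}, and \ref{thm:beta_id}, while absorbing all $\pbeta$, $\alpha_{\max}$, $\lambda_{\min}(\Psi)$, and $\lambda_{\max}(\Psi)$ factors into $\tilde C$, produces the claimed $\tilde{O}(1/T + 1/\sqrt{NT})$ bound.
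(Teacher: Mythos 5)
Your algebraic decomposition of the residual into four terms is correct, and your treatment of the first term, the third term, and the in-sample part of the fourth term is consistent with the rates supplied by Theorems \ref{thm:beta_id}, \ref{thm:alpha_id}, and \ref{thm:rec_error}. However, the step you yourself identify as the crux does not go through. After the AR-identity cancellation, the surviving contribution is $\sum_{l=1}^{L-1}\pbeta_l\,\eta_n(t-l)$, and you must control $\frac{1}{T}\sum_{t=T+1}^{2T}\bigl(\sum_{l}\pbeta_l\,\eta_n(t-l)\bigr)^2$. Each summand has expectation exactly $\sigma^2\|\pbeta\|_2^2$, so this time-average concentrates \emph{around} the constant $\sigma^2\|\pbeta\|_2^2$; it does not decay at rate $\tilde O(\sigma^2\|\pbeta\|_2^2/T)$. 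Hanson--Wright controls only the fluctuation of the quadratic form about its mean and cannot remove the mean. As written, your argument yields $\fore(N,T)\lesssim \sigma^2\|\pbeta\|_2^2+\tilde O(1/T+1/\sqrt{NT})$, which is not the claimed bound, and the sparsity $\|\pbeta\|_0\le RG$ from Proposition \ref{prop:exact_low_rank_linear} does not rescue this since $\|\pbeta\|_2$ is generically bounded away from zero.

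The paper's proof proceeds entirely differently and never isolates the scalar term $\pbeta^\top X_n^\beta(t-1)$. It splits $\fore$ into $\tfrac{2}{NT}\sum_{n,t}(\hf_n(t)-f_n(t))^2+\tfrac{2}{NT}\sum_{n,t}(\hx_n(t)-\bar{x}_n(t))^2$. The first sum is written in matrix form as $\|(\bZo{y}')^{\top}\sbeta-(\bZo{f}')^{\top}\pbeta\|_2^2$ and bounded in Lemma \ref{lemma:f_forecast_error} via Weyl's inequality, Wedin's $\sin\Theta$ theorem, Assumption \ref{assumption:subspaceinclusion} (which places $\pbeta$ in the out-of-sample signal subspace, making $\sbeta$ nearly orthogonal to the singular directions carrying the truncation residual), and the $\|\cdot\|_{2,\infty}$ HSVT guarantee; the second sum is reduced to $\|\bDelta_n\|_2^2$ and $\|\bX_n\|_2^2\,\|\widehat{\alpha}_n-\alpha_n\|_2^2$ and controlled by Corollary \ref{cor:spectrum} and Theorem \ref{thm:alpha_id}. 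To repair your scalar route you would need to import that subspace-projection argument to show that the noise component of $\sbeta^\top Y_n(t-1)$ is $o(1)$ on average over the forecast horizon; the pointwise AR-identity cancellation alone cannot deliver a vanishing bound.
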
 
This theorem establishes that the forecasting error for the next $T$ time steps scales as $\tilde{O}\left( \frac{1}{T} + \frac{1}{\sqrt{NT}} \right)$ with high probability. 
Note that the $\tilde{O}\left( \frac{1}{T}\right)$ term is a function of $T$ only, as it is a consequence of the error incurred when we estimate each $\alpha_n$. 
Recall that we estimate $\alpha_n$ separately for $n \in [N]$ using the available (perturbed)  $T$ observation of each process. 
The $\tilde{O}\left( \frac{1}{\sqrt{NT}}\right)$ term on the other hand is a consequence of the error incurred when learning and forecasting $f_1, \dots, f_n$, which is done collectively across the $N$ time series.
Finally,  Theorem \ref{thm:forecast_error} implies that when $ N  = \Theta(T)$, the forecasting error scales as  $\tilde{O}\left( \frac{1}{T} \right)$. 
The proof of Theorems  \ref{thm:beta_id} and \ref{thm:forecast_error} are in Appendix \ref{app:beta_proof} and  \ref{app:fore_error_proof}, respectively.

\vspace{-2mm}
\section{Experiments}
\vspace{-2mm}
\label{sec:experiments}
In this section, we support our theoretical results through several experiments using synthetic and real-world data. 
In particular, we draw the following conclusions:
\begin{inparaenum}[leftmargin=*]
    \item Our results align with numerical simulations concerning the estimation of non-stationary components under \gls{ar} stationary noise and the accuracy of \gls{ar} parameter estimation (Section \ref{sec:exp:model_est}). 
    \item  Modeling and learning the autoregressive process in \samossa  led to a consistent improvement over \gls{mssa}.   The improvements range from 5\% to 37\% across standard datasets (Section \ref{sec:exp:model_fore}). 
\end{inparaenum}

\vspace{-2mm}
\subsection{Model Estimation} \label{sec:exp:model_est}
\noindent \textbf{Setup.} We generate a synthetic multivariate time series  ($N=10$) that is a mixture of both harmonics and an \gls{ar} noise process. The \gls{ar} process is stationary, and its parameter is chosen such that $\lambda^* = \max_{n,i} |\lambda_{ni}|$ is one of three values  $\{0.3, 0.6, 0.95\}$. Refer to  Appendix \ref{app:datasets} for more details about the generating process.
We then evaluate the estimation error $\recer(N, T, 1)$ and the \gls{ar} parameter estimation error $\enorm{\alpha_1 -\widehat{\alpha}_1}$ as we increase $T$ from $200$ to $500000$. 

\noindent \textbf{Results.} 
Figure \ref{fig:error_est} visualizes the mean squared estimation error of $f_1$, while Figure \ref{fig:error_ar} shows the \gls{ar} parameter estimation error.
The solid lines in both figures indicate the mean across ten trials, whereas the shaded areas cover the minimum and maximum error across the ten trials.
We find that, as the theory suggests,  the estimation error for both the non-stationary component and the \gls{ar} parameter decay to zero as $NT$ increases.
We also see that the estimation error is inversely proportional to $(1-\lambda^*)$, which is being reflected in the \gls{ar} parameter estimation error as well. 
\begin{figure}[!htb]
    \centering
    \begin{subfigure}[t]{0.45\textwidth}
        \centering
        \begin{tikzpicture}
\definecolor{darkgray141160203}{RGB}{141,160,203}
\definecolor{darkgray176}{RGB}{176,176,176}
\definecolor{mediumaquamarine102194165}{RGB}{102,194,165}
\definecolor{salmon25214198}{RGB}{252,141,98}

\begin{axis}[
log basis x={10},
log basis y={10},
width=5.783cm,
height=3.754cm,
tick align=outside,
tick pos=left,
x grid style={darkgray176},
xlabel={$\sqrt{NT} \qquad \;\;$},
xmajorgrids,
xmin=13.7581686042895, xmax=2180.52277616707,
xminorgrids,
xmode=log,
xtick style={color=black},
xtick={1,10,100,1000,10000,100000},
xticklabels={
  \(\displaystyle {10^{0}}\),
  \(\displaystyle {10^{1}}\),
  \(\displaystyle {10^{2}}\),
  \(\displaystyle {10^{3}}\),
  \(\displaystyle {10^{4}}\),
  \(\displaystyle {10^{5}}\)
},
y grid style={darkgray176},
ylabel={EstErr(\(\displaystyle N, T, 1\))},
ylabel style={ inner sep=0.01em},
ymajorgrids,
ymin=0.000263380170419485, ymax=1.401102443259916,
yminorgrids,
ymode=log,
ytick style={color=black},
ytick={1e-05,0.0001,0.001,0.01,0.1,1},
yticklabels={
  \(\displaystyle {10^{-5}}\),
  \(\displaystyle {10^{-4}}\),
  \(\displaystyle {10^{-3}}\),
  \(\displaystyle {10^{-2}}\),
  \(\displaystyle {10^{-1}}\),
  \(\displaystyle {10^{0}}\)
}, 
legend pos= north east,
legend entries={$\lambda^*=0.3$, $\lambda^*=0.6$, $\lambda^*=0.95$},
legend style={font=\tiny, draw=none, fill opacity = 0.75 },
legend cell align={left},
]
\path [draw=mediumaquamarine102194165, fill=mediumaquamarine102194165, opacity=0.1]
(axis cs:17.3205080756888,0.0518402992137971)
--(axis cs:17.3205080756888,0.0252326552863144)
--(axis cs:24.4948974278318,0.0233800422700678)
--(axis cs:54.7722557505166,0.00937245992807353)
--(axis cs:77.4596669241483,0.00741349854438967)
--(axis cs:424.264068711929,0.000998647709893877)
--(axis cs:774.596669241483,0.000554769187085119)
--(axis cs:1341.64078649987,0.000333378884342559)
--(axis cs:1732.05080756888,0.000249251898917521)
--(axis cs:1732.05080756888,0.000507213987219214)
--(axis cs:1732.05080756888,0.000507213987219214)
--(axis cs:1341.64078649987,0.000658212283648953)
--(axis cs:774.596669241483,0.00112388098377698)
--(axis cs:424.264068711929,0.00212884251449819)
--(axis cs:77.4596669241483,0.0145876409003947)
--(axis cs:54.7722557505166,0.0212127861362164)
--(axis cs:24.4948974278318,0.0402356081297575)
--(axis cs:17.3205080756888,0.0518402992137971)
--cycle;

\path [draw=salmon25214198, fill=salmon25214198, opacity=0.1]
(axis cs:17.3205080756888,0.0751322276366326)
--(axis cs:17.3205080756888,0.0174093668424439)
--(axis cs:24.4948974278318,0.0235723464383719)
--(axis cs:54.7722557505166,0.0132610894326092)
--(axis cs:77.4596669241483,0.0104142657966565)
--(axis cs:424.264068711929,0.0018330923552191)
--(axis cs:774.596669241483,0.000970987776703988)
--(axis cs:1341.64078649987,0.000558652496555889)
--(axis cs:1732.05080756888,0.000433429417568923)
--(axis cs:1732.05080756888,0.000746122625225713)
--(axis cs:1732.05080756888,0.000746122625225713)
--(axis cs:1341.64078649987,0.000950857233277962)
--(axis cs:774.596669241483,0.00179429646835219)
--(axis cs:424.264068711929,0.00320793235328378)
--(axis cs:77.4596669241483,0.018214753382399)
--(axis cs:54.7722557505166,0.0243981545731791)
--(axis cs:24.4948974278318,0.0683073347636286)
--(axis cs:17.3205080756888,0.0751322276366326)
--cycle;

\path [draw=darkgray141160203, fill=darkgray141160203, opacity=0.1]
(axis cs:17.3205080756888,0.512054384197568)
--(axis cs:17.3205080756888,0.0437287664781098)
--(axis cs:24.4948974278318,0.0610306067836016)
--(axis cs:54.7722557505166,0.095256715749024)
--(axis cs:77.4596669241483,0.107148933069191)
--(axis cs:424.264068711929,0.0441664439432001)
--(axis cs:774.596669241483,0.0175157115537098)
--(axis cs:1341.64078649987,0.00928472532185756)
--(axis cs:1732.05080756888,0.00749427322624784)
--(axis cs:1732.05080756888,0.00931027107745536)
--(axis cs:1732.05080756888,0.00931027107745536)
--(axis cs:1341.64078649987,0.0123340472162601)
--(axis cs:774.596669241483,0.0213818425559374)
--(axis cs:424.264068711929,0.0616586962759513)
--(axis cs:77.4596669241483,0.217081831552396)
--(axis cs:54.7722557505166,0.242366214630165)
--(axis cs:24.4948974278318,0.275376481668577)
--(axis cs:17.3205080756888,0.512054384197568)
--cycle;

\addplot [semithick, mediumaquamarine102194165, mark=*, mark size=2, mark options={solid}]
table {%
17.3205080756888 0.036049261889225
24.4948974278318 0.0287041126507414
54.7722557505166 0.0146714803798092
77.4596669241483 0.0104372001758593
424.264068711929 0.00142411868919511
774.596669241483 0.000761009110801165
1341.64078649987 0.000437439068095203
1732.05080756888 0.000338518589944447
};
\addplot [semithick, salmon25214198, mark=*, mark size=2, mark options={solid}]
table {%
17.3205080756888 0.0387961343585877
24.4948974278318 0.0355177933456822
54.7722557505166 0.0183104687065331
77.4596669241483 0.01361435124122
424.264068711929 0.00233735677515246
774.596669241483 0.00128193263601053
1341.64078649987 0.000720185716006396
1732.05080756888 0.000564385138835962
};
\addplot [semithick, darkgray141160203, mark=*, mark size=2, mark options={solid}]
table {%
17.3205080756888 0.168794102180313
24.4948974278318 0.183506704065578
54.7722557505166 0.178333219990859
77.4596669241483 0.166200754707599
424.264068711929 0.0537163816224132
774.596669241483 0.0192836533196261
1341.64078649987 0.0109975824438251
1732.05080756888 0.00849997708230528
};
\end{axis}

\end{tikzpicture}
        \caption{}
        \label{fig:error_est}
    \end{subfigure}
    \begin{subfigure}[t]{0.45\textwidth}
    \centering
\begin{tikzpicture}

\definecolor{darkgray141160203}{RGB}{141,160,203}
\definecolor{darkgray176}{RGB}{176,176,176}
\definecolor{mediumaquamarine102194165}{RGB}{102,194,165}
\definecolor{salmon25214198}{RGB}{252,141,98}

\begin{axis}[
width=5.783cm,
height=3.754cm,
log basis x={10},
log basis y={10},
tick align=outside,
tick pos=left,
x grid style={darkgray176},
xlabel={$\sqrt{NT} \qquad \;\;$},
xmajorgrids,
xmin=13.7581686042895, xmax=2180.52277616707,
xminorgrids,
xmode=log,
xtick style={color=black},
xtick={1,10,100,1000,10000,100000},
xticklabels={
  \(\displaystyle {10^{0}}\),
  \(\displaystyle {10^{1}}\),
  \(\displaystyle {10^{2}}\),
  \(\displaystyle {10^{3}}\),
  \(\displaystyle {10^{4}}\),
  \(\displaystyle {10^{5}}\)
},
y grid style={darkgray176},
ymin=-0.056416784811157, ymax=1.24146369777371,
ytick style={color=black}, 
ylabel={$\Vert \alpha_1 - \widehat{\alpha}_1 \Vert$},
ylabel style={ inner sep=0.01em},
ymajorgrids,
yminorgrids,
legend pos= north east,
legend entries={$\lambda^*=0.3$, $\lambda^*=0.6$, $\lambda^*=0.95$},
legend style={font=\tiny, draw=none ,  fill opacity = 0.75},
legend cell align={left},
]
\path [draw=mediumaquamarine102194165, fill=mediumaquamarine102194165, opacity=0.2]
(axis cs:17.3205080756888,0.753717146199441)
--(axis cs:17.3205080756888,0.291783936571287)
--(axis cs:24.4948974278318,0.160105767263419)
--(axis cs:54.7722557505166,0.080700191347819)
--(axis cs:77.4596669241483,0.0709937408817888)
--(axis cs:424.264068711929,0.0166798389560757)
--(axis cs:774.596669241483,0.00930884914781683)
--(axis cs:1341.64078649987,0.00612233532819294)
--(axis cs:1732.05080756888,0.00188356733858618)
--(axis cs:1732.05080756888,0.00924192796584795)
--(axis cs:1732.05080756888,0.00924192796584795)
--(axis cs:1341.64078649987,0.0126431529055601)
--(axis cs:774.596669241483,0.0196717452964334)
--(axis cs:424.264068711929,0.0385913832502248)
--(axis cs:77.4596669241483,0.182366784398379)
--(axis cs:54.7722557505166,0.329055937946102)
--(axis cs:24.4948974278318,0.761175282787973)
--(axis cs:17.3205080756888,0.753717146199441)
--cycle;

\path [draw=salmon25214198, fill=salmon25214198, opacity=0.1]
(axis cs:17.3205080756888,0.819260178899841)
--(axis cs:17.3205080756888,0.351675357172261)
--(axis cs:24.4948974278318,0.129331203060669)
--(axis cs:54.7722557505166,0.182872832950245)
--(axis cs:77.4596669241483,0.131914764186691)
--(axis cs:424.264068711929,0.018447813565047)
--(axis cs:774.596669241483,0.0115799120995862)
--(axis cs:1341.64078649987,0.0067276173983732)
--(axis cs:1732.05080756888,0.00459347581016906)
--(axis cs:1732.05080756888,0.00742127883886164)
--(axis cs:1732.05080756888,0.00742127883886164)
--(axis cs:1341.64078649987,0.0129648824186482)
--(axis cs:774.596669241483,0.0167163408925136)
--(axis cs:424.264068711929,0.0319900302289738)
--(axis cs:77.4596669241483,0.205349769455136)
--(axis cs:54.7722557505166,0.374176332229295)
--(axis cs:24.4948974278318,0.640100483393283)
--(axis cs:17.3205080756888,0.819260178899841)
--cycle;

\path [draw=darkgray141160203, fill=darkgray141160203, opacity=0.1]
(axis cs:17.3205080756888,1.47421513533199)
--(axis cs:17.3205080756888,0.612283831163986)
--(axis cs:24.4948974278318,0.722273711917084)
--(axis cs:54.7722557505166,0.356211355027468)
--(axis cs:77.4596669241483,0.244984249603088)
--(axis cs:424.264068711929,0.0181630353284804)
--(axis cs:774.596669241483,0.00616347655560728)
--(axis cs:1341.64078649987,0.0025159683097809)
--(axis cs:1732.05080756888,0.00191496582162766)
--(axis cs:1732.05080756888,0.00896467700215987)
--(axis cs:1732.05080756888,0.00896467700215987)
--(axis cs:1341.64078649987,0.00622923956794735)
--(axis cs:774.596669241483,0.0112596870513589)
--(axis cs:424.264068711929,0.0313036240589142)
--(axis cs:77.4596669241483,0.386903687421672)
--(axis cs:54.7722557505166,0.507534169178462)
--(axis cs:24.4948974278318,1.2069162316949)
--(axis cs:17.3205080756888,1.47421513533199)
--cycle;

\addplot [semithick, mediumaquamarine102194165, mark=*, mark size=2, mark options={solid}]
table {%
17.3205080756888 0.574706822018118
24.4948974278318 0.394690091960642
54.7722557505166 0.198019357922808
77.4596669241483 0.127200679427069
424.264068711929 0.0261619526338355
774.596669241483 0.0153894984420943
1341.64078649987 0.00913816650540801
1732.05080756888 0.00657317357312311
};
\addplot [semithick, salmon25214198, mark=*, mark size=2, mark options={solid}]
table {%
17.3205080756888 0.573088023877615
24.4948974278318 0.430446750154139
54.7722557505166 0.269517325726567
77.4596669241483 0.172140911145913
424.264068711929 0.0252636286516615
774.596669241483 0.0151376489479522
1341.64078649987 0.00900538921410817
1732.05080756888 0.0059731933195832
};
\addplot [semithick, darkgray141160203, mark=*, mark size=2, mark options={solid}]
table {%
17.3205080756888 0.921656592082604
24.4948974278318 0.926051766984017
54.7722557505166 0.402106079736497
77.4596669241483 0.300386891401738
424.264068711929 0.0246861532724243
774.596669241483 0.00829037166440246
1341.64078649987 0.00405635863743252
1732.05080756888 0.00387115779429754
};
\end{axis}

\end{tikzpicture}
        \caption{}
        \label{fig:error_ar}
    \end{subfigure}
    \caption{The error in \samossa's estimation of the non-stationary components and the  \gls{ar} parameters  decays to zero as $NT$  increases as the theorem suggests.}
    \vspace{-4mm}
    \label{fig:example_plots}
\end{figure}
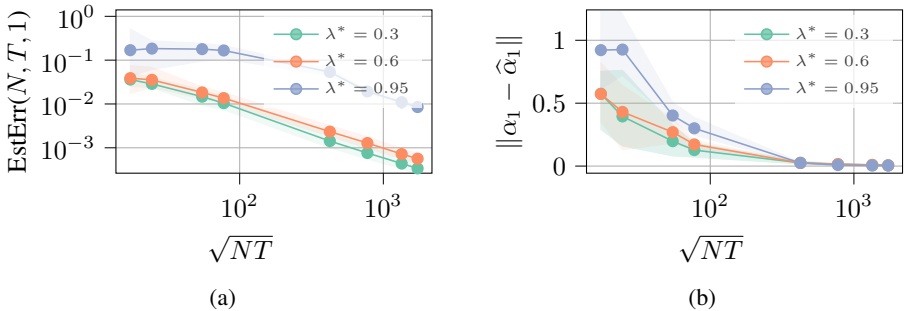
\vspace{-2mm}
\subsection{Forecasting}
\label{sec:exp:model_fore}
We showcase \samossa's forecasting performance relative to standard algorithms.  
Notably,  we compare \samossa to the \gls{mssa} variant in~\cite{agarwal2022multivariate} to highlight the value of learning the autoregressive process. 
%
%

\begin{table}
  \caption{Performance of algorithms on various datasets, measured by mean $R^2$.}
  \label{table:exp_scores}
  \centering
\begin{tabular}{ccccc}
    \toprule
    & Traffic & Electricity & Exchange & Synthetic \\
    \midrule
    \samossa & {0.776} & \textbf{0.829} & 0.731 & \textbf{0.476} \\
    mSSA & 0.747 & 0.605 & 0.674 & 0.366 \\
    ARIMA & 0.723 & <-10 & \textbf{0.756} & 0.305 \\
    Prophet & 0.462 & 0.197 & <-10 & -0.445 \\
    DeepAR & \textbf{0.824} & 0.764 & 0.579 & 0.323  \\
    LSTM & 	0.821 & -1.261 & -1.825 &0.381	
    \\ \bottomrule
  \end{tabular}
\end{table} 

\textbf{Setup.} The forecasting ability of \samossa was compared against (i) \gls{mssa}~\cite{agarwal2022multivariate}, (ii) ARIMA, a very popular classical time series prediction algorithm, (iii) Prophet \cite{taylor2018forecasting}, (iv) DeepAR~\cite{DeepAR} and (v)  LSTM~\cite{LSTM}on three real-life datasets and one synthetic dataset containing both deterministic trend/seasonality and a stationary noise process (see details in Appendix \ref{app:experiments}). Each dataset was split into train, validation, and test sets (see Appendix \ref{app:datasets}). Each dataset has multiple time series, so we aggregate the performance of each algorithm using the {mean $R^2$ score}. We use the $R^2$ score since it is invariant to scaling and gives a reasonable baseline: a negative value indicates performance inferior to simply predicting the mean.

\textbf{Results.} In Table \ref{table:exp_scores} we report the mean $R^2$ for each method on each dataset. We highlight that, across all datasets, \samossa consistently performs the best or is otherwise very competitive with the best.
The results also  underscore the significance of modeling and learning the autoregressive process in real-world datasets. 
Notably, learning the autoregressive model in \samossa consistently led to an improvement over \gls{mssa}, with the increase in $R^2$ values  ranging from 5\% to 37\%. 
We note that this improvement is due to the fact that  \gls{mssa}, as described in \cite{agarwal2022multivariate}, overlooks any potential structure in the stochastic processes $x_1(\cdot), \dots, x_N(\cdot)$ and assumes \iid mean-zero noise process. While in SAMoSSA,  we attempt to capture the structure of $x_1(\cdot), \dots, x_N(\cdot)$ through the learned AR process.

\section{Discussion and Limitations}
\vspace{-2mm}
\label{sec:discussion}
We presented \samossa, a two-stage procedure that effectively handles mixtures of deterministic non-stationary and stationary \gls{ar} processes with minimal model assumptions. 
We analyze \samossa's ability to estimate non-stationary components under stationary \gls{ar} noise, the error rate of AR system identification via OLS under observation errors, and a finite-sample forecast error analysis. 

We note that our results can be readily adapted to accommodate (i) \textit{approximate low-rank} settings (as in the model by Agarwal et al~\cite{{agarwal2022multivariate}}); and (ii)  
scenarios with incomplete data.
We do not discuss these settings to focus on our core contributions, but they represent valuable directions for future studies.

Our analysis reveals some limitations, providing avenues for future research.
%
%
One limitation of our model is that it only considers stationary stochastic processes. Consequently, processes with non-stationary stochastic trends, such as a random walk, are not incorporated. Investigating the inclusion of such models and their interplay with the SSA literature is a worthy direction for future work.
Second, our model assume non-interaction between the $N$ stationary processes $x_1, \dots, x_N$. 
Yet, it might be plausible to posit the existence of interactions among them, possibly through a vector \gls{ar} model (VAR).
Examining this setting represents another compelling direction for future work.


\bibliography{refs}
\bibliographystyle{abbrv}

\newpage

\appendix
\section{Concentration inequalities for AR processes} \label{app:ar_noise_proofs}
In this section, we present two important lemmas that are central to the main results. 
%
%
Before stating the two lemmas, we start with key important quantities of AR Processes. 

\subsection{Controllability Gramian of AR Processes} \label{sec:notation}

In this section, we explain the notation of key quantities used in the main results. 
First, note that the process in \eqref{eq:ardef} can be written in matrix-vector form by defining a transition matrix and input vector 
\begin{equation*}
    \bA_n = \begin{bmatrix} \alpha^\top_{n1 : n(p_n - 1)} & \alpha_{n{p_n}} \\
    I_{p_n-1} & 0_{p_n-1} \end{bmatrix},   \qquad  B_n = \begin{bmatrix} 1 \\ 0_{p_n-1} \end{bmatrix}, 
\end{equation*}
where $0_{p_n-1}$ is the column vector of $p_n - 1$ zeroes.
Then, with 
$ X_n(t) \coloneqq [x_n(t),\dots, x_n(t-p_n+1)]$,
\begin{equation*} \label{eq:AR_system}
    X_n(t) = \bA_n X_n(t-1) + B_n \eta_{nt}.
\end{equation*}
With this setup, we define two important matrices,
\begin{align}
    \Gamma_n &= \sum_{t=0}^\infty \bA_n^t(\bA_n^\top)^t,  \qquad
    \Psi_n = \sum_{t=0}^\infty \bA_n^tB_nB_n^\top(\bA_n^\top)^t \label{eq:psi},
\end{align}
where $\Psi_n$ is known as the \emph{controllability Gramian} of the $n$-th $\AR{p_n}$ process $x_n$. 
For a positive semi-definite matrix $\bM$, let $\lambda_{\max}(\bM)\ge 0$ and $\lambda_{\min}(\bM) \ge 0$ be its maximum and minimum eigenvalues, respectively. We define $\lambda_{\max}(\Psi)  \coloneqq \max_n \lambda_{\max}(\Psi_n) $ and $\lambda_{\min}(\Psi)  \coloneqq \min_n \lambda_{\min}(\Psi_n)$. We also define $\lambda_{\max}(\Gamma)$ and $\lambda_{\min}(\Gamma)$ analogously for $\Gamma$.

\subsection{Key Lemmas} \label{sec:key_lemma}

Now, we present two important lemmas that are central to the main results. 
The first lemma states that a vector of consecutive entries of a stationary autoregressive process is a sub-gaussian vector with  variance proxy $\sigma^2_x$. 

\begin{lemma}
\label{lem:subg_vector}
Let assumptions \ref{assm:stationary_process} and \ref{assm:subgausian_noise} hold, then, for any $n \in[N], t\ge 1, K\ge 1$, the vector $[x_n(t), \dots, x_n(t+K)]$ is a sub-gaussian vector with variance proxy $\sigma^2_x$.
\end{lemma}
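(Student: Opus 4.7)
The plan is to show that for every unit vector $v \in \Rb^{K+1}$, the scalar $\langle v, X \rangle$, with $X = [x_n(t), \dots, x_n(t+K)]^\top$, is sub-gaussian with variance proxy $\sigma_x^2$; this is the standard equivalent formulation of $X$ being a sub-gaussian vector with variance proxy $\sigma_x^2$.

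The first step, and the one I expect to be the main obstacle, is obtaining a quantitative MA$(\infty)$ representation of $x_n$. Under Assumption \ref{assm:stationary_process} the roots $\lambda_{n1}, \dots, \lambda_{np_n}$ of $g_n(z)$ are distinct and lie strictly inside the unit disk, so a standard partial-fraction decomposition of $1/g_n$ yields $x_n(t) = \sum_{j \geq 0} \psi_j \eta_n(t-j)$ with $\psi_j = \sum_{i=1}^{p_n} c_{ni} \lambda_{ni}^j$, where the constants $c_{ni}$ depend only on the roots. Consequently $|\psi_j| \leq c_\lambda \lambda_\star^j$ with $c_\lambda \coloneqq \max_n \sum_i |c_{ni}|$, which is precisely the constant referenced in the definition of $\sigma_x$. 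This is the place where Assumption \ref{assm:stationary_process} does all the work, and where $c_\lambda$ gets its explicit form.

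Given this representation, I expand $\langle v, X \rangle = \sum_{k=0}^{K} v_k \sum_{j \geq 0} \psi_j \eta_n(t+k-j) = \sum_{s \leq t+K} \tilde{\psi}_s \, \eta_n(s)$, where $\tilde{\psi}_s = \sum_{k: t+k \geq s} v_k \psi_{t+k-s}$. Because the $\eta_n(s)$ are independent mean-zero sub-gaussians with variance proxy $\sigma^2$ by Assumption \ref{assm:subgausian_noise}, the standard fact that weighted sums of independent sub-gaussians are sub-gaussian with variance proxy equal to $\sigma^2$ times the sum of squared weights gives that $\langle v, X\rangle$ is sub-gaussian with variance proxy $\sigma^2 \|\tilde{\psi}\|_2^2$.

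It remains to bound $\|\tilde{\psi}\|_2$. The key observation is that, viewed as a sequence in $s$, $\tilde{\psi}$ is the discrete convolution of the causal one-sided sequence $(\psi_j)_{j \geq 0}$ with a reversed zero-padded copy of $v$. Young's convolution inequality then yields $\|\tilde{\psi}\|_2 \leq \|\psi\|_1 \cdot \|v\|_2 \leq c_\lambda \sum_{j \geq 0} \lambda_\star^j = c_\lambda/(1-\lambda_\star)$, using the geometric bound from Step 1 together with $\|v\|_2 = 1$. Squaring and multiplying by $\sigma^2$ gives a variance proxy of at most $\sigma^2 c_\lambda^2/(1-\lambda_\star)^2 = \sigma_x^2$, uniformly in $v$, $t$ and $K$, which is exactly the claim. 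Steps 2 and 3 are routine; the only real content is the $\ell^1$ summability of the impulse response from Step 1.
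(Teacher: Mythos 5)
Your proof is correct, and the first half (the MA$(\infty)$ representation with $|\psi_j|\le c_\lambda \lambda_\star^j$ via partial fractions over the distinct roots) is exactly the paper's Lemma \ref{lemma:ar_to_ma}. Where you diverge is in how sub-gaussianity of $\langle v, X\rangle$ is established. The paper writes the vector $X_n(t,K)$ as $\sum_{k\ge 0}\beta_{nk}E_n(t-k,K)$, treats each summand as a (mutually \emph{dependent}) sub-gaussian vector with proxy $\beta_{nk}^2\sigma^2$, and invokes the crude but robust fact that variance-proxy square roots add under summation, yielding $\sigma^2\bigl(\sum_k|\beta_{nk}|\bigr)^2\le\sigma_x^2$. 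You instead regroup the double sum by the underlying innovations $\eta_n(s)$, which \emph{are} independent, so that squared weights add and the proxy is $\sigma^2\|\tilde\psi\|_2^2$; you then control $\|\tilde\psi\|_2\le\|\psi\|_1\|v\|_2$ by Young's convolution inequality, landing on the same constant. Your route is in principle sharper (the $\ell^2$ norm of the convolved weights can be strictly smaller than $\|\psi\|_1$), though Young's inequality discards that gain and both arguments terminate at the identical bound $\sigma_x^2=c_\lambda^2\sigma^2/(1-\lambda_\star)^2$; the paper's route has the minor advantage of not needing to track the regrouping of an infinite double sum, while yours makes the role of independence explicit. The only point you wave at implicitly, which the paper also glosses over, is passing the MGF bound through the infinite sum, justified by $\ell^1$ summability of $\psi$.
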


This lemma, along with the well known bound on the norm of sub-gaussian vectors, is key to the bounds presented in Section \ref{sec:main_results}. 
Further, this Lemma, along with an $\varepsilon$-net argument, allows us to bound the operator norm of the stacked Page matrix of the \gls{ar} processes $x_1(t), \dots, x_N(t)$. 
Specifically, we establish next that its operator norm grows as $O(\sqrt{NT/L})$ with high probability. 

\begin{lemma}
\label{lem:op_norm_bound}
For any $T > 1$,  $1\leq L \leq \sqrt{NT}$, and $t_0 > 0$, 
let $$\bZ_x(t_0) = \begin{bmatrix} \mathbf{Z}(x_1,L, T, t_0) & \mathbf{Z}(x_2,L, T, t_0) & \dots&  \mathbf{Z}(x_n,L, T, t_0)\end{bmatrix} $$ be the $L \times NT/L$ stacked Page matrix of the \gls{ar} processes $x_1(t), \dots, x_N(t)$, as defined in \eqref{eq:ardef}.  Let Assumptions \ref{assm:stationary_process} and \ref{assm:subgausian_noise} hold. Then, each row $[\bZ_x(t_0)]_{i \cdot}$ is a sub-gaussian vector with variance proxy $\sigma_x^2$. Further,  With probability $1-\exp{(-cNT/L)}$, 
\begin{align*}
    \enorm{\bZ_x(t_0)} \leq {2c\sigma_x  \sqrt{NT/L} }.
\end{align*}
\end{lemma}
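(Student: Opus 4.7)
The plan is to handle the two claims in turn, with the row sub-gaussianity providing the essential input for the operator-norm bound via a standard $\varepsilon$-net argument.

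For the first claim, I would fix a row index $i$ and observe that the entries of $[\bZ_x(t_0)]_{i\cdot}$ coming from process $n$ are exactly $\{x_n(t_0 + i - 1 + (j-1)L)\}_{j=1}^{T/L}$, which is a coordinate sub-vector of the length-$T$ consecutive-entry vector $\bigl(x_n(t_0+i-1),\ldots,x_n(t_0+i+T-2)\bigr)$. By Lemma \ref{lem:subg_vector} this latter vector is sub-gaussian with variance proxy $\sigma_x^2$, and a coordinate projection preserves sub-gaussianity with the same proxy. Since the $N$ AR processes are driven by independent noise sequences $\eta_n(\cdot)$ (Assumption \ref{assm:subgausian_noise}), the whole row is a concatenation of $N$ independent sub-gaussian blocks, and a standard moment-generating-function computation shows this concatenation is sub-gaussian with proxy $\sigma_x^2$.

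For the operator-norm bound I would use a standard covering argument. Take $\tfrac{1}{4}$-nets $\mathcal{N}_L \subseteq S^{L-1}$ and $\mathcal{N}_M \subseteq S^{M-1}$ with $M = NT/L$, satisfying $|\mathcal{N}_L| \le 9^L$ and $|\mathcal{N}_M| \le 9^M$, so that
\begin{equation*}
\enorm{\bZ_x(t_0)} \;\le\; 2 \max_{u \in \mathcal{N}_L,\, v \in \mathcal{N}_M} \bigl| u^\top \bZ_x(t_0)\, v \bigr|.
\end{equation*}
For each fixed $(u,v)$, decompose
\begin{equation*}
u^\top \bZ_x(t_0)\, v \;=\; \sum_{n=1}^N Y_n, \qquad Y_n \;\coloneqq\; \sum_{i=1}^L \sum_{j=1}^{T/L} u_i\, v_{(n,j)}\, x_n\bigl(t_0 + i - 1 + (j-1)L\bigr).
\end{equation*}
Each $Y_n$ is a linear functional applied to the length-$T$ consecutive-entry vector of process $n$, with coefficient vector of squared $\ell_2$-norm $\|u\|_2^2 \sum_j v_{(n,j)}^2 = \sum_j v_{(n,j)}^2$; by Lemma \ref{lem:subg_vector} and the definition of a sub-gaussian random vector, $Y_n$ is sub-gaussian with proxy $\sigma_x^2 \sum_j v_{(n,j)}^2$. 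Independence of $\{Y_n\}_{n=1}^N$ across processes makes these proxies additive, so $u^\top \bZ_x(t_0) v$ is sub-gaussian with proxy $\sigma_x^2 \sum_{n,j} v_{(n,j)}^2 = \sigma_x^2 \|v\|_2^2 = \sigma_x^2$. A scalar sub-gaussian tail bound plus a union bound over $|\mathcal{N}_L \times \mathcal{N}_M| \le 9^{L+M}$ completes the argument: the hypothesis $L \le \sqrt{NT}$ implies $L \le M$ and hence $L + M \le 2M = 2NT/L$, so choosing $t = C\sigma_x \sqrt{NT/L}$ with $C$ a sufficiently large absolute constant absorbs the entropy factor and yields failure probability at most $\exp(-c\, NT/L)$.

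\textbf{Main obstacle.} The one non-routine observation, used in both parts, is that Lemma \ref{lem:subg_vector} (stated for a length-$K$ block of consecutive entries) is strong enough to control \emph{any} linear combination of such entries with proxy $\sigma_x^2$ times the squared coefficient norm; this is what lets within-process correlations disappear into the sub-gaussian proxy. Once this is granted, cross-process independence makes those correlations invisible to the bilinear form $u^\top \bZ_x(t_0) v$, and the remainder is a textbook $\varepsilon$-net / union-bound computation.
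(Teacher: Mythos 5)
Your proposal is correct, and it reaches the bound by the same high-level strategy as the paper (reduce to sub-gaussianity of linear functionals of $\bZ_x(t_0)$ with proxy $\sigma_x^2$, then an $\varepsilon$-net and union bound), but the execution differs in two respects. First, the paper uses a one-sided net: it fixes $\bv \in \cS^{L-1}$, shows the vector $\bZ_x^\top \bv \in \Rb^{NT/L}$ is sub-gaussian with proxy $\sigma_x^2$, invokes the norm bound for sub-gaussian vectors (Lemma \ref{lemma:sub_gauss_v_norm}), and takes a net over $\cS^{L-1}$ only; you instead take a two-sided net and control the scalar bilinear form $u^\top \bZ_x(t_0) v$ directly, which avoids needing the vector-norm concentration lemma at the cost of a slightly larger (but still harmless, since $L \le NT/L$) entropy term. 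Second, to get the proxy $\sigma_x^2$, the paper re-derives the $\MA{\infty}$ structure inside this proof, writing $\bZ_x = \sum_{k\ge 0} \bE_{1-k}\bB_k$ and summing \emph{dependent} sub-gaussian terms via the $(\sigma_1+\sigma_2)^2$ rule, whereas you black-box Lemma \ref{lem:subg_vector} for each process (correctly noting that it controls arbitrary linear combinations of consecutive entries with proxy $\sigma_x^2$ times the squared coefficient norm, and that coordinate projections preserve the proxy) and then use independence of the $N$ driving noises to add the per-process proxies $\sigma_x^2\sum_j v_{(n,j)}^2$ exactly to $\sigma_x^2$. Your route is somewhat cleaner and more modular; the paper's route keeps everything inside one matrix identity and does not rely on cross-process independence for the operator-norm step (it only needs it implicitly through the i.i.d.\ structure of $\bE_i$). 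Both yield the stated $O(\sigma_x\sqrt{NT/L})$ bound with failure probability $\exp(-cNT/L)$, and your treatment of the row sub-gaussianity claim is likewise sound.
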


Next, we prove Lemma  \ref{lem:subg_vector}  and Lemma \ref{lem:op_norm_bound}.  Before stating the proofs, we start with the important helper Lemma \ref{lemma:ar_to_ma}.

\begin{lemma} [$\MA{\infty}$ representation of an $\AR{p}$ process] \label{lemma:ar_to_ma}
Let $x(t)$ be a stationary $\AR{p}$ process defined as
\[
x(t) = \sum_{i=1}^p \alpha_{i} x(t - i) + \eta(t).
\]
Assume that the roots $(\lambda_1,\dots,  \lambda_p)$ of its characteristic polynomial $g(z) \coloneqq z^p - \sum_{i=1}^p \alpha_{i} z^{p - i} $ are distinct. Then, $x(t)$ has the $\MA{\infty}$ representation 
\[
    x(t) = \sum_{k=0}^\infty   \beta_{k} \eta(t-k),
\]
 with $\beta_{k} \coloneqq \sum_{i = 1}^p a_ i \lambda_{i}^k $, $a_i \coloneqq  {\prod_{1 \leq j \leq p, ~ j \neq i} \left(1-\frac{\lambda_{j}}{\lambda_{i}}\right)^{-1} } $.  
\end{lemma}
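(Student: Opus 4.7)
The plan is to use the backshift operator formalism together with partial fractions. Writing $B$ for the backshift operator $Bx(t)=x(t-1)$ and $\phi(B)=1-\sum_{i=1}^p \alpha_i B^i$, the AR recursion becomes $\phi(B)x(t)=\eta(t)$. The first step is to relate $\phi$ to the characteristic polynomial $g$: since $g(z)=z^p\phi(1/z)$, the roots of $\phi$ (as a polynomial in $B$) are exactly $1/\lambda_1,\dots,1/\lambda_p$. Matching constant terms (both sides equal $1$ at $B=0$) gives the factorization $\phi(B)=\prod_{i=1}^p(1-\lambda_i B)$.

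Next, because the $\lambda_i$ are distinct, standard partial fractions yields
\[
\frac{1}{\phi(B)}=\sum_{i=1}^p \frac{a_i}{1-\lambda_i B}, \qquad a_i=\prod_{j\neq i}\left(1-\frac{\lambda_j}{\lambda_i}\right)^{-1},
\]
which one verifies by multiplying through by $1-\lambda_i B$ and evaluating at $B=1/\lambda_i$. Using stationarity ($|\lambda_i|<1$), each factor admits the absolutely convergent geometric expansion $(1-\lambda_i B)^{-1}=\sum_{k\ge 0}\lambda_i^k B^k$, and summing over $i$ gives $\phi(B)^{-1}=\sum_{k\ge 0}\beta_k B^k$ with $\beta_k=\sum_{i=1}^p a_i\lambda_i^k$, which is precisely the coefficient sequence claimed in the lemma.

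It then remains to interpret the formal identity $x(t)=\phi(B)^{-1}\eta(t)=\sum_{k\ge 0}\beta_k\eta(t-k)$ as a genuine stochastic series. Setting $\lambda_\star=\max_i|\lambda_i|<1$, the bound $|\beta_k|\le p\max_i|a_i|\lambda_\star^k$ gives $\sum_k|\beta_k|<\infty$ and $\sum_k\beta_k^2<\infty$, so by Assumption \ref{assm:subgausian_noise} the partial sums form a Cauchy sequence in $L^2$ and the series converges almost surely (e.g.\ by Kolmogorov's three-series theorem applied to the independent sub-gaussian summands). Call the limit $\tilde x(t)$; applying $\phi(B)$ termwise (justified by absolute convergence) shows $\phi(B)\tilde x(t)=\eta(t)$, so $\tilde x$ is a stationary solution of the AR recursion. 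Finally, uniqueness of the stationary solution under $|\lambda_i|<1$ — any two such solutions differ by a stationary process annihilated by $\phi(B)$, whose spectral density must be supported on the zero set of $\phi(e^{-i\omega})$, which is empty — gives $x(t)=\tilde x(t)$.

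The main technical obstacle is not the algebra (the partial fraction step is straightforward once the factorization of $\phi$ is in hand) but the cleanly justified passage from the formal power series identity to the convergent infinite series of random variables, together with uniqueness of the stationary solution. Both are standard in time-series theory, but a careful writeup should state the $\sum|\beta_k|<\infty$ bound explicitly — this is what licenses interchanging $\phi(B)$ with the infinite sum — and invoke stationarity (rather than just the recursion) to pin down $x(t)$.
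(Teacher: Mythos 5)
Your proposal is correct and follows essentially the same route as the paper's proof: factor the lag-operator polynomial via the roots of $g$, apply partial fractions over the distinct roots to get the coefficients $a_i$, and expand each factor as a geometric series to obtain $\beta_k=\sum_i a_i\lambda_i^k$. The only difference is that you additionally justify the convergence of the infinite series and the uniqueness of the stationary solution, steps the paper's proof treats as purely formal operator manipulation; this is a welcome tightening rather than a different argument.
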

\begin{proof}
First, note that that we can write $x(t)$ in lag operator form as 
\begin{equation} \label{eq:ar_op}
    f(L) ~ x(t) \triangleq \left(1 - \sum_{i=1}^p \alpha_{i} L^i\right) x(t) = \eta(t).
\end{equation}
Here, $L$ is a lag operator such that $L x(t) = x(t-1)$. 
 Note that since we have assumed the AR process to be stationary, all the roots of $g(z)$ lie inside the unit circle~\cite{shumway2000time}.  
 Using the roots of  $g(z)$,  \eqref{eq:ar_op}  can be written as,
\begin{equation*} 
    \prod_{i = 1}^p (1-\lambda_{i}L) ~ x(t) = \eta(t).
\end{equation*}
Further, using the partial fraction decomposition we can write the process as
\begin{align*} 
    x_t &= \frac{1}{\prod_{i = 1}^p (1-\lambda_{i}L)}\eta(t)\\ 
    &= \sum_{i = 1}^p \frac{a_i}{1-\lambda_{i}L } \eta(t)
\end{align*}
where, 
\begin{align*} 
    a_{i} &= {\prod_{1 \leq j \leq p, ~ j \neq i} \left(1-\frac{\lambda_{j}}{\lambda_{i}}\right)^{-1} }
\end{align*}
Recall that we can expand each fraction as 
\[
\frac{a_{i}}{1-\lambda_{i}L} = a_{i} \left(\sum_{k=0}^\infty  \lambda_i^k L^k\right).
\]
Thus, finally, we can write $x(t)$ as, 

\begin{align*} 
    x(t) &= \sum_{k=0}^\infty   \beta_{k} \eta(t-k),
\end{align*}
 with $\beta_{k} \coloneqq \sum_{i = 1}^p  a_{i} \lambda_{i}^k $. 
\end{proof}

\subsection{Proof of Lemma \ref{lem:subg_vector}}
\begin{proof}
First, let $X_n(t, K) = [x_n(t), \dots, x_n(T+K)]$ be the vector of interest. Let $E_n(t, K) = [\eta_n(t), \dots, \eta_n(T+K) ]$, then using Lemma \ref{lemma:ar_to_ma}, we can write $X_n(t, K)$ as 
$$
X_n(t, K) = \sum_{k = 0}^\infty \beta_{nk} E_n(t-k, K) 
$$
Note that $\beta_{nk} E_n(t-k,K)$ is clearly a sub-gaussian vector with variance proxy $\beta_{nk}^2\sigma^2$.  
Further note that the sum of two sub-gaussian random vectors, not necessarily independent, is also a sub-gaussian random vector. To see this, suppose $\bx$ and $\by$ are both sub-gaussian vectors with variance proxies $\sigma_x^2$ and $\sigma_y^2$. Then for any $\bv \in \mathcal{S}^K$,
\begin{align*}
    \bv^T (\bx + \by) &= \bv^T \bx + \bv^T \by \\
    &\sim \text{subG}\left((\sigma_x + \sigma_y)^2\right).
\end{align*}
Thus, $X_n(t, K)$, for any $n$, is also a sub-gaussian random vector with variance proxy
\begin{equation*}
    \left(\sum_{k=0}^\infty  |\beta_{nk}| \sigma\right)^2 \leq  \sigma^2 \left(\sum_{k=0}^\infty \max_n |\beta_{nk}|  \right)^2.
\end{equation*}
Now we are interested in bounding $\sum_{k=0}^\infty \max_n |\beta_{nk}|$. To do so, first recall that  $\beta_{nk} \coloneqq \sum_{i = 1}^{p_n}  a_{ni} \lambda_{ni}^k $ where $\lambda_{ni} \in \mathbb{C}$, $i \in [p_n]$ denote the roots of $g_n(z)$  and  $a_i = {\prod_{1 \leq j \leq p_n, ~ j \neq i} \left(1-\frac{\lambda_{nj}}{\lambda_{ni}}\right)^{-1} }$.
Let $\lambda_{\star} = \max_{i,n} |\lambda_{ni}|$ and $c_{\lambda_n} = \sum_{i=1}^{p_n} \left|{\prod_{1 \leq j \leq p, ~ j \neq i} \left(1-\frac{\lambda_{nj}}{\lambda_{ni}}\right)^{-1} }\right|$. Then, we can bound $\beta_{nk} $ as
\begin{align*} 
    |\beta_{nk}| &\leq \lambda_\star^k  \sum_{i = 1}^{p_n}  |a_i| \\ 
    &\leq c_{\lambda_n}  \lambda_\star^k.
\end{align*}
Thus, with $c_{\lambda}  = \max_n c_{\lambda_n}  $
\begin{align} 
    \sum_{k=0}^\infty \max_n|\beta_{nk}| \leq \frac{c_{\lambda}  }{ 1-\lambda_\star}. 
\end{align}
Hence, $X_n(t, K)$ is a sub-gaussian random vector with variance proxy $\sigma_x^2 \coloneqq \frac{ c^2_\lambda \sigma^2  }{(1-\lambda_\star)^2}$.
\end{proof}

\subsection{Proof of Lemma \ref{lem:op_norm_bound}}

\begin{proof}
Let $q = NT/L$.
We will find an upper bound on the operator norm by following two steps: (i) we will represent each $x_n(t) ~\forall n \in [N]$ as an $\MA{\infty}$ process; then, (ii) we will bound the operator norm of the stacked Page matrix of this $\MA{\infty}$ processes. 

\medskip 

\noindent \textbf{Step 1: $\MA{\infty}$ representation for $\AR{p}$.} 

As a direct application of Lemma \ref{lemma:ar_to_ma}, we can write $x_n(t)$ as, 

\begin{align*} 
    x_n(t) &= \sum_{k=0}^\infty   \beta_{nk} \eta_n(t-k),
\end{align*}
 with $\beta_{nk} \coloneqq \sum_{i = 1}^p  a_{ni} \lambda_{ni}^k $,  
 $a_{ni} \coloneqq  {\prod_{1 \leq j \leq p, ~ j \neq i} \left(1-\frac{\lambda_{nj}}{\lambda_{ni}}\right)^{-1} }$
\bigskip 

\noindent \textbf{Step 2: Bound the Page matrix for the $\MA{\infty}$ Page matrix.} 

First, let's consider, without loss of generality, the case when $t_0 =1$. That is, let's consider $\bZ_x \coloneqq \bZ_x(1)$.
To get the desired bound, we will first obtain a high probability bound on the Euclidean norm of $ \bZ_x^\top  \mathbf{v}$ for any vector $ \mathbf{v} \in \mathcal{S}^{L-1} \coloneqq \{ \mathbf{v} \in \mathbb{R}^L: \| \mathbf{v}\|_2 = 1\}$. Then, we will bound the operator norm using this and an $\varepsilon$-net argument.

Note that by step (1),  we have
\begin{equation*}\label{eq:ma_infty_repr}
    x_n(t) = \sum_{k=0}^\infty \beta_{nk}\eta_n({t - k}).
\end{equation*}
Now define $\bE_{i}$ as the analogous stacked Page matrix for $\eta_1(t), \dots, \eta_N(t)$ for $t\in \{i, \dots, i+T-1\}$. Observe that $\bE_i$ are simply identically distributed (but not independent) matrices of i.i.d. sub-gaussian random variables. Then we can write
\begin{equation} \label{eq:ar_sum_of_iid}
    \bZ_x = \sum_{k=0}^\infty \bE_{1 - k} \bB_k ,
\end{equation}
where $\bB_k \in \Rb^{q \mult q}$ is a diagonal matrix with the $i$-th diagonal $[\bB_k]_{ii} = \beta_{\lceil \frac{iN}{q} \rceil k}$.
Then
\begin{equation*}
    \enorm{\bZ_x^\top \bv} = \enorm{ \sum_{k=0}^\infty  \bB_k\bE^\top_{1 - k} \bv}.
\end{equation*}
It is easy to verify that $\bE^\top_i\bv$ is a \emph{sub-gaussian random vector} (see Definition \ref{def:sub_gaussian_vector}) with variance proxy $\sigma^2$, because for any $\bu \in \mathcal{S}^{q-1}$ we have (using $\bE$ as a shorthand for $\bE_i$)
\begin{align*}
    \bv^T \bE  \bu &= \sum_{i=1}^L \sum_{j=1}^q v_i u_j E_{ij}  \\
    &\sim \text{subG}\left( \sum_{i=1}^L \sum_{j=1}^q v_i^2 u_j^2 \sigma^2 \right) \\
    &\sim \text{subG}(\sigma^2).
\end{align*}
Using the above, $\bB_k\bE^\top_i\bv$ is  $\sim \text{subG}(\max_n |\beta_{nk}|^2 \sigma^2)$.
Since $\bZ_x\bv$ is a sum of sub-gaussian random vectors, it is also a sub-gaussian random vector with variance proxy $\sigma_x^2$ as we showed in the proof of Lemma \ref{lem:subg_vector}.

Using Lemma \ref{lemma:sub_gauss_v_norm}, we get, with probability at least $1 - \delta$,
\begin{equation*}
    \enorm{\bZ_x^\top \bv} \leq {4\sigma_x  } \sqrt{q} +{2\sigma_x } \sqrt{\log \left(\frac{1}{\delta}\right)}
\end{equation*}

Equivalently, for $t \geq 0$, it holds that
\begin{equation*}
    \prob{\enorm{\bZ_x\bv} -  {4\sigma_x  } \sqrt{q} > t} \leq \exp{\left(-\frac{t^2}{4 \sigma_x^2  }\right)}.
\end{equation*}

Now we have shown that for any \emph{fixed} $\mathbf{v} \in \mathcal{S}^{L-1}$, the quantity $\enorm{\bZ_x\bv}$ grows as $O\left(\sqrt{q}\right)$ with high probability. What remains is to apply the union bound over an $\varepsilon$-net to extend this to the \emph{maximum} over $\mathcal{S}^{L-1}$, i.e. the operator norm of $\bZ_x$.

Let $\Sigma$ be a maximal $1/2$-net of $\mathcal{S}^{L-1}$, i.e. a set of points in $\mathcal{S}^{L-1}$ spaced at least $1/2$ from each other such that no other point can be added without violating this property. Now consider $\bv_\star$ such that $\enorm{\bZ_x^\top\bv_\star} = \enorm{\bZ_x}$. Since $\Sigma$ is maximal, there is some $\bv \in \Sigma$ within $1/2$ of $\bv_\star$. Since
\begin{equation*}
    \enorm{\bZ_x^\top(\bv - \bv_\star)} \leq \enorm{\bZ_x} \enorm{\bv - \bv_\star} \leq \enorm{\bZ_x} / 2,
\end{equation*}
and
\begin{equation*}
    \enorm{\bZ_x^\top(\bv - \bv_\star)} \geq \enorm{\bZ^\top_x \bv_\star} - \enorm{\bZ^\top_x \bv} = \enorm{\bZ_x} - \enorm{\bZ^\top_x \bv},
\end{equation*}
we have that
\begin{equation*}
    \enorm{\bZ^\top_x \bv} \geq \enorm{\bZ_x}/2,
\end{equation*}
i.e. taking the maximum over $\Sigma$ is equivalent to doing so over $\mathcal{S}^{L-1}$ up to a factor of two. Using the fact that $|\Sigma| \leq C^L$ for some absolute constant $C$, the fact that $L<q$, and using a sufficiently large absolute constant $c>0$ we have
\begin{align*}
    \prob{\enorm{\bZ_x} >  {2c\sigma_x}\sqrt{q}} &\leq \sum_{\bv \in {\Sigma}} \prob{\enorm{\bZ_x\bv} >  {c\sigma_x  }\sqrt{q}} \\
    &\leq 2 C^L \exp{(-c'q)} \\
    &= \exp{\left(-c'' q\right) }.
\end{align*}
where $c''>0$ and $c'>0$ are absolute constants. Recalling that $q=NT/L$ concludes the proof for the operator norm.

Finally, we show that each row $[\bZ_x(t_0)]_{i \cdot}$ in the stacked page matrix is a sub-gaussian vector with variance proxy $\sigma_x^2$.
Consider, without loss of generality, the first row $[\bZ_x]_{1\cdot}$. Using \eqref{eq:ar_sum_of_iid}, we have for any $\bv \in \mathcal{S}^{L-1}$ 
\begin{align*}
    \bv^\top [\bZ_x]_{1 \cdot} =  \sum_{k=0}^\infty \bv^\top [\bE_{1 - k} \bB_k]_{1 \cdot} ,
\end{align*}
note that $ \bv^\top  [\bE_{1 - k} \bB_k]_{1 \cdot}$ is $\sim \subG(\max_n \beta_{nk}^2\sigma^2)$, and hence, using the same argument for the sum of dependent sub-gaussian random variables used in the proof above, we have $\bv^\top [\bZ_x]_{1 \cdot}\sim  \subG(\sigma_x^2)$.
\end{proof}
\newpage
\section{Experiment details} \label{app:experiments}

In Appendix \ref{app:datasets}, we detail the datasets used. In Appendix \ref{app:exp_algorithms}, we explain the implementations and hyperparameter choices for our method and benchmark algorithms. In Appendix \ref{app:parameter_selection}, we describe our hyperparameter tuning strategy.

\subsection{Datasets} \label{app:datasets}

In the model estimation experiments, we  generate a synthetic multivariate time series  ($N=10$) that is a mixture of both harmonics and an $\AR{2}$ process.
The harmonics are generated as follow: we first  generate $R=3$ fundamental time series of the form $g_k(t) = \sin{(\omega_k t + \phi_k)}$ for $k \in [R]$, where $\omega_k$ and $\phi_k$  are uniformly randomly sampled on the ranges $[2\pi/100, 2\pi/50]$, $[0, 2\pi]$ respectively. We then sample the $N =10$ series using $f_k(t) = \sum_{i=1}^R c_{ki} g_k(t)$ for $k \in [N]$, where the mixture coefficients $c_{ki}$ are independent standard normal random variables.
The \gls{ar} process is stationary, and its parameter is chosen such that $\lambda^* = \max_{n,i} |\lambda_{ni}|$ is set to one of $\{ 0.3, 0.6, 0.95\}$. The noise of the process has a variance $\sigma^2 = 0.2$. 

In the forecasting experiments, we evaluate our method and benchmark algorithms on three real-world datasets and one synthetic dataset. The preprocessing steps and setup of each dataset are described below.

\textbf{Traffic Dataset.} This public dataset obtained from the UCI repository shows the occupancy rate of traffic lanes in San Francisco. The data is sampled every 15 minutes but to be consistent with previous work, we aggregate the data into hourly data and use the first 10248 time-points for training, the next 48 points for validation, and another 48 points for testing in the forecasting experiments. Specifically, in our testing period, we do 1-hour ahead forecasts for the next 48 hours.

\textbf{Electricity Dataset.} This is a public dataset obtained from the UCI repository which shows the 15-minutes electricity load of 370 households. We aggregate the data into hourly intervals and use the first 25824 time-points for training, the next 48 points for validation, and another 48 points for testing in the forecasting experiments. Specifically, in our testing period, we do 1-hour ahead forecasts for the next 48 hours.

\textbf{Exchange Dataset.} This is a dataset containing the daily exchange rates of eight foreign currencies, including those of Australia, the UK, Canada, Switzerland, China, Japan, New Zealand and Singapore, between 1990 and 2016. We standardize each time series to have zero mean and unit variance since the range of typical exchange rates varies greatly across the currencies. We use the first 7528 time-points for training, the next 30 for validation, and the final 30 for testing. All forecasts are made 1-day ahead.

\textbf{Synthetic Dataset.} We generate $R$ fundamental time series of the form $g_k(t) = \sin{(\omega_k t + \phi_k)} + m_k t$ for $k \in [R]$, where $\omega_k$, $\phi_k$, and $m_k$ are uniformly randomly sampled on the ranges $[2\pi/100, 2\pi/10]$, $[0, 2\pi]$, and $[-5 \times 10^{-4}, 5 \times 10^{-4}]$ respectively. We then sample $N$ mixtures of the form $f_k(t) = \sum_{i=1}^R c_{ki} g_k(t)$ for $k \in [N]$, where the mixture coefficients $c_{ki}$ are independent standard normals. We then inject $\AR{1}$ autoregressive noise to each series to produce $y_k(t) = f_k(t) + x_k(t)$ for $k \in [N]$, where $\{x_k(t)\}_{k \in [N]}$ are independent processes, each with a fixed autoregressive coefficient $\alpha = -0.5$ and noise variance $\sigma^2 = 1$. We fix the coefficient to ensure that each time series has significantly autocorrelated noise that can be exploited to improve predictions.

\begin{table}[h]
  \footnotesize
  \caption{Dataset and training/validation/test split details.}
  \label{sample-table}
  \centering
  \begin{tabularx}{\textwidth}{XXXXXXXXX}
    \toprule
    Dataset & No. time series & Training period & Validation period & Test period \\
    \midrule
    Traffic & 963 & 1 to 10248 & 10249 to 10296 & 10296 to 10344 \\
    Electricity & 370 & 1 to 25824 & 25825 to 25872 & 25872 to 25919 \\
    Exchange & 8 & 1 to 7528 & 7529 to 7558 & 7559 to 7588 \\
    Synthetic & 25 & 1 to 10000 & 10001 to 10025 & 10026 to 10050 \\
    \bottomrule
  \end{tabularx}
\end{table}

\subsection{Algorithms} \label{app:exp_algorithms}

In this section, we discuss the implementations used and hyperparameter tuning details for each algorithm that was evaluated.

\textbf{SAMoSSA \& mSSA.} Since mSSA is a special case of SAMoSSA where the second stage -- fitting autoregressive models -- is skipped, we use one in-house implementation to evaluate both algorithms. The relevant hyperparameters are as follows:
\begin{enumerate}
    \item \emph{The number of retained singular values, $k$.} This hyperparameter is relevant for both algorithms. We select $k$ using one of three methods: (i) the data-driven procedure suggested in \cite{gavish2014optimal} which computes a threshold based on the shape and median singular value of the page matrix; (ii) picking $k$ as the minimum number of singular values required to capture $90\%$ of the spectral energy of the page matrix; (iii) picking a fixed low rank, specifically $k = 5$.
    \item \emph{The shape parameter of the stacked Page matrix.} This is the ratio between the number of columns and rows of the stacked Page matrix used to carry out the matrix estimation procedure. While our theoretical analysis fixes this value to $1$ (recall that we set $L = \sqrt{NT}$), in practice we have found that a slightly wider Page matrix can improve performance. This parameter is chosen from $\{1, 3, 5\}$.
    \item \emph{The number of autoregressive lag coefficients, $p$.} This hyperparameter is only relevant for SAMoSSA. While each time series does not necessarily need to use the same value, we enforce this to be the case for computational efficiency. We choose $p \in \{0, 1, 2, 3\}$, i.e., the model is allowed to not fit a residual model if it brings no apparent benefit, corresponding to the case of $p = 0$.
\end{enumerate}

\textbf{Prophet.} We use Prophet’s Python library with the parameters selected using a grid search of the following parameters as suggested in \cite{prophet}:
\begin{enumerate}
    \item \emph{Changepoint prior scale.} This parameter determines how much the trend changes at the detected trend changepoints. We choose this parameter from $\{0.001, 0.05, 0.2\}$.
    \item \emph{Seasonality prior scale.} This parameter controls the magnitude of the seasonality. We choose this parameter from $\{0.01, 10\}$.
    \item \emph{Seasonality Mode.} We choose between an ``additive'' and ``multiplicative'' seasonality term.
\end{enumerate}
The parameters for each time series are chosen independently.

\textbf{ARIMA.} We used the ARIMA implementation of the Python library \verb|statsmodels| \cite{seabold2010statsmodels}. The hyperparameters are grid searched independently for each time series, and are as follows:
\begin{enumerate}
    \item \emph{Autoregressive order.} This is the number of autoregressive lag coefficients fitted, chosen from $\{1, 2, 3\}$.
    \item \emph{Differencing order.} This denotes the number of times the time series is differenced before a model is fitted, and we choose between 0 (no differencing) and 1.
    \item \emph{Moving average order.} This is the number of moving average lag coefficients, chosen from $\{1, 2, 3\}$.
\end{enumerate}

\textbf{LSTM.} We use Keras implementation~\cite{Keras}. We perform a grid search on the number of layers $\{2, 3, 4\}$.

\textbf{DeepAR.} We use the implementation provided by the GluonTS package~\cite{alexandrov2019gluonts}. We use the default parameters. 

\subsection{Parameter Selection} \label{app:parameter_selection}

We tune hyperparameters for all algorithms evaluated using rolling cross validation. During the validation step, each model is fitted only once on the training set; then, it repeatedly makes one-step-ahead forecasts and is afterwards provided with the realized value, in order to use the updated data to make the next forecast. During test time, the model is fitted on the training \emph{and} validation sets, and similarly repeatedly makes one-step-ahead forecasts with the fitted parameters. For the models that choose one set of hyperparameters for all $N$ time series, the mean $R^2$ score over all time series on the validation set predictions is the metric of choice.

\newpage
\section{Proof of Theorem \ref{thm:rec_error}}
\label{app:rec_proof}
In this section, we provide the proof for Theorem \ref{thm:rec_error}.
While the analysis generally follows the argument for the imputation error in \cite{agarwal2022multivariate}, we adapt it for our different assumptions and metric of interest.  
\textbf{Metric}. First, recall that our metric of interest is 
\begin{align}
   \recer(N, T, n) = \frac{1}{T} \sum_{t=1}^T (\hf_n(t) - f_n(t))^2.
\end{align}

Recall that $\bhZ_f$ is the stacked Page matrix of $\hf_1, \dots, \hf_{N}$, and $\bZ_f$ is the stacked Page matrix of $f_1, \dots, f_{N}$. 
To bound this error, we first establish a deterministic bound through a general lemma for Hard Singular Value Thresholding (HSVT).

\subsection{Deterministic Bound}
We state the following result, a more general version of which is stated in \cite{agarwal2022multivariate}. 
\begin{lemma} [\cite{agarwal2022multivariate}]\label{lemma:column_error}
For $k \geq 1$, let $\bY = \bM + \bE \in \Rb^{q \mult p}$ with $\text{rank}(\bM) = k$. Let 
${\bU_k}{\bSigma_k}{\bV_k}^\top$ and ${\bhU_k}{\bhSigma_k}{\bhV_k}^\top$  denote the top k singular components  of the SVD of  
$\bM$ and $\bY$
respectively. 
Then, the HSVT estimate $\bhM ={\bhU_k}{\bhSigma_k}{\bhV_k}^\top$ is such that for all $j \in [q]$, 
\begin{align*}
\| \bhM_{j \cdot}^\top - \bM_{j\cdot}^\top\|_2^2 
& \leq  2\frac{\| \bE \|^2_2}{\sigma_k(\bM)^2} \left( \Big\| \bE_{j \cdot}^\top\Big\|_2^2	+ \Big\|\bM_{j \cdot}^\top 
 \Big\|_2^2 \right)
+ 2 \Big\|  {\bV_k}{\bV_k}^\top \bE_{j \cdot}^\top  \Big\|_2^2,  	
\end{align*}
\end{lemma}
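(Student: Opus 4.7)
}

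The plan is to exploit two structural identities for HSVT. First, because $\bhM = \bhU_k \bhSigma_k \bhV_k^\top$ is the best rank-$k$ approximation of $\bY$, we have $\bhM = \bY \bhV_k \bhV_k^\top$, i.e.\ HSVT is simply the projection of the rows of $\bY$ onto the top-$k$ right singular subspace of $\bY$. Second, since $\mathrm{rank}(\bM)=k$ and $\bV_k$ spans the row space of $\bM$, we have $\bM = \bM \bV_k \bV_k^\top$, so each row $\bM_{j\cdot}^\top$ is fixed by $\bV_k \bV_k^\top$. Combining these on row $j$ gives the decomposition
$$\bhM_{j\cdot}^\top - \bM_{j\cdot}^\top = \bhV_k \bhV_k^\top (\bM_{j\cdot}^\top + \bE_{j\cdot}^\top) - \bV_k \bV_k^\top \bM_{j\cdot}^\top = \bigl(\bhV_k \bhV_k^\top - \bV_k \bV_k^\top\bigr)\bigl(\bM_{j\cdot}^\top + \bE_{j\cdot}^\top\bigr) + \bV_k \bV_k^\top \bE_{j\cdot}^\top,$$
which is the whole engine of the proof: the first piece picks up the row data filtered through the \emph{gap} between the estimated and true right singular subspaces, while the second piece is the irreducible noise component that already lives in the correct subspace and ends up as the second term of the lemma.

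Next, I would square, apply $\|a+b\|_2^2 \le 2\|a\|_2^2 + 2\|b\|_2^2$, and use the operator-norm bound $\|(\bhV_k \bhV_k^\top - \bV_k \bV_k^\top)x\|_2 \le \|\bhV_k \bhV_k^\top - \bV_k \bV_k^\top\|_{\op}\,\|x\|_2$, then expand $\|\bM_{j\cdot}^\top + \bE_{j\cdot}^\top\|_2^2 \le 2(\|\bM_{j\cdot}^\top\|_2^2 + \|\bE_{j\cdot}^\top\|_2^2)$ (with the overall constant absorbed into the ``$2$'' displayed in the statement). This reduces the problem to a single quantity, the subspace perturbation $\|\bhV_k \bhV_k^\top - \bV_k \bV_k^\top\|_{\op}$, and the second term of the lemma appears on the nose.

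To bound the subspace perturbation I would invoke a Wedin/Davis--Kahan sin-$\Theta$ argument tailored to the exact low-rank setting. Because $\mathrm{rank}(\bM) = k$, one has $\sigma_{k+1}(\bM) = 0$, and Weyl's inequality gives $\sigma_{k+1}(\bY) \le \|\bE\|_{\op}$. A clean way to close the loop is to note that $\bM(I - \bhV_k \bhV_k^\top) = (\bY - \bE)(I - \bhV_k \bhV_k^\top)$, whose operator norm is controlled by $\sigma_{k+1}(\bY) + \|\bE\|_{\op} \le 2\|\bE\|_{\op}$, and then to lower bound the left-hand side by $\sigma_k(\bM)\|\bV_k \bV_k^\top(I - \bhV_k \bhV_k^\top)\|_{\op}$. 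Since both $\bhV_k \bhV_k^\top$ and $\bV_k \bV_k^\top$ are rank-$k$ orthogonal projectors, $\|\bhV_k \bhV_k^\top - \bV_k \bV_k^\top\|_{\op}$ equals the sin-$\Theta$ norm, which is controlled by the same quantity. This produces a bound of the form $\|\bhV_k \bhV_k^\top - \bV_k \bV_k^\top\|_{\op}^2 \le c\,\|\bE\|_{\op}^2/\sigma_k(\bM)^2$, which yields the first term of the lemma.

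The main obstacle I anticipate is the subspace perturbation step: since we make no separation assumption between $\sigma_k(\bM)$ and $\|\bE\|_{\op}$, the standard Wedin bound using the gap $\sigma_k(\bM) - \sigma_{k+1}(\bY)$ can become vacuous, and one must instead exploit the exact rank-$k$ structure of $\bM$ (via $\sigma_{k+1}(\bM)=0$ and Weyl) to recover a clean bound scaling as $\|\bE\|_{\op}/\sigma_k(\bM)$. Once that perturbation bound is in hand, the rest of the proof is essentially bookkeeping of absolute constants.
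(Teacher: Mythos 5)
Your proposal follows essentially the same route as the paper's proof: both identify $\bhM_{j\cdot}^\top = \bhV_k\bhV_k^\top \bY_{j\cdot}^\top$ and $\bM_{j\cdot}^\top = \bV_k\bV_k^\top\bM_{j\cdot}^\top$, reduce everything to the projector difference $\|\bhV_k\bhV_k^\top - \bV_k\bV_k^\top\|_2$ plus the residual term $\|\bV_k\bV_k^\top\bE_{j\cdot}^\top\|_2^2$, and control the former by a Wedin-type bound $\lesssim \|\bE\|_2/\sigma_k(\bM)$ that exploits $\mathrm{rank}(\bM)=k$. The only difference is bookkeeping: the paper splits $\bhM_{j\cdot}^\top - \bM_{j\cdot}^\top$ into two \emph{orthogonal} pieces and uses the Pythagorean theorem, which recovers the stated constant $2$, whereas your single decomposition followed by repeated applications of $\|a+b\|_2^2 \le 2\|a\|_2^2+2\|b\|_2^2$ (together with the extra factor of two in your hand-rolled Wedin step) yields the same bound only up to a larger absolute constant --- harmless for every downstream use of the lemma.
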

\begin{proof}
First, note that 
 \begin{align*}
	\bhM_{j \cdot}^\top - \bM_{j \cdot}^\top 
	& = \Big( {\bhV_k}{\bhV_k}^\top \bY_{j \cdot}^\top -  {\bhV_k}{\bhV_k}^\top \bM_{j \cdot}^\top \Big) + \Big( {\bhV_k}{\bhV_k}^\top \bM_{j \cdot}^\top - \bM_{j \cdot}^\top \Big), 
\end{align*}

where $\bhM_{j \cdot}^\top = {\bhV_k}{\bhV_k}^\top \bY_{j \cdot}^\top $ by definition. Note that the vector $\Big( {\bhV_k}{\bhV_k}^\top \bM_{j \cdot}^\top - \bM_{j \cdot}^\top \Big)$ is in the span of a subspace orthogonal to $ {\bhV_k}{\bhV_k}^\top $, and hence we have by the Pythagorean theorem, 

\begin{align}
\Big\| \bhM_{j \cdot}^\top - \bM_{j \cdot}^\top \Big\|_2^2
& = \Big\| {\bhV_k}{\bhV_k}^\top \bY_{j \cdot}^\top -  {\bhV_k}{\bhV_k}^\top \bM_{j \cdot}^\top  \Big\|_2^2		
+ \Big\|  {\bhV_k}{\bhV_k}^\top \bM_{j \cdot}^\top - \bM_{j \cdot}^\top 
 \Big\|_2^2	\nonumber \\
 & = \Big\| {\bhV_k}{\bhV_k}^\top \bE_{j \cdot}^\top \Big\|_2^2		
+ \Big\|  {\bhV_k}{\bhV_k}^\top \bM_{j \cdot}^\top - \bM_{j \cdot}^\top 
 \Big\|_2^2. \label{eq:column_error.term2}
\end{align}

The first term can be futher decomposed as
\begin{align}
    \Big\| {\bhV_k}{\bhV_k}^\top \bE_{j \cdot}^\top \Big\|_2^2	 &\leq  2\Big\| {\bhV_k}{\bhV_k}^\top \bE_{j \cdot}^\top - {\bV_k}{\bV_k}^\top \bE_{j \cdot}^\top  \Big\|_2^2	
    + 2\Big\|  {\bV_k}{\bV_k}^\top \bE_{j \cdot}^\top  \Big\|_2^2	\nonumber
    \\
    &\leq  2\Big\| {\bhV_k}{\bhV_k}^\top - {\bV_k}{\bV_k}^\top  \Big\|_2^2 \Big\| \bE_{j \cdot}^\top\Big\|_2^2	
    + 2 \Big\|  {\bV_k}{\bV_k}^\top \bE_{j \cdot}^\top  \Big\|_2^2	\label{eq:tricky}
\end{align}
Next, we bound the first term on the right hand side of \eqref{eq:tricky}.  
To that end, by Wedin $\sin \Theta$ Theorem {(see \cite{davis1970rotation, wedin1972perturbation})} and recalling $\text{rank}(\bM) = k$, 
\begin{align*}
\big\| {\bhV_k}{\bhV_k}^\top - {\bV_k}{\bV_k}^\top  \big\|_2 
&\leq  \frac{\| \bE \|_2}{\sigma_k(\bM)} \end{align*}

Then it follows that
\begin{align}\label{eq:hsvt.int.2} 
\Big\| {\bhV_k}{\bhV_k}^\top \bE_{j \cdot}^\top \Big\|_2^2	 \leq 
    2\frac{\| \bE \|^2_2}{\sigma_k(\bM)^2}  \Big\| \bE_{j \cdot}^\top\Big\|_2^2	
    + 2 \Big\|  {\bV_k}{\bV_k}^\top \bE_{j \cdot}^\top  \Big\|_2^2	  
\end{align}
Then, we turn to bounding the second term from \eqref{eq:column_error.term2}. 
\begin{align}
    \Big\|  {\bhV_k}{\bhV_k}^\top \bM_{j \cdot}^\top - \bM_{j \cdot}^\top 
 \Big\|_2^2  &= \Big\|  {\bhV_k}{\bhV_k}^\top \bM_{j \cdot}^\top -  {\bV_k}{\bV_k}^\top \bM_{j \cdot}^\top 
 \Big\|_2^2  \nonumber \\
 &\leq  \Big\| {\bhV_k}{\bhV_k}^\top - {\bV_k}{\bV_k}^\top  \Big\|_2^2  \Big\|\bM_{j \cdot}^\top 
 \Big\|_2^2 \nonumber \\
 &\leq  \frac{\| \bE \|^2_2}{\sigma_k(\bM)^2} \Big\|\bM_{j \cdot}^\top 
 \Big\|_2^2. \label{eq:second_term_HSVT}
\end{align}

Using \eqref{eq:column_error.term2}, \eqref{eq:hsvt.int.2}, and \eqref{eq:second_term_HSVT}, we have,
\begin{align} \label{eq:determinstic bound}
  \Big\| \bhM_{j \cdot}^\top - \bM_{j \cdot}^\top \Big\|_2^2
\leq 
2\frac{\| \bE \|^2_2}{\sigma_k(\bM)^2} \left( \Big\| \bE_{j \cdot}^\top\Big\|_2^2	+ \Big\|\bM_{j \cdot}^\top 
 \Big\|_2^2 \right)
+ 2 \Big\|  {\bV_k}{\bV_k}^\top \bE_{j \cdot}^\top  \Big\|_2^2,  	
\end{align}
which completes the proof. 
\end{proof}

Now, Let's apply Lemma \ref{lemma:column_error} to our setting. Let $\bZ_x$ be the stacked page matrix for $x_1(t), \dots, x_n(t)$. Using  Lemma \ref{lemma:column_error}, and setting $\bY = \bZ^\top_y$, $\bM = \bZ^\top_f$, $\bE = \bZ^\top_x$, and reusing ${\bU_k}{\bSigma_k}{\bV_k}^\top$ to represent the top $k$ singular components of the SVD of  
$\bZ_f$, results in the following deterministic bound for HSVT with rank set to $k$,
\begin{align} \label{eq:deterministic_bound_case}
  \Big\| [{\bhZ_f}]_{\cdot j } - [{\bZ_f}]_{\cdot j} \Big\|_2^2
\leq 
2\frac{\| \bZ_x \|^2_2}{\sigma_k(\bZ_f)^2} \left( \Big\| [\bZ_x]_{\cdot j}\Big\|_2^2	+ \Big\|[\bZ_f]_{\cdot j}
 \Big\|_2^2 \right)
+ 2 \Big\|  {\bU_k}{\bU_k}^\top [\bZ_x]_{\cdot j }  \Big\|_2^2,  	
\end{align}

\subsection{Deterministic To High-Probability} \label{appendix:hp}
Next, we convert the bound  in  \eqref{eq:deterministic_bound_case} to a bound in expectation (as well as one in high-probability) for  $\| \bhZ_f - \bZ_f\|_{2,\infty}$.
In particular, we establish 
\begin{theorem} \label{thm:hsvt.l2inf}
Let  assumptions \ref{assm:spatial}, \ref{assm:temporal}, \ref{assm:stationary_process},  and \ref{assm:subgausian_noise}  hold.  Let $\sigma_x = \frac{c_\lambda \sigma }{(1-\lambda_\star)}$, where $\lambda_{\star} = \max_{i,n} |\lambda_{ni}|$ and $c_{\lambda} = \max_n \sum_{i=1}^{p_n} \left|{\prod_{1 \leq j \leq p, ~ j \neq i} \left(1-\frac{\lambda_{nj}}{\lambda_{ni}}\right)^{-1} }\right| $. 
Then, the HSVT estimate $\bhZ_f$ with parameter $k$ is such that
\begin{align}\label{eq:thm.hsvt.l2inf}
\max_{j\in [q]} \frac{1}{L} \Big\| [{\bhZ_f}]_{\cdot j } - [{\bZ_f}]_{\cdot j} \Big\|_2^2
&\leq 
 \frac{C\sigma_x^2  (NT)^2 }{L^3 \sigma_k(\bZ_f)^2} \left( {\sigma_x^2 }	+ f_{\max}^2 \right) 
+  \frac{C\sigma_x^2 k \log(NT/L) }{L}, 
\end{align}
with probability of at least $ 1 - \frac{c}{(NT)^{11}}$.
\end{theorem}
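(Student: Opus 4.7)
The plan is to take the deterministic, column-wise bound \eqref{eq:deterministic_bound_case} (already obtained from Lemma \ref{lemma:column_error} with $\bY=\bZ_y^\top$, $\bM=\bZ_f^\top$, $\bE=\bZ_x^\top$) and control each of the three random quantities on the right-hand side with high probability, then take a union bound over the $q = NT/L$ columns. The three quantities to handle are (i) the operator norm $\opnorm{\bZ_x}^2$, (ii) the per-column noise norm $\enorm{[\bZ_x]_{\cdot j}}^2$, and (iii) the projected noise $\enorm{\bU_k \bU_k^\top [\bZ_x]_{\cdot j}}^2$. The deterministic terms $\enorm{[\bZ_f]_{\cdot j}}^2 \leq L f_{\max}^2$ (since $f_n(t)$ is bounded) and $\sigma_k(\bZ_f)$ require no randomness to control.

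For (i), I invoke Lemma \ref{lem:op_norm_bound} directly, obtaining $\opnorm{\bZ_x}^2 \leq C \sigma_x^2 \, NT/L$ with probability at least $1 - \exp(-c NT/L)$. For (ii), each column of $\bZ_x$ consists of $L$ consecutive entries of a single stationary AR process; by Lemma \ref{lem:subg_vector} it is a sub-gaussian vector in $\Rb^L$ with variance proxy $\sigma_x^2$, so the standard sub-gaussian vector norm bound (as used in the proof of Lemma \ref{lem:op_norm_bound}) yields $\enorm{[\bZ_x]_{\cdot j}}^2 \leq C \sigma_x^2\bigl(L + \log(1/\delta)\bigr)$ with probability $1-\delta$. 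For (iii), I use the key fact that $\bU_k$ is the left-singular-vector basis of the signal $\bZ_f$ and is therefore \emph{deterministic}. Since $\bU_k$ has orthonormal columns, for any $\mathbf{w} \in \mathcal{S}^{k-1}$ one has $\mathbf{w}^\top \bU_k^\top [\bZ_x]_{\cdot j} = (\bU_k \mathbf{w})^\top [\bZ_x]_{\cdot j}$ with $\bU_k \mathbf{w}$ a unit vector in $\Rb^L$. Hence $\bU_k^\top [\bZ_x]_{\cdot j} \in \Rb^k$ is sub-gaussian with variance proxy $\sigma_x^2$ \emph{in dimension $k$ rather than $L$}, and the sub-gaussian norm bound gives $\enorm{\bU_k \bU_k^\top [\bZ_x]_{\cdot j}}^2 = \enorm{\bU_k^\top [\bZ_x]_{\cdot j}}^2 \leq C \sigma_x^2 \bigl(k + \log(1/\delta)\bigr)$ with probability $1-\delta$.

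Setting $\delta = 1/(NT)^{12}$ and taking a union bound over the $q = NT/L \leq NT$ columns for quantities (ii) and (iii) gives $\log(1/\delta) = O(\log(NT))$ and overall failure probability at most $c/(NT)^{11}$. Plugging the three high-probability bounds together with $\enorm{[\bZ_f]_{\cdot j}}^2 \leq L f_{\max}^2$ into \eqref{eq:deterministic_bound_case} and dividing by $L$ produces two terms: the first, of the form $\frac{\opnorm{\bZ_x}^2}{L \sigma_k(\bZ_f)^2}\bigl(\enorm{[\bZ_x]_{\cdot j}}^2 + \enorm{[\bZ_f]_{\cdot j}}^2\bigr)$, absorbs into the $\frac{\sigma_x^2 (NT)^2}{L^3 \sigma_k(\bZ_f)^2}(\sigma_x^2 + f_{\max}^2)$ term after bounding $\enorm{[\bZ_x]_{\cdot j}}^2 + \enorm{[\bZ_f]_{\cdot j}}^2 \lesssim L(\sigma_x^2 + f_{\max}^2)$ (the $\log(NT)$ addend from (ii) is dominated since $L \gg \log(NT)$); the second, from the projection term, yields $\frac{C \sigma_x^2 k \log(NT/L)}{L}$, exactly as stated.

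The main obstacle, and the point that must be executed with care, is step (iii): one should \emph{not} simply bound $\enorm{\bU_k \bU_k^\top [\bZ_x]_{\cdot j}}^2 \leq \enorm{[\bZ_x]_{\cdot j}}^2$, as that would lose the crucial $k/L$ versus $1$ improvement and degrade the overall forecast rate. The correct scaling requires exploiting the orthonormality of $\bU_k$ \emph{together with} its independence of the noise — which is precisely why the deterministic decomposition in Lemma \ref{lemma:column_error} is stated in terms of the signal's singular basis $\bU_k$ rather than the data's $\widehat{\bU}_k$. Everything else is a careful but routine bookkeeping of constants, union-bound exponents, and the subtleties of the $\log(NT/L)$ factor surviving the union bound.
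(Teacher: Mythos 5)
Your proposal is correct and follows essentially the same route as the paper's proof: the deterministic column-wise bound from Lemma \ref{lemma:column_error}, the operator-norm control via Lemma \ref{lem:op_norm_bound}, the crucial observation that $\bU_k^\top [\bZ_x]_{\cdot j}$ is a $k$-dimensional sub-gaussian vector because $\bU_k$ is the \emph{deterministic} signal basis (the paper's event $E_3$), and a union bound over the $q = NT/L$ columns. The only cosmetic difference is that the paper bounds $\enorm{[\bZ_x]_{\cdot j}}$ crudely by $\opnorm{\bZ_x} \lesssim \sigma_x\sqrt{q}$ rather than by your tighter per-column bound $\sigma_x\sqrt{L + \log(1/\delta)}$; both suffice for the stated rate.
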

\begin{proof}{} 
We start by identifying certain high probability events. 
Subsequently, using these events and \eqref{eq:deterministic_bound_case}, we will conclude the proof.

\noindent \textbf{High Probability Events.}  
Let $q \coloneqq NT/L$ be the number of columns in the stacked page matrix $\bhZ_f$, and let $C>0$ be some positive absolute constant. Define 
\begin{align*}
	%
	%
     \\ E_1&:= \Big\{ \norm{\bZ_x}_2 \le {C\sigma_x   \sqrt{q} }  \Big \}, 
     \\ E_2&:= \Big\{ \norm{\bZ_x}_{\infty, 2}, \norm{\bZ_x}_{2, \infty} \le {C\sigma_x  \sqrt{q} }  \Big \}, 
     \\ E_3&:= \Big\{\max_{j \in [q]} \norm{ {\bU_k}{\bU_k}^\top [\bZ_x]_{\cdot j }}_2^2 \le  {C\sigma_x^2 k \log(q) } \Big \}, 
\end{align*}

\begin{lemma} \label{lemma:prelims}
For some positive constant $c_1 > 0$ and $C > 0$ large enough in definitions of $E_1, E_2,$ and $ E_3$,
\begin{align*}
	\Pb(E_1) &\ge 1 - 2 e^{-c_1q}, 
	\\ \Pb(E_2) &\ge 1 - 2 e^{-c_1q},  
	\\ \Pb(E_3) &\ge 1 - \frac{c_1}{(NT)^{11}}.  
\end{align*} 
\end{lemma}
\begin{proof}{} 
We bound the probability of events above below.

\medskip
\noindent {\bf Bounding $\bE_1$.}
This is an immediate consequence of Lemma 
\ref{lem:op_norm_bound}.

{\bf Bounding $\bE_2$.} 
Recall that we assume $L \le q$.
Observe that for any matrix $A \in \Rb^{L \times q}$, $\| A\|_{\infty, 2}, \ \| A\|_{2, \infty} \leq \|A\|_2$.
Thus using the argument to bound $\bE_1$, concludes the proof. \\

\vspace{2mm}
\noindent
{\bf Bounding $\bE_3$.} Let $u_1, \dots, u_k$ be the orthonormal basis for $\bU_k$. Then, consider for $j \in [q]$,
\begin{align*}
\norm{ {\bU_k}{\bU_k}^\top [\bZ_x]_{\cdot j }}_2^2 
	 = \sum^{k}_{i=1} \norm{ u_i  u^\top_i  [\bZ_x]_{\cdot j }}^2_2 
	 \le \sum^{k}_{i=1} \Big( u^\top_i  [\bZ_x]_{\cdot j } \Big)^2 ~=~\sum_{i=1}^k Z_i^2,
\end{align*}
where $Z_i = u^\top_i   [\bZ_x]_{\cdot j }  $. 
{As shown in Lemma \ref{lem:subg_vector}, $[\bZ_x]_{\cdot j }$ is a sub-gaussian vector with variance proxy $\sigma^2_x$.}
Using Lemma \ref{lemma:norm_sub_gaussian_dep}, we have

\begin{align*}
\Pb\Big( \sum_{i=1}^k Z_i^2 > t \Big) & \leq c\exp\left(-\frac{t}{16k{\sigma_x^2}} \right).
\end{align*}
Therefore, for choice of $t = C \sigma_x^2 k \log(q)$ with large enough constant $C > 368 $, we have, 
\begin{align*}
\Pb\Big( \sum_{i=1}^k Z_i^2 > C \sigma_x^2 k \log(q)  \Big) & \leq \frac{c_1}{q^{23}}.
\end{align*}
Recalling that $L \leq q$,
and taking a union bound over all $j \in [q]$, we have that 
\begin{align*}
\Pb\Big(E_4^c\Big) & \leq \frac{c_1}{(qL)^{11}}.
\end{align*}
\end{proof}
The following is an immediate corollary of the above stated bounds. 

\begin{cor} \label{cor:prelims_2}
Let $E := E_1 \cap E_2 \cap E_3$. 
Then,
\begin{align}
	\Pb(E^c) &\le \frac{C_1}{(NT)^{11}},
\end{align}
where $C_1$ is an absolute positive constant. 
\end{cor}

Thus, under event $E$, and using \eqref{eq:deterministic_bound_case}, we have, with probability $1- \frac{c}{(NT)^{10}}$, 
\begin{align*}
      \max_{j\in [q]} \Big\| [{\bhZ_f}]_{\cdot j } - [{\bZ_f}]_{\cdot j} \Big\|_2^2
&\leq 
 \frac{C\sigma_x^2 {q} }{ \sigma_k(\bZ_f)^2} \left( {q \sigma_x^2   }	+ q f_{\max}^2  \right) \\
&+  {C\sigma_x^2 k \log(q) }   
\end{align*}
Recall that $q = NT/L$ then, 

\begin{align} \label{eq:bound_l2infty_norm}
      \max_{j\in [q]} \frac{1}{L} \Big\| [{\bhZ_f}]_{\cdot j } - [{\bZ_f}]_{\cdot j} \Big\|_2^2
&\leq 
 \frac{C\sigma_x^2  (NT)^2 }{L^3 \sigma_k(\bZ_f)^2} \left( {\sigma_x^2 }	+ f_{\max}^2  \right) 
+  \frac{C\sigma_x^2 k \log(NT/L) }{L}  
\end{align}

This completes the proof of Theorem \ref{thm:hsvt.l2inf}.
Finally, now we are ready to bound $\recer(N, T)$. First, let $\Omega_n = \{ \frac{(n-1)T}{L} +1, \dots, \frac{nT}{L} \}$ be the set of columns in the stacked Page matrix $\bZ_f$ that belongs to the $n$-th time series $f_n(t)$.    Note that 
\begin{align*}
\frac{1}{T} \sum_{t=1}^T (\hf_n(t) - f_n(t))^2  &= 
\frac{1}{T} \sum_{j \in {\Omega_n}}  \Big\| [{\bhZ_f}]_{\cdot j } - [{\bZ_f}]_{\cdot j} \Big\|_2^2 \\
&\leq 
 \frac{1}{T} \sum_{j \in {\Omega_n}} \frac{C\sigma_x^2  (NT)^2 }{L^2 \sigma_k(\bZ_f)^2} \left( {\sigma_x^2 }	+ f^2_{\max}  \right) 
+  {C\sigma_x^2 k \log(NT/L) }   \\
&\leq 
\frac{C\sigma_x^2  (NT)^2 }{L^3\sigma_k(\bZ_f)^2} \left( {\sigma_x^2 }	+ f^2_{\max} \right) 
+  \frac{C\sigma_x^2 k \log(NT/L) }{L}. 
\end{align*}
Choosing $L= \sqrt{NT}$, and using $\sigma_k(\bZ_f)^2 \ge \frac{\gamma^2 NT}{k}$, we get,
\begin{align*}
\frac{1}{T} \sum_{t=1}^T (\hf_n(t) - f_n(t))^2  \leq 
\frac{C f^2_{\max}  \gamma^2 \sigma_x^4 k   \log(NT)  }{\sqrt{NT}}.
\end{align*}
Finally and setting $k=RG$  concludes the proof. 
\end{proof}

\newpage
\section{Proof of Theorem \ref{thm:alpha_id}}
\label{app:alpha_id_proof}
In this section, we prove  Theorem \ref{thm:alpha_id}, which bounds the estimation error  $\enorm{\widehat{\alpha}_n - \alpha_n} \forall n \in [N]$. First, recall the OLS estimate $\widehat{\alpha}_n = [\widehat{\alpha}_{n,1}, \widehat{\alpha}_{n,2}, \dots  \widehat{\alpha}_{n,p}]$ defined as
 \begin{align}\label{eq:ols.mssa}
\widehat{\alpha}_n = {\argmin}_{\alpha \in \Reals^{p}} \quad \sum_{t=p}^{ T-1} (\hx_n(t+1) - \alpha^\top \widehat{X}_n(t))^2, 
\end{align}
where $\widehat X_n(t) \coloneqq [\hx_n(t),\dots, \hx_n(t-p+1)]$. Alternatively, the OLS estimate can be defined as, 
 \begin{align}
     \widehat{\alpha}_n &= (\bhX_n^\top \bhX_n)^{-1} \bhX_n^\top \widehat{Y}_n,    \label{eq:OLS_noise} 
 \end{align}
 where $ \bhX_n \in \mathbb{R}^{(T - p) \times p}$ is the row-wise concatenation of $\widehat{X}_n
(t)$ for $t \in \{p, p+1, \dots, T-1\}$, and $\widehat{Y}_n = [\hx_n(p+1),\dots, \hx_n(T)]$. Further, consider the OLS estimate with access to the true $\AR{p}$ series $x_n(\cdot)$. Precisely, let $\bar{\alpha}_n$ defined as
 \begin{align}
     \bar{\alpha}_n &= (\bX_n^\top \bX_n)^{-1} \bX_n^\top {Y}_n,    \label{eq:OLS_nonoise} 
 \end{align}
where  $ \bX_n \in \mathbb{R}^{(T - p) \times p}$ is the row-wise concatenation of ${X}_n(t) \coloneqq [x_n(t),\dots, x_n(t-p+1)]$ for $t \in \{p, p+1, \dots, T-1\}$, and ${Y}_n = [x_n(p+1),\dots, x_n(T)]$. 

In what follows, as we do the analysis for one process $x_n$, we drop the subscript $n$. We note that the analysis holds for all $n \in [N]$. 
First, note that,
\begin{align} \label{eq:bound_basic_alpha}
\enorm{\widehat \alpha -  \alpha}^2 \leq 2 \enorm{\widehat \alpha -  \bar\alpha}^2  + 2 \enorm{\bar \alpha -  \alpha}^2  
\end{align}
 To bound   $\enorm{\bar \alpha -  \alpha}^2$, we use Corollary \ref{cor:gap} which states that with probability $1 - \frac{c}{T^{11}}$,  
\begin{align} \label{eq:first_term_alpha} 
\enorm{\bar \alpha -  \alpha}^2 \leq
    C \frac{  p^2}{ T \lambda_{\min}(\Psi) }  \log \left( \frac{T\lambda_{\max}(\Psi) }{\lambda_{\min}(\Psi)  }\right).
\end{align}

Next, to bound $\enorm{\widehat \alpha - \bar \alpha}^2$, let $\bM^\dag \coloneqq (\bM^\top \bM)^{-1} \bM^\top$ denote the Moore–Penrose inverse of a matrix $\bM$. Further, let $\delta(t) = x(t) - \hx(t) = \hf(t) -f(t)$, and $\bDelta = \bX -  \bhX$, i.e., $\bDelta$ is the row-wise concatenation of ${\Delta}(t) \coloneqq [\delta(t),\dots, \delta(t-p+1)]^\top$ for $t \in \{p, p+1, \dots, T-1\}$. 
Then, with $\Delta_Y \coloneqq [\delta(p+1),\dots, \delta(T)]$,we have
\begin{align}
    \enorm{\widehat \alpha - \bar \alpha} &= \enorm{  {\bhX}^\dag \widehat{Y} -   \bX^\dag  Y} \nonumber \\
     &= \enorm{  \bhX^\dag \Delta_Y + ({\bX}^\dag -  \bhX^\dag) Y} \nonumber \\
     &\leq   \enorm{  \bhX^\dag \Delta_Y} + \enorm{  ({\bX}^\dag -  \bhX^\dag) Y } \nonumber \\
     &\leq  \enorm{  \bhX^\dag} \enorm{\Delta_Y} + \enorm{  \bX^\dag -  \bhX^\dag} \enorm{Y} \nonumber \\
     &\leq  \enorm{ \bhX^\dag} \enorm{\Delta_Y} +  2 ~\max \left\{\enorm{\bX^\dag}^2, \enorm{ \bhX^\dag}^2 \right\} \enorm{\bDelta}  \enorm{Y} \label{eq:bound_2_deviation}
 \end{align}

Where Lemma \ref{lemma:op_diff_pinv} is used in the last inequality.  Note that, 
  \begin{align}
   \enorm{ \bX^\dag}^{-1} &= \inf_{v \in \mathbb{R}^p: \enorm{v} = 1 } \enorm{\bX v} = \sqrt{\lambda_{\min}(\bX^\top \bX)} \label{eq:bound_1_op_psu}
   \end{align}
 \begin{align}
  \enorm{ \bhX^\dag}^{-1} &= \inf_{v \in \mathbb{R}^p: \enorm{v} = 1 } \enorm{ \bhX v} \nonumber \\ &=  \inf_{v \in \mathbb{R}^p: \enorm{v} = 1 } \enorm{ (\bX - \bDelta )v}  \nonumber \\
  &\geq   \sqrt{\lambda_{\min}(\bX^\top \bX)}   -\enorm{ \bDelta}  \label{eq:bound_2_op_psu}
 \end{align}
Using \eqref{eq:bound_2_deviation},  \eqref{eq:bound_1_op_psu}, \eqref{eq:bound_2_op_psu}, and the theorem conditions, we get 
\begin{align}
    \enorm{\widehat \alpha - \bar \alpha} &\leq  \frac{\enorm{\Delta_Y}}{\sqrt{\lambda_{\min}(\bX^\top \bX)}  -\enorm{ \bDelta}   } +   \frac{2\enorm{ \bDelta}   \enorm{Y}}{\min \left\{\lambda_{\min}(\bX^\top \bX), \left(\sqrt{\lambda_{\min}(\bX^\top \bX)}   - \enorm{ \bDelta}   \right)^2 \right\}}.  \label{eq:bound_2_deviation2}
 \end{align}

 Using Corollary \ref{cor:spectrum} we get that with probability  $1 - \frac{c}{T^{11}} $
 \begin{align}
   \lambda_{\min}(\bX^\top \bX) \ge \frac{1}{2} \sigma^2 (T-p) \lambda_{\min}(\Psi).   \label{eq:bound_lower_spectrum}
 \end{align}

 Now, note that $\enorm{\Delta_Y}^2 = \sum_{t=p+1}^T (\hf(t) - f(t))^2$, and 
 \begin{align}
     \enorm{ \bDelta}^2 \leq \fnorm{\bDelta}^2 \leq p \sum_{t=1}^T (\hf(t) - f(t))^2.
 \end{align}
 Recall that $ \sum_{t=1}^T (\hf(t) - f(t))^2  = T \recer(N, T, n)$, which we will refer to herein as  $ \recer(N, T)$ as we drop the dependence on $n$. 
Therefore, we have, 
\begin{align}
    \enorm{ \bDelta}^2 \leq  pT \recer(N, T) \label{eq:op_norm_bound_f}\\
    \enorm{\Delta_Y}^2 \leq  T \recer(N, T) 
 \label{eq:Y_norm_bound_f}.
\end{align}
Finally, using Lemma \ref{lemma:norm_sub_gaussian_dep}, with probability at least $1-\frac{c}{T^{11}}$ it holds for an absolute constant $C>4$, 
\begin{align}
\enorm{Y} \leq {C\sigma_x \sqrt{T \log(T)}}. \label{eq:norm_Y_bound}
\end{align}
First, note that, using 
\eqref{eq:bound_lower_spectrum} and \eqref{eq:op_norm_bound_f}, we have
\begin{align}
    \sqrt{\lambda_{\min}(\bX^\top \bX)}  -\enorm{ \bDelta} &\ge   \frac{\sigma \sqrt{(T-p)\lambda_{\min}(\Psi)}}{\sqrt{2}}  - \sqrt{pT \recer(N, T) } \nonumber \\
    &\ge  C{\sigma \sqrt{T \lambda_{\min}(\Psi)}} \label{eq:upper_bound_spec}, 
\end{align}
for sufficiently small $\recer(N, T)$ such that $\recer(N, T) \leq \frac{\sigma^2 \lambda_{\min}(\Psi)}{6p}$.
Now using  
\eqref{eq:bound_2_deviation2}, 
\eqref{eq:bound_lower_spectrum}, \eqref{eq:op_norm_bound_f}, \eqref{eq:Y_norm_bound_f}, \eqref{eq:norm_Y_bound}, and \eqref{eq:upper_bound_spec}  we have, 
\begin{align*}
    \enorm{\widehat \alpha - \bar \alpha} &\leq   \frac{ C \sigma_x  \sqrt{p \recer(N, T)\log(T)}}{\min \{\sigma^2 \lambda_{\min}(\Psi), \sigma \sqrt{\lambda_{\min}(\Psi)}\}}  
 \end{align*}

 Therefore, 

 \begin{align} \label{eq:second_term_alpha}
    \enorm{\widehat \alpha - \bar \alpha}^2 &\leq  C \frac{p\log(T)}{\lambda_{\min}(\Psi)}  \frac{\sigma_x^2}{\sigma^2}\recer(N, T) \max\left\{1, \frac{1}{\sigma \lambda_{\min}(\Psi)}\right\}
 \end{align}

finally, using \eqref{eq:bound_basic_alpha},  \eqref{eq:first_term_alpha},  and 
 \eqref{eq:second_term_alpha}, we get with probability $1-\frac{c}{T^{11}}$,

\begin{align} \label{eq:bound_final}
\enorm{\widehat \alpha -  \alpha}^2 \leq  C \frac{p\log(T)}{\lambda_{\min}(\Psi)}  \frac{\sigma_x^2}{\sigma^2} \recer(N, T)  \max\left\{1, \frac{1}{\sigma^2 \lambda_{\min}(\Psi)}\right\}  +      \frac{C p^2}{ T \lambda_{\min}(\Psi) }  \log \left( \frac{T\lambda_{\max}(\Psi) }{\lambda_{\min}(\Psi)  }\right),
\end{align}
which completes the proof of Theorem \ref{thm:alpha_id}.
\newpage
\section{Identification of AR processes} \label{sec:proofs}
In this section, we state key theorems for the identification of AR processes without the perturbation we consider in this paper. The theorems in this section and their proofs follow closely the ones in \cite{gonzalez2020finite}, we modify them to accommodate general sub-gaussian noise and state them fully for completeness. 

First, consider the stationary  $\AR{p}$ process defined by the transition matrix 
\begin{equation}
    \bA = \begin{bmatrix} \alpha^\top_{1 : p - 1} & \alpha_{p} \\
    I_{p-1} & 0_{p-1} \end{bmatrix},
\end{equation}
where $0_{p-1}$ is the column vector of $p - 1$ zeroes, and input vector
\begin{equation}
    B = \begin{bmatrix} 1 \\ 0_{p-1} \end{bmatrix}.
\end{equation}
Then, with 
$ X(t) \coloneqq [x(t),\dots, x(t-p+1)]$, the AR process is described as 
\begin{equation} \label{eq:AR_system_app}
    X(t) = \bA X(t-1) + B \eta_{t}.
\end{equation}
With $\eta_{t}$ being a sub-gaussian random variable with variance proxy $\sigma^2$
With this setup, we define two important matrices,
\begin{align}
    \Gamma &= \sum_{t=0}^\infty \bA^t(\bA^\top)^t  \label{eq:gamma_}\\
    \Psi &= \sum_{t=0}^\infty \bA^tBB^\top(\bA^\top)^t \label{eq:psi_},
\end{align}
where $\Psi$ is known as the \emph{controllability Gramian}. 

\begin{theorem} \label{thm:spectrum}
Consider the $\AR{p}$ process described in \eqref{eq:AR_system_app}, $\left\{\eta_t\right\}$ is an i.i.d. mean-zero sub-gaussian noise with variance $\sigma^2$. Given $0 < \epsilon \leq 1$, define the following quantities:
$$
\begin{aligned}
& V_{{\ell}}:=\sigma^2 (T-p)\left(\Psi-\epsilon   \Gamma\right) \\
& V_{u}:=\sigma^2 (T-p)\left(\Psi+\epsilon   \Gamma\right) \\
& \delta(\epsilon, T):= c \exp \left(-\tilde{c} \frac{\epsilon \sqrt{T}}{p}\right) \\
& T_0(\epsilon) \coloneqq  \frac{75^2 \sigma_x^4 p^2  \log(p)^2} {\sigma^4 \epsilon^2} \max\{1, \enorm{\alpha}^4\} 
\end{aligned}
$$
where $c$ and $\tilde{c}$ are constants that depends only on the process coefficients $\alpha_1, \dots, \alpha_p$.
Then, for  $T  > T_0(\epsilon)$ and all valid values of $\epsilon$ such that $V_{\ell} \succ 0$, we have
$$
\mathbb{P}\left(V_{\ell} \preceq \mathbf{X^\top}\bX \preceq V_{u}\right) \geq 1-\delta(\epsilon, T) .
$$
\end{theorem}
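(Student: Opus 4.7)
The target is a two-sided matrix concentration bound for $\bX^\top\bX = \sum_{t=p}^{T-1} X(t)X(t)^\top$ around $\sigma^2(T-p)\Psi$, with deviation controlled by $\epsilon\sigma^2(T-p)\Gamma$. My plan is to reduce to a scalar concentration via the variational characterization $\lambda_{\min/\max}(\cdot)$, upgrade to the operator norm with an $\epsilon$-net, and handle the temporal dependence using the $\MA{\infty}$ representation from Lemma \ref{lemma:ar_to_ma}.

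\textbf{Step 1: Expectation computation.} For stationary $X(t)$ solving $X(t)=\bA X(t-1)+B\eta_t$ with $\iid$ mean-zero sub-gaussian $\eta_t$ of variance $\sigma^2$, one has $\mathbb{E}[X(t)X(t)^\top]=\sigma^2\Psi$ (the Lyapunov identity $\bA\Psi\bA^\top-\Psi+BB^\top=0$ holds by definition of $\Psi$). Hence $\mathbb{E}[\bX^\top\bX]=\sigma^2(T-p)\Psi$, so proving the theorem reduces to concentrating $\bX^\top\bX$ around its mean. The role of $\Gamma$ enters as the ``energy'' envelope of the process: for any unit $v$, $v^\top X(t)=\sum_{k\ge 0}v^\top\bA^k B\,\eta_{t-k}$ is a sub-gaussian scalar with variance proxy $\sigma^2 v^\top\Psi v$, and $\Gamma$ controls how quickly the tail coefficients decay, which is what ultimately scales the deviation.

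\textbf{Step 2: Pointwise (fixed $v$) concentration.} Fix $v\in S^{p-1}$ and write $W_t\coloneqq v^\top X(t)$. Using Lemma \ref{lemma:ar_to_ma}, express $W_t=\sum_{k\ge 0}\beta_k(v)\eta_{t-k}$ and bundle the i.i.d.\ noise sequence into a single Gaussian-like quadratic form: $\sum_{t=p}^{T-1}W_t^2=\boldsymbol{\eta}^\top \bQ_v\boldsymbol{\eta}$ for an explicit symmetric matrix $\bQ_v$ whose entries involve convolutions of the filter coefficients $\{\beta_k(v)\}$. Two facts suffice: $\mathrm{Tr}(\bQ_v)=(T-p)\sigma^2 v^\top\Psi v$ and $\|\bQ_v\|_{\mathrm{op}},\|\bQ_v\|_F$ are controlled by $v^\top \Gamma v$ through standard Lyapunov-type inequalities on the filter. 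Applying a Hanson--Wright inequality for sub-gaussian vectors (e.g.\ Rudelson--Vershynin) yields, for a deviation parameter $\epsilon$,
\begin{equation*}
\Pr\!\left(\big|v^\top \bX^\top\bX v - \sigma^2(T-p)v^\top\Psi v\big|>\epsilon\sigma^2(T-p)v^\top\Gamma v\right)\le 2\exp\!\left(-c'\min\!\left\{\epsilon^2 T,\epsilon\sqrt{T}\right\}\!/p\right),
\end{equation*}
where the factor $1/p$ reflects the effective filter length and the use of the stationary bound $\sigma_x^2\lesssim c_\lambda^2\sigma^2/(1-\lambda_\star)^2$ from Lemma \ref{lem:subg_vector}.

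\textbf{Step 3: Union bound via an $\epsilon$-net.} Let $\mathcal{N}$ be a $1/4$-net of $S^{p-1}$, so $|\mathcal{N}|\le 9^p$. A standard argument converts bounds on $v^\top \bX^\top\bX v$ for $v\in\mathcal{N}$ into bounds on $\|\bX^\top\bX-\mathbb{E}[\bX^\top\bX]\|_{\mathrm{op}}$, losing only a constant factor. Taking a union bound over $\mathcal{N}$ and absorbing the $9^p$ factor into the exponent produces $\delta(\epsilon,T)\le c\exp(-\tilde c\,\epsilon\sqrt{T}/p)$, matching the claimed tail. The requirement $T>T_0(\epsilon)$ arises precisely from ensuring that (i) the Hanson--Wright deviation term dominates the lower-order Frobenius contribution and (ii) the concentration error is small relative to $\Psi$ so that the PSD ordering $V_\ell\preceq \bX^\top\bX\preceq V_u$ holds; both demand $T\gtrsim \sigma_x^4 p^2\max\{1,\|\alpha\|^4\}\log^2 p/(\sigma^4\epsilon^2)$ up to constants.

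\textbf{Main obstacle.} The genuinely delicate point is Step 2: $X(t)$ and $X(s)$ are strongly correlated, so one cannot use Bernstein for independent summands. Rewriting the sum as a quadratic form in the $\iid$ noise is the natural fix, but it requires carefully bookkeeping the filter coefficients $\beta_k(v)$ and showing that $\|\bQ_v\|_{\mathrm{op}}$ and $\|\bQ_v\|_F^2$ scale like $(v^\top\Gamma v)$ and $T\,(v^\top\Gamma v)^2$ respectively --- this is where $\Gamma$ enters, as opposed to $\Psi$ which controls only the trace. The $p$-dependence in $T_0(\epsilon)$ comes from the mismatch between the net size $9^p$ and the per-vector deviation exponent, and tracking all constants through the sub-gaussian Hanson--Wright requires some care; a cleaner alternative would be a blocking argument (as in Yu's method) that replaces $T$ samples with $T/p$ approximately independent blocks and appeals to matrix Bernstein, which may give a tighter constant but the same rate.
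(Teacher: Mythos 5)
Your route is genuinely different from the paper's. The paper follows Gonz\'alez et al.: it uses the state-space recursion to show that $V_T \coloneqq \bX^\top\bX/(T-p)$ solves a Lyapunov equation $V_T = \bA V_T \bA^\top + E_T$, proves that the ``forcing'' term $E_T$ concentrates around $\sigma^2 BB^\top$ by splitting it into three pieces (boundary terms, the $\sum_t \eta_{t+1}^2 BB^\top$ term via sub-exponential concentration, and the martingale cross-terms $\sum_t \eta_{t+1}\bA X_t B^\top$ via the self-normalized bound of Lemma \ref{lemma:simchowitz}), and then pushes the resulting sandwich $\sigma^2(BB^\top - \epsilon I) \preceq E_T \preceq \sigma^2(BB^\top + \epsilon I)$ through the monotone Lyapunov solution operator $M \mapsto \sum_{k\ge 0}\bA^k M (\bA^\top)^k$, which maps $BB^\top \mapsto \Psi$ and $I \mapsto \Gamma$. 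This is where $\Gamma$ enters in the paper, essentially for free and with the PSD ordering preserved exactly. Your approach instead linearizes the whole statistic as a quadratic form $\boldsymbol{\eta}^\top \bQ_v \boldsymbol{\eta}$ in the i.i.d.\ innovations and applies Hanson--Wright plus a net. That is a legitimate and well-known alternative for linear processes, and it has the advantage of treating all the dependence in one shot rather than via three separate events; if carried out it would likely yield the sharper $\epsilon^2 T$ regime of the exponent for moderate $\epsilon$ rather than only $\epsilon\sqrt{T}$.

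Two points need to be resolved before your sketch is a proof. First, the entire content of the theorem sits in your unproven claims $\mathrm{Tr}(\bQ_v) = (T-p)\,\sigma^2 v^\top\Psi v$, $\|\bQ_v\|_{\op} \lesssim v^\top\Gamma v$ and $\|\bQ_v\|_F^2 \lesssim T\,(v^\top\Gamma v)^2$; the trace identity is immediate, but the operator/Frobenius bounds require writing $\bQ_v = \bB_v^\top\bB_v$ for the filter matrix with coefficients $\beta_k(v) = v^\top \bA^k B$, handling the truncation of the $\MA{\infty}$ past, and relating $\bigl(\sum_k |v^\top \bA^k B|\bigr)^2$ to $v^\top \Gamma v = \sum_k \|(\bA^\top)^k v\|_2^2$ --- these are not equal and the passage costs constants that you must show can be absorbed into $\tilde c$. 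Second, your Step 3 does not directly deliver the stated conclusion: the standard net argument upgrades a \emph{uniform} deviation bound $|v^\top(\bX^\top\bX - \sigma^2(T-p)\Psi)v| \le \delta$ to an operator-norm bound, whereas your Step 2 produces a direction-dependent deviation $\epsilon\sigma^2(T-p)\,v^\top\Gamma v$. To get the PSD sandwich $V_\ell \preceq \bX^\top\bX \preceq V_u$ you should either whiten, i.e.\ run the net argument on $\Gamma^{-1/2}(\bX^\top\bX - \sigma^2(T-p)\Psi)\Gamma^{-1/2}$ (legitimate since $\Gamma \succeq I \succ 0$), or prove the uniform bound with deviation $\epsilon\sigma^2(T-p)\lambda_{\min}(\Gamma)$ and use $I \preceq \Gamma$; as written, the step from the net to the two-sided $\Gamma$-weighted ordering is a gap. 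Neither issue is fatal, but both are exactly where the work lies, and the paper's Lyapunov-operator argument sidesteps both.
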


\begin{remark} To get $V_{\ell} \succ 0$, one must choose a sufficiently small $\epsilon$ such that $\left(\Psi-\epsilon   \Gamma \right) \succ 0$. Using Weyl's inequality, we have 
\begin{align}
    \lambda_{\min}(\Psi-\epsilon   \Gamma) \geq  \lambda_{\min}(\Psi) - \epsilon\lambda_{\max} (\Gamma).
\end{align}
This suggest that $\epsilon < \frac{ \lambda_{\min}(\Psi)}{\lambda_{\max} (\Gamma)}$ yields $V_{\ell} \succ 0$. For example, a choice of $\epsilon = \frac{ \lambda_{\min}(\Psi)}{2\lambda_{\max} (\Gamma)}$ will yield $V_{\ell} \succeq \frac{1}{2} \sigma^2 (T-p) \lambda_{\min}(\Psi) I_p$. Note that this ensures $\epsilon \leq 1/2$, since $\epsilon \leq \frac{ \lambda_{\max}(\Psi)}{\lambda_{\max} (\Gamma)}$ and $\Gamma \succeq \Psi$ as can be seen from
\begin{align*}
    z_i^\top (\Gamma - \Psi) z &= \sum_{t=0}^\infty z_i^\top A^t (A^\top)^t z - z_i^\top A^t B B^\top (A^\top)^t z \\
    &= \sum_{t=0}^\infty \enorm{(A^\top)^t z}^2 - \left( B^\top (A^\top)^t z \right)^2 \\
    &\geq \sum_{t=0}^\infty \enorm{(A^\top)^t z}^2 - \underbrace{\enorm{B}^2}_1 \enorm{(A^\top)^t z}^2 \\
    &\geq 0.
\end{align*}
\end{remark}

\begin{proof}
    First, write
    $$
    \mathbf{X^\top}\bX = \sum_{t=p}^{T - 1} X_t X_t^\top.
    $$
    Expanding the summand using the process dynamics yields
    \begin{align*}
        X_{t+1} X_{t+1}^\top &= (AX_t + B\eta_{t+1})(AX_t + B\eta_{t+1})^\top \\
        &= AX_t X_t^\top A^\top + \eta_{t+1} A X_t B^\top + \eta_{t+1} B X_t^\top A^\top + \eta_{t+1}^2 BB^\top.
    \end{align*}
    Now, defining $V_T = \mathbf{X^\top}\bX / (T-p)$, we can sum the expression above over $t = p-1,\dots,T-2$ to get
    $$
    V_T = A V_T A^\top + \underbrace{\frac{1}{T - p}\left( A(X_{p-1} X_{p-1}^\top - X_{T-1} X_{T-1}^\top)A^\top + \sum_{t=p-1}^{T - 2} ( \eta_{t+1} A X_t B^\top + \eta_{t+1} B X_t^\top A^\top + \eta_{t+1}^2 BB^\top ) \right)}_{E_T}.
    $$
    This is called a Lyapunov equation, and due to the stability of the process it has solution $V_T = \sum_{t=0}^\infty A^t E_T (A^\top)^t$. The key step is to argue that $E_T$ converges to $\sigma^2 B B^\top$ with high probability, which will allow us to show that $V_T$ converges to $\sigma^2 \Psi$. To do this, we will split the terms in $E_T$ into three groups, handling them separately. For some $\epsilon > 0$, we will bound the probability of the following events:
    \begin{align*}
        \cE_1 &\coloneqq \left\lbrace \sradius{A(X_{p-1} X_{p-1}^\top - X_{T-1} X_{T-1}^\top)A^\top} \leq \epsilon \sigma^2 (T - p) / 3 \right\rbrace, \\
        \cE_2 &\coloneqq \left\lbrace \sradius{\textstyle \sum_{t=p-1}^{T - 2} \left(\eta_{t+1}^2 B B^\top - \sigma^2 B B^\top \right)} \leq \epsilon \sigma^2 (T - p) / 3 \right\rbrace, \\
        \cE_3 &\coloneqq \left\lbrace \sradius{\textstyle \sum_{t=p-1}^{T - 2} \eta_{t+1} A X_t B^\top + \eta_{t+1} B X_t^\top A^\top} \leq \epsilon \sigma^2 (T - p) / 3 \right\rbrace,
    \end{align*}
    where $\sradius{\cdot}$ denotes the spectral radius of a matrix.

    \begin{lemma} \label{lemma:e1}
    With the setup above, it holds that
    $$
    \prob{\cE_1} \geq 1 - c \exp\left(-  \frac{  \epsilon \sigma^2 \sqrt{T}}{3\sigma_x^2  p (\enorm{\alpha}^2 + 1)} \right)
    $$
    for some absolute constant $c > 0$.
    \end{lemma}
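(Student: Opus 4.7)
The plan is to control $\sradius{A(X_{p-1}X_{p-1}^\top - X_{T-1}X_{T-1}^\top)A^\top}$ through three reductions: (i) separate the two boundary terms via the triangle inequality, using the rank-one structure to convert the spectral radius into squared Euclidean norms; (ii) bound $\enorm{A}$ by exploiting the companion-matrix structure of $A$; and (iii) apply sub-gaussian concentration to $\enorm{X_t}^2$ using the fact, already established in Lemma \ref{lem:subg_vector}, that $X_t$ is a sub-gaussian vector with variance proxy $\sigma_x^2$.

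For step (i), since $X_{p-1}X_{p-1}^\top$ and $X_{T-1}X_{T-1}^\top$ are symmetric rank-one, their operator norms equal $\enorm{X_{p-1}}^2$ and $\enorm{X_{T-1}}^2$ respectively, so sub-multiplicativity gives
\begin{align*}
\sradius{A(X_{p-1}X_{p-1}^\top - X_{T-1}X_{T-1}^\top)A^\top} \leq \enorm{A}^2\bigl(\enorm{X_{p-1}}^2 + \enorm{X_{T-1}}^2\bigr).
\end{align*}
For step (ii), write $A = e_1 \alpha^\top + S$, where $S$ is the unit shift matrix (with $\enorm{S}=1$) and $e_1$ is the first standard basis vector; then $\enorm{A}\leq \enorm{\alpha}+1$, and hence $\enorm{A}^2\leq 2(\enorm{\alpha}^2+1)$.

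For step (iii), by Lemma \ref{lem:subg_vector}, each $X_t = [x(t),\dots,x(t-p+1)]$ is sub-gaussian with variance proxy $\sigma_x^2$, so Lemma \ref{lemma:norm_sub_gaussian_dep} yields, for any $u\geq 0$,
\begin{align*}
\prob{\enorm{X_t}^2 > C p \sigma_x^2 u} \leq c \exp(-u).
\end{align*}
Combining (i)--(iii) and taking a union bound over $t\in\{p-1,T-1\}$, the event $\cE_1$ holds provided $4 C p \sigma_x^2 (\enorm{\alpha}^2+1) u \leq \epsilon \sigma^2 (T-p)/3$. Choosing $u$ to saturate this inequality at its largest feasible value produces the desired tail bound.

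The main technical point — and where the $\sqrt{T}$ in the stated exponent originates, rather than a naive linear-in-$T$ exponent — is that one deliberately loosens the threshold to $u\sim \epsilon \sigma^2 \sqrt{T}/(\sigma_x^2 p(\enorm{\alpha}^2+1))$ to match the uniform form of $\delta(\epsilon, T)$ used throughout Theorem \ref{thm:spectrum} (and subsequently aggregated with analogous bounds for $\cE_2$ and $\cE_3$). Since $T > T_0(\epsilon) \gtrsim \sigma_x^4 p^2 /(\sigma^4 \epsilon^2)$ is assumed, $\sqrt{T}$ is dominated by the linear-in-$T$ right-hand side and the feasibility check goes through. I expect this re-expression and the constant bookkeeping to be the only real obstacle; the probabilistic content is a direct consequence of the sub-gaussian vector structure of $X_t$ and nothing else.
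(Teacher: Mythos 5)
Your proposal is correct and follows essentially the same route as the paper: reduce the spectral radius of the symmetric rank-one boundary terms to $\enorm{X_{p-1}}^2$ and $\enorm{X_{T-1}}^2$, bound $\enorm{A}$, apply the sub-gaussian vector norm tail (Lemma \ref{lem:subg_vector} plus Lemma \ref{lemma:norm_sub_gaussian_dep}), and use $T > T_0(\epsilon)$ to relax the linear-in-$T$ exponent to the uniform $\sqrt{T}$ form of $\delta(\epsilon,T)$. The only cosmetic difference is that you bound $\enorm{A} \le \enorm{\alpha}+1$ via the decomposition $A = e_1\alpha^\top + S$, whereas the paper uses the Frobenius-norm bound $\enorm{A}^2 \le \enorm{\alpha}^2 + p - 1 \le p(\enorm{\alpha}^2+1)$; your version is slightly tighter and still implies the stated inequality.
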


    \begin{proof}
        Note that
        \begin{align*}
            &\sradius{A(X_{p-1} X_{p-1}^\top - X_{T-1} X_{T-1}^\top)A^\top} \leq \epsilon \sigma^2 (T - p) / 3 \\
            &\impliedby \sradius{A(X_{p-1} X_{p-1}^\top + X_{T-1} X_{T-1}^\top)A^\top} \leq \epsilon \sigma^2 (T - p) / 3 \\
            &\impliedby \enorm{AX_{p-1}}^2 + \enorm{AX_{T-1}}^2 \leq \epsilon \sigma^2 (T - p) / 3 \\
            &\impliedby \enorm{X_{p-1}}^2 + \enorm{X_{T-1}}^2 \leq \frac{\epsilon \sigma^2 (T - p)}{3\enorm{A}} \\
            &\impliedby \enorm{X_{p-1}}^2 \lor \enorm{X_{T-1}}^2 \leq \frac{\epsilon \sigma^2 (T - p)}{6\enorm{A}},
        \end{align*}
        so, considering the fact that $X_{p-1}$ and $X_{T-1}$ are identically distributed by stationarity, it suffices to upper bound the probability of
        $$
        \cE' = \left\lbrace \enorm{X_{p-1}}^2 > \frac{\epsilon \sigma^2 (T - p)}{6\enorm{A}} \right\rbrace.
        $$
        
        First, recall that $X_{p-1}$ is a sub-gaussian vector with variance proxy $\frac{\sigma^2 \gamma^2}{(1-\lambda^*)^2} \coloneqq \sigma_x^2$ (see Lemma \ref{lem:op_norm_bound}). 
        Thus, using Lemma \ref{lemma:sub_gauss_v_norm}, for 

    \begin{align*}
        \prob{\enorm{X_{p-1}}^2 \ge \frac{\epsilon \sigma^2 (T - p)}{6\enorm{A}} } &\leq c\exp\left(-\frac{\epsilon \sigma^2 (T - p)}{96 \sigma_x^2 p \enorm{A}} \right)\\
         &\leq c\exp\left(-\frac{\epsilon \sigma^2 \sqrt{T}}{3 \sigma_x^2  \enorm{A}} \right),
    \end{align*}
    where in the last inequality, we used the assumption $T> T_0(\epsilon) > 75^2p^2$. Finally, note that $\enorm{A} \leq \tr{A} = \enorm{\alpha}^2 + p -1 \leq p (\enorm{\alpha}^2 + 1)$. Recalling that $\prob{\cE_1} > 1 - 2 \prob{\cE'}$ concludes the proof. 
    \end{proof}

    \begin{lemma} \label{lemma:e2}
    Continuing with the setup above, it holds that
    $$
    \prob{\cE_2} \geq 1 - c \exp{\left( \frac{-\epsilon\sqrt{T-p}}{48} \right)}.
    $$
    \end{lemma}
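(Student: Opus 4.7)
The strategy exploits the rank-one structure of $BB^\top$ to reduce the matrix concentration problem to a scalar concentration one, after which Bernstein's inequality for sub-exponential random variables finishes the job.

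\medskip

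\noindent\emph{Step 1 (Rank-one reduction).} Since $B = e_1$, the matrix $BB^\top = e_1 e_1^\top$ is rank one, so
\[
\sum_{t=p-1}^{T-2}\bigl(\eta_{t+1}^2 BB^\top - \sigma^2 BB^\top\bigr) \;=\; S_T \cdot e_1 e_1^\top, \qquad S_T \coloneqq \sum_{t=p-1}^{T-2}\bigl(\eta_{t+1}^2 - \sigma^2\bigr).
\]
The spectral radius of this rank-one symmetric matrix equals $|S_T|$. Hence $\cE_2$ coincides with the scalar event $\{|S_T| \le \epsilon \sigma^2 (T-p)/3\}$, and the problem reduces to a standard concentration bound for a sum of $T-p$ i.i.d.\ centered scalars.

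\medskip

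\noindent\emph{Step 2 (Sub-exponential control of the summands).} By Assumption \ref{assm:subgausian_noise}, $\eta_{t+1} \sim \subG(\sigma^2)$. A standard fact of high-dimensional probability (see, e.g., Vershynin) states that the square of a sub-gaussian random variable with variance proxy $\sigma^2$ is sub-exponential with $\psi_1$-norm bounded by an absolute constant times $\sigma^2$. Thus $Z_t \coloneqq \eta_{t+1}^2 - \sigma^2$ are i.i.d., mean zero, and sub-exponential with $\|Z_t\|_{\psi_1} \le K$ for some $K = O(\sigma^2)$.

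\medskip

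\noindent\emph{Step 3 (Bernstein and bookkeeping).} Apply Bernstein's inequality for independent centered sub-exponential random variables: for any $u > 0$,
\[
\prob{|S_T| > u} \;\le\; 2\exp\!\left(-c\,\min\!\left\{\frac{u^2}{(T-p)K^2},\; \frac{u}{K}\right\}\right).
\]
Choose $u = \epsilon \sigma^2(T-p)/3$. With $K = O(\sigma^2)$, the two terms in the minimum are $\Theta(\epsilon^2(T-p))$ and $\Theta(\epsilon(T-p))$ respectively, both of which dominate $\epsilon\sqrt{T-p}/48$ in the regime of interest (since $\epsilon(T-p) \ge \sqrt{T-p}$ whenever $\epsilon\sqrt{T-p} \ge 1$, which is the only regime in which the claimed bound is non-trivial). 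Therefore
\[
\prob{\cE_2^c} \;\le\; c\exp\!\left(-\frac{\epsilon\sqrt{T-p}}{48}\right),
\]
as claimed.

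\medskip

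\noindent\emph{Main obstacle.} The proof is essentially routine once the rank-one reduction is spotted. The only delicate point is matching the precise constant $48$, which requires (i) pinning down $\|\eta_{t+1}^2\|_{\psi_1}$ in terms of the sub-gaussian variance proxy $\sigma^2$ with an explicit constant, and (ii) collapsing Bernstein's two-regime bound into the single clean $\exp(-\epsilon\sqrt{T-p}/48)$ form by absorbing constants. Neither step is conceptually hard; both amount to careful arithmetic.
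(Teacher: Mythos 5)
Your proof is correct and follows essentially the same route as the paper: reduce $\cE_2$ to a scalar event via the unit spectral radius of $BB^\top$, note that $\eta_{t+1}^2-\sigma^2$ is sub-exponential (the paper pins the parameter to $16\sigma^2$ via Lemma \ref{hlemma:sube-subg-prop}), and apply a concentration bound for sums of i.i.d.\ sub-exponentials. The only difference is that the paper invokes its deliberately weakened Bernstein bound (Lemma \ref{lemma:sub_exponential_bound_sum}, obtained by fixing $\lambda = 1/(\sqrt{n}\nu)$), which directly yields the exponent $\epsilon\sqrt{T-p}/48$ with $48 = 3\times 16$, whereas you use the standard two-regime Bernstein and then argue the weaker $\sqrt{T-p}$ rate follows by a regime analysis — both are valid.
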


    \begin{proof}
        Since the spectral radius of $BB^\top$ is unity, $\cE_2$ is equivalent to the complement of the event
        $$
        \cE' = \left\lbrace \textstyle \left\vert (T - p)^{-1} \sum_{t=p-1}^{T - 2} \left(\eta_{t+1}^2 - \sigma^2 \right) \right\vert > \epsilon \sigma^2 / 3 \right\rbrace.
        $$
        Since $\eta_{t+1}$ is sub-gaussian with variance proxy $\sigma^2$, $\eta_{t+1}^2 - \sigma^2$ is sub-exponential with parameter $16\sigma^2$. 
        Thus, Lemma \ref{lemma:sub_exponential_bound_sum} applies, and therefore,
        \begin{align}
        \prob{\cE'} \leq c \exp{\left( \frac{-\epsilon\sqrt{T-p}}{48} \right)}.
        \end{align}

    \end{proof}

\begin{lemma} \label{lemma:e3}
        {With the same setup, we have
        $$
        \prob{\cE_3} \geq 1 - c \exp{\left(-\frac{\epsilon \sigma^2 \sqrt{T}}{75 p  \sigma_x^2 \left(2 + \enorm{\alpha}^2\right)   }\right)}
        $$
               as long as
        $$
        T \geq T_0(\epsilon) \coloneqq  \frac{75^2 \sigma_x^4 p^2  \log(p)^2} {\sigma^4 \epsilon^2} \left(2 + \enorm{\alpha}^2\right)^2 .
        $$}

    \end{lemma}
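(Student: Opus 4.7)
The plan is to exploit the martingale structure of $M_T \coloneqq \sum_{t=p-1}^{T-2} \eta_{t+1}(A X_t B^\top + B X_t^\top A^\top)$ and reduce the spectral-radius bound to a pointwise tail bound via an $\varepsilon$-net on the unit sphere in $\Rb^p$. Since $M_T$ is symmetric, one has $\sradius{M_T} = \sup_{\enorm{v}=1} |v^\top M_T v|$, so for a maximal $1/4$-net $\Sigma$ of $\mathcal{S}^{p-1}$ (with $|\Sigma| \leq 9^p$), the standard inequality $\sradius{M_T} \leq 2 \sup_{v \in \Sigma}|v^\top M_T v|$ converts the problem into establishing a pointwise tail bound and then union-bounding over $\Sigma$.

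For fixed unit $v$, I would write $v^\top M_T v = 2 (B^\top v)\, S_T(v)$ with $S_T(v) \coloneqq \sum_{t=p-1}^{T-2} \eta_{t+1}(v^\top A X_t)$. Because $X_t$ is $\mathcal{F}_t \coloneqq \sigma(\eta_1, \dots, \eta_t)$-measurable and $\eta_{t+1} \perp \mathcal{F}_t$ is sub-gaussian with variance proxy $\sigma^2$, the summands of $S_T(v)$ form a martingale-difference sequence. Conditioning on the trajectory $\{X_t\}$, $S_T(v)$ is sub-gaussian with variance proxy $\sigma^2 V(v)$, where $V(v) \coloneqq \sum_t (v^\top A X_t)^2 \leq \enorm{A}^2 \sum_t \enorm{X_t}^2$.

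Turning this into an unconditional bound requires controlling $\sum_t \enorm{X_t}^2$ with high probability. Since each $X_t$ is a sub-gaussian vector with variance proxy $\sigma_x^2$ by Lemma \ref{lem:subg_vector}, standard sub-gaussian tail bounds on $\enorm{X_t}^2$ together with a union bound over $t$ (or, to sharpen the $\mathrm{polylog}$ factor and match the precise $\log(p)^2$ rather than $\log(T)$ dependence in $T_0(\epsilon)$, a Hanson--Wright-type concentration for $\tr{\sum_t X_t X_t^\top}$) yield $\sum_t \enorm{X_t}^2 \lesssim \sigma_x^2 p T$ up to $\mathrm{polylog}$ factors on a high-probability event. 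Combining this with the conditional sub-gaussian tail for $S_T(v)$ and the trivial bound $|B^\top v| \leq \enorm{B} = 1$ delivers, for any fixed $v$ and $\tau > 0$,
\begin{equation*}
\prob{\left|v^\top M_T v\right| > \tau} \lesssim \exp\!\left(-\frac{c\,\tau^2}{\sigma^2 \sigma_x^2 \enorm{A}^2 p T}\right) + (\text{trajectory-event tail}).
\end{equation*}

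Plugging $\tau = \epsilon \sigma^2 (T - p)/3$ and the crude bound $\enorm{A} \leq p(1 + \enorm{\alpha}^2)$ (as used in the proof of Lemma \ref{lemma:e1}) into the union bound over $\Sigma$ then gives the claimed tail probability; the hypothesis $T \geq T_0(\epsilon)$ arises precisely from requiring the exponent $\tau^2/(\sigma^2 \sigma_x^2 \enorm{A}^2 p T)$ to dominate the $p \log 9$ cost of the net. The main obstacle is disentangling the dependence between the innovation $\eta_{t+1}$ and the state $X_t$, since $X_t$ is itself a linear functional of past innovations; the martingale structure of $S_T(v)$, made visible by the symmetry of $M_T$, is exactly what legitimizes the conditioning-on-trajectory step, after which the analysis becomes a standard $\varepsilon$-net plus sub-gaussian concentration exercise. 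The remaining work is purely constant-chasing to match the exact form of $T_0(\epsilon)$ in the statement.
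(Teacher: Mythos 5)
Your overall architecture — symmetrize, reduce to the scalar martingale $\sum_t \eta_{t+1}\, v^\top A X_t$, bound it conditionally on the trajectory energy via a self-normalized/Azuma-type inequality, then control the trajectory energy using sub-gaussianity of the state — is the same two-step conditioning the paper uses (the paper invokes its Lemma~\ref{lemma:simchowitz} for the martingale part and Lemma~\ref{lemma:sub_gauss_v_norm} for the energy part). However, two concrete quantitative gaps prevent your plan from delivering the lemma \emph{as stated}. First, the $\varepsilon$-net is both unnecessary and too expensive here. Since $B = e_1$, the matrix $M = \sum_t \eta_{t+1} A X_t B^\top$ has only its first column nonzero, so $Mq = q_1 M_{\cdot,1}$ and $\sradius{M+M^\top} \le 2\enorm{M_{\cdot,1}}$; the paper therefore union-bounds over only the $p$ coordinates of $M_{\cdot,1}$, and the condition $T \ge T_0(\epsilon)$ is calibrated exactly so that the exponent $\epsilon\sigma^2\sqrt{T}/(75p\sigma_x^2(2+\enorm{\alpha}^2))$ is of order $\log p$ at $T = T_0$, enough to absorb a prefactor of $p$ but nowhere near enough to absorb the $9^p$ prefactor of your net. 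With the net, you would need $T \gtrsim p^4/\epsilon^2$ (up to the other factors), strictly stronger than the stated $T_0(\epsilon) \sim p^2\log(p)^2/\epsilon^2$, so the lemma with its stated threshold is not recovered.

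Second, and more substantively, your trajectory event is too weak. You propose conditioning on $\sum_t \enorm{X_t}^2 \lesssim \sigma_x^2 p T$ up to polylogarithms, but that event only holds with probability $1 - \exp(-\mathrm{polylog})$: the deviation of $\sum_t \enorm{X_t}^2$ from its mean is sub-exponential with parameter of order $T\sigma_x^2$, so to make its failure probability as small as the target $\exp\bigl(-c\,\epsilon\sigma^2\sqrt{T}/(p\sigma_x^2)\bigr)$ you are forced to take the threshold of order $\epsilon\sigma^2 T^{3/2}$ (this is exactly the paper's choice $\lambda = \enorm{z}^2\epsilon\sigma^2 T^{3/2}$). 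Once the threshold is inflated to $T^{3/2}$, the conditional sub-gaussian exponent for the martingale degrades from your displayed $\epsilon^2 T$ rate to the $\epsilon\sqrt{T}$ rate appearing in the lemma. As written, your pointwise bound $\exp\bigl(-c\tau^2/(\sigma^2\sigma_x^2\enorm{A}^2 pT)\bigr)$ plus an unspecified ``trajectory-event tail'' hides the fact that the second term dominates and that the balance between the two is precisely where the $\sqrt{T}$ comes from; this is not constant-chasing but the crux of the proof. Both issues are repairable — drop the net in favor of the rank-one structure of $M$, and tune the trajectory threshold to $\epsilon\sigma^2 T^{3/2}$ — after which your argument coincides with the paper's.
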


    \begin{proof}
        Recall the definition
        $$
        \cE_3 \coloneqq \left\lbrace \sradius{\textstyle \sum_{t=p-1}^{T - 2} \eta_{t+1} A X_t B^\top + \eta_{t+1} B X_t^\top A^\top} \leq \epsilon \sigma^2 (T - p) / 3 \right\rbrace.
        $$
        For the sake of a variational analysis of the spectral radius, consider any vector $q = [q_1 ~ \tilde q^\top]^\top \in \mathbb{R}^p$ such that $\enorm{q} = 1$. For notational simplicity. Then
        $$
        q^\top \underbrace{\left( \textstyle \sum_{t=p-1}^{T - 2} \eta_{t+1} A X_t B^\top + \eta_{t+1} B X_t^\top A^\top \right)}_{M + M^\top} q \leq {2\enorm{M q}}.
        $$
        Next, we aim to bound $\enorm{Mq}$. First note that only the first column of $M$ is nonzero which evaluates to
        \begin{align}
            M_{\cdot, 1} = {\eta_{t+1} \cdot \begin{bmatrix}
                 \sum_{t=p-1}^{T - 2} \alpha^\top X_t \\
                \sum_{t=p-1}^{T - 2} x_t \\
                \vdots \\
                \sum_{t=p-1}^{T - 2} x_{t-p+1}
            \end{bmatrix}}
        \end{align}
        {Therefore, we have $Mq = q_1 M_{\cdot, 1}$}, which implies that for any unit vector $q$, 
         $$
        q^\top {\left( M + M^\top \right)} q \leq 2 \sqrt{\sum_{i =1}^p M_{i,1}^2}.
        $$

        Therefore, we will bound $\prob{\cE^c_3}$ by
        \begin{align}
            \prob{\cE^c_3} &\le \prob{  4 {\sum_{i =1}^p M_{i,1}^2} \ge  \epsilon^2 \sigma^4 (T - p)^2 / 9 } \nonumber \\
            &\le  \sum_{i =1}^p \prob{    {|M_{i,1}|} \ge  \frac{\epsilon \sigma^2 (T - p)}  {6\sqrt{p}} } \label{eq:bound_event}
        \end{align}
        First note that each term $\prob{    {|M_{i,1}|} \ge  \frac{\epsilon \sigma^2 (T - p)}  {6\sqrt{p}} }$ can be upper bounded by $2\prob{    \sum_{t= p}^{T-1} z_i^\top X_{t-1} \eta_{t} \ge  \frac{\epsilon \sigma^2 (T - p)}  {6\sqrt{p}} }$ where $z_1 = \alpha$ and $z_i = e_{i-1}$ for $i > 1$ where $e_i$ is the $i$-th standard basis vector.  Using Lemma \ref{lemma:simchowitz} and choosing $Z_t = z_i^\top X_{t-1}$ and $W_t = \eta_t$, we have that
        $$
        \prob{\left\lbrace \textstyle \sum_{t=p}^{T - 1} \eta_t z_i^\top X_{t-1} \geq \kappa \right\rbrace \cap \left\lbrace \textstyle \sum_{t=p}^{T - 1} (z_i^\top X_{t-1})^2 \leq \lambda \right\rbrace} \leq \exp{\left(-\frac{\kappa^2}{2 \sigma^2 \lambda}\right)},
        $$
        and thus
        \begin{align*}
        \prob{\textstyle \sum_{t=p}^{T - 1} \eta_t z_i^\top X_{t-1} \geq \kappa} &\leq \exp{\left(-\frac{\kappa^2}{2 \sigma^2 \lambda}\right)} + \prob{\sum_{t=p}^{T - 1} (z_i^\top X_{t-1})^2 > \lambda} \\
        &\leq \exp{\left(-\frac{\kappa^2}{2 \sigma^2 \lambda}\right)} + \prob{\sum_{t=p}^{T - 1} \enorm{X_{t-1}}^2 > \lambda/\enorm{z_i}^2}.
        \end{align*}
        What is left is to bound $ \prob{\sum_{t=0}^{T-2} x_t^2 > \frac{\lambda} {p\enorm{z}^2}}.$ Using Lemma \ref{lemma:sub_gauss_v_norm}, we get, 
        \begin{align}
        \prob{\sum_{t=0}^{T-2} x_t^2 > \frac{\lambda} {p\enorm{z}^2}} \leq c\exp\left(-\frac{\lambda} {16pT {\sigma_x^2} \enorm{z}^2} \right) 
        \end{align}
        Choosing $\kappa = \epsilon \sigma^2 (T-p)/(6\sqrt{p})$ and $\lambda = \enorm{z}^2 \epsilon \sigma^2 T^{3/2}$ yields
        \begin{align*}
        \prob{\textstyle \sum_{t=p}^{T - 1} \eta_t z_i^\top X_{t-1} \geq \frac{\epsilon \sigma^2 (T-p)}{6\sqrt{p}} } 
        &\leq \exp{\left(-\frac{\epsilon(T-p)^2}{72p  \enorm{z}^2   T^{3/2}}\right)} + c\exp\left(-\frac{\epsilon \sigma^2  \sqrt{T}} {16p\sigma_x^2 } \right).
        \end{align*}
        Which implies, that for $T > 50p$, we have, 
        \begin{align} \label{eq:bound_ind_prop}
            \prob{\textstyle \sum_{t=p}^{T - 1} \eta_t z_i^\top X_{t-1} \geq \frac{\epsilon \sigma^2 (T-p)}{6\sqrt{p}}} 
        &\leq c \exp{\left( -\frac{\epsilon \sqrt{T}}{p} \min \left\{ \frac{1}{75\enorm{z}^2}, \frac{\sigma^2}{16\sigma_x^2} \right\} \right)},
        \end{align}
        for some absolute constants $c,C>0$. {Now, note that
        \begin{align*}
            \min \left\{ \frac{1}{75\enorm{z}^2}, \frac{\sigma^2}{16\sigma_x^2} \right\} &\geq \frac{1}{75} \min \left\{ \frac{1}{\enorm{z_i}^2}, \frac{\sigma^2}{\sigma_x^2} \right\} \\
            &\geq \frac{1}{75} \frac{\sigma^2}{\sigma_x^2} \min \left\{ \frac{1}{\enorm{z_i}^2}, 1 \right\} \\
            &\geq \frac{1}{75} \frac{\sigma^2}{\sigma_x^2} \frac{1}{1 + \enorm{z_i}^2} \\
            &\geq \frac{1}{75} \frac{\sigma^2}{\sigma_x^2} \frac{1}{2 + \enorm{\alpha}^2}.
        \end{align*}
        Thus, using \eqref{eq:bound_ind_prop} and \eqref{eq:bound_event}, we have
                \begin{align}
            \prob{\cE^c_3} 
            &\le  \sum_{i =1}^p \prob{    {|M_{i,1}|} \ge  \frac{\epsilon \sigma^2 (T - p)}  {6\sqrt{p}} } \\
            &\le  \sum_{i =1}^p 2\prob{    \sum_{t= p}^{T-1} z_i^\top X_{t-1} \eta_{t} \ge  \frac{\epsilon \sigma^2 (T - p)}  {6\sqrt{p}} } \\
            &\le c p \exp{\left(-\frac{\epsilon \sigma^2 \sqrt{T}}{75p  \sigma_x^2 \left(2 + \enorm{\alpha}^2\right)   }\right)}.
        \end{align}}
        Using the condition $ T> T_0(\epsilon)$ concludes the proof.
\end{proof}

Now we argue that the sum of the probabilities deduced from lemmas \eqref{lemma:e1}, \eqref{lemma:e2}, and \eqref{lemma:e3} can be expressed in the simple form of $\delta(\epsilon, T)$ as defined above. First, from lemma \eqref{lemma:e1} we have
$$
\prob{\cE'_1} \leq c \exp\left(- \frac{ 1 }{3 (\enorm{\alpha}^2 + 1) } \frac{\sigma^2}{\sigma_x^2} \frac{  \epsilon \sqrt{T}}{ p } \right).
$$
From lemma \eqref{lemma:e2}, and $T> T_0(\epsilon) > 50p$ we get
$$
\prob{\cE'_2} \leq c \exp{\left( \frac{-\epsilon\sqrt{T-p}}{48} \right)} \leq c \exp{\left( \frac{-\epsilon\sqrt{T}}{49} \right)}.
$$
Lastly, lemma \eqref{lemma:e3} says, for $T \geq T_0(\epsilon)$,
$$
\prob{\cE'_3} \leq c\exp{\left(-\frac{\epsilon \sigma^2 \sqrt{T}}{75 p\sigma_x^2 \left(2 + \enorm{\alpha}^2 \right)   }\right)}
$$
Thus we can write
$$
\prob{\cE'_1 \cup \cE'_2 \cup \cE'_3} \leq c \exp{\left( -  \frac{ 1 }{75 \left(2 + \enorm{\alpha}^2 \right)  }  
 \frac{\epsilon \sigma^2 \sqrt{T}}{p\sigma_x^2}  \right) } \coloneqq \delta(\epsilon, T).  
$$
Where in the statement we set $\tilde{c} \coloneqq  \frac{ 1 }{75\left(2 + \enorm{\alpha}^2 \right)} \frac{\sigma^2}{\sigma_x^2}   $.  
Using lemmas \eqref{lemma:e1}, \eqref{lemma:e2}, and \eqref{lemma:e3} and the subadditivity of the spectral radii of symmetric matrices, we get with probability at least $1 - \delta(\epsilon, T)$ that
\begin{align*}
\cE_1 \cap \cE_2 \cap \cE_3 &\implies \sradius{E_T - \sigma^2 B B^\top} \leq \epsilon \sigma^2 \\
&\implies \sigma^2(BB^\top - \epsilon I) \preceq E_T \preceq \sigma^2(BB^\top + \epsilon I) \\
&\implies \sigma^2 (T - p) (\Psi - \epsilon \Gamma) \preceq \mathbf{X^\top}\bX \preceq \sigma^2 (T - p) (\Psi + \epsilon \Gamma).
\end{align*}

\end{proof}

Finally, we establish the following corollary, which is an immediate consequence of Theorem \ref{thm:spectrum}.
\begin{cor} \label{cor:spectrum} Consider the $\AR{p}$ process $x_t$ described in \eqref{eq:AR_system}. Let $\bX \in \mathbb{R}^{(T - p) \times p}$ such that $[\bX]_{ij} = x_{i+j-1}$. Then, for sufficiently large $T$ such that $ \frac{\log(T)}{\sqrt{T}} < \frac{\tilde{c}}{22p} \left(\frac{ \lambda_{\min}(\Psi)}{\lambda_{\max} (\Gamma)}\right)$,  we have that with probability $1 -\frac{c}{T^{11}}$, 
 \begin{align}
     \lambda_{\min}(\bX^\top \bX) \geq  \frac{1}{2} \sigma^2 (T-p) \lambda_{\min}(\Psi), \\
      \lambda_{\max}(\bX^\top \bX) \leq  \frac{3}{2} \sigma^2 (T-p) \lambda_{\max}(\Psi),
 \end{align}

where $c$, $C$ are constants and $\tilde{c} \coloneqq  \frac{ 1 }{75\left(2 + \enorm{\alpha}^2 \right)} \frac{\sigma^2}{\sigma_x^2}$, $\psi$ (the controllability Gramian) and $\Gamma$ are defined in \eqref{eq:psi}.
\end{cor}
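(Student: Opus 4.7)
\textbf{Proof proposal for Corollary \ref{cor:spectrum}.}

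The plan is to invoke Theorem \ref{thm:spectrum} with a carefully chosen $\epsilon$ that simultaneously (a) makes the spectral bounds $V_\ell, V_u$ yield the desired eigenvalue inequalities and (b) pushes the failure probability $\delta(\epsilon,T)$ below the target $c/T^{11}$. Concretely, I would set
\[
\epsilon = \frac{\lambda_{\min}(\Psi)}{2\lambda_{\max}(\Gamma)},
\]
which is valid (in particular $\epsilon \le 1/2$) by the remark following Theorem \ref{thm:spectrum}, since $\Gamma \succeq \Psi$.

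With this $\epsilon$, a single application of Weyl's inequality converts the sandwich $V_\ell \preceq \bX^\top\bX \preceq V_u$ into the sought eigenvalue bounds. For the lower bound, $\lambda_{\min}(\Psi - \epsilon\Gamma) \ge \lambda_{\min}(\Psi) - \epsilon\lambda_{\max}(\Gamma) = \tfrac12 \lambda_{\min}(\Psi)$, so $\lambda_{\min}(\bX^\top \bX) \ge \tfrac12\sigma^2(T-p)\lambda_{\min}(\Psi)$. For the upper bound, $\lambda_{\max}(\Psi + \epsilon\Gamma) \le \lambda_{\max}(\Psi) + \epsilon \lambda_{\max}(\Gamma) = \lambda_{\max}(\Psi) + \tfrac12 \lambda_{\min}(\Psi) \le \tfrac32 \lambda_{\max}(\Psi)$, yielding the matching upper bound on $\lambda_{\max}(\bX^\top \bX)$.

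To calibrate the probability, I would force $\delta(\epsilon,T) = c\exp(-\tilde{c}\,\epsilon\sqrt{T}/p) \le c/T^{11}$, which is equivalent to $\tilde{c}\,\epsilon\sqrt{T}/p \ge 11 \log T$, i.e.\ $\log(T)/\sqrt{T} \le \tilde{c}\,\epsilon/(11 p)$. Substituting $\epsilon = \lambda_{\min}(\Psi)/(2\lambda_{\max}(\Gamma))$ reproduces the hypothesis $\log(T)/\sqrt{T} < \tilde{c}/(22p) \cdot \lambda_{\min}(\Psi)/\lambda_{\max}(\Gamma)$ stated in the corollary. The remaining requirement of Theorem \ref{thm:spectrum}, namely $T > T_0(\epsilon)$, is absorbed into the same "sufficiently large $T$" clause, since $T_0(\epsilon)$ scales polynomially in $1/\epsilon$ and $p$ while the hypothesis already forces $\sqrt{T}/\log T$ to exceed a quantity of order $p/\epsilon$.

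There is really no hard step here; the only thing to be careful about is bookkeeping of the two competing constraints on $\epsilon$: it must be small enough that $V_\ell$ is positive definite and that the resulting multiplicative constants land at $\tfrac12$ and $\tfrac32$, yet large enough that $\delta(\epsilon,T)$ decays at the desired polynomial rate. The choice $\epsilon = \lambda_{\min}(\Psi)/(2\lambda_{\max}(\Gamma))$ is essentially forced by requiring the multiplicative slack to be exactly $1/2$ in Weyl's inequality, and the hypothesis on $\log(T)/\sqrt{T}$ is then exactly the condition making both the $T_0(\epsilon)$ sample-size requirement and the $T^{-11}$ probability bound hold simultaneously.
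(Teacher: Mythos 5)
Your proposal is correct and follows essentially the same route as the paper: invoke Theorem \ref{thm:spectrum} together with Weyl's inequality, with the hypothesis on $\log(T)/\sqrt{T}$ serving to reconcile the constraint $\epsilon \lesssim \lambda_{\min}(\Psi)/\lambda_{\max}(\Gamma)$ (needed for the $\tfrac12$ and $\tfrac32$ factors) with the requirement $\delta(\epsilon,T)\le c/T^{11}$. The only difference is which endpoint of the admissible $\epsilon$-interval you fix: you take the constant $\epsilon=\lambda_{\min}(\Psi)/(2\lambda_{\max}(\Gamma))$ and use the hypothesis to control the probability, whereas the paper takes $\epsilon = 11p\log(T)/(\tilde{c}\sqrt{T})$ and uses the hypothesis to verify $\epsilon\lambda_{\max}(\Gamma)\le\tfrac12\lambda_{\min}(\Psi)$ --- the two checks are interchangeable and your verification of $T>T_0(\epsilon)$ is no less careful than the paper's.
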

\begin{proof}
Note that according to Theorem \ref{thm:spectrum}, Weyl's inequality (see Lemma \ref{lemma:weyl}), 
and using $\epsilon =   {\frac{11p\log(T)}{\tilde{c} \sqrt{T}}} $ we have, with probability $ 1- \frac{c}{T^D}$,   
\begin{align}
    \lambda_{\min}(\bX^\top \bX) &\geq \sigma^2 (T-p)\left( \lambda_{\min} (\Psi) - \epsilon   \lambda_{\max}(\Gamma) \right) 
   \\ &\geq \sigma^2 (T-p)\left( \lambda_{\min} (\Psi)  -  11 {\frac{p\log(T)}{\tilde{c} \sqrt{T}}}    \lambda_{\max}(\Gamma) \right)  \\
   \ &\geq \frac{1}{2}  \sigma^2 (T-p)\left( \lambda_{\min} (\Psi)  \right).
\end{align}
Also, 
\begin{align}
    \lambda_{\max}(\bX^\top \bX) &\leq \sigma^2 (T-p)\left( \lambda_{\min} (\Psi) + \epsilon   \lambda_{\max}(\Gamma) \right) \\
    &\leq \frac{3}{2}  \sigma^2 (T-p)\left( \lambda_{\min} (\Psi)  \right). 
\end{align}
 What is left is to check the condition on $T$ and $\epsilon$. Note that Theorem \ref{thm:spectrum} requires $T >    \frac{75^2 \sigma_x^4 p^2  \log(p)^2} {\sigma^4 \epsilon^2} \max\{1, \enorm{\alpha}^4\}$, which  with our choice of $\epsilon$ implies $T > p$. To see that the choice of $\epsilon$ is valid, note that ${\frac{11p\log(T)}{\tilde{c} \sqrt{T}}} \leq \frac{ \lambda_{\min}(\Psi)}{2\lambda_{\max} (\Gamma)}$, which validates the choice of $\epsilon$.
\end{proof}

\newpage
\begin{theorem} \label{thm:error}
Consider the $\AR{p}$ process described in \eqref{eq:AR_system_app}. where $\left\{\eta_t\right\}$ is an i.i.d. mean-zero sub-gaussian noise with variance $\sigma^2$. Let $\alpha \coloneqq [\alpha_1, \dots, \alpha_p]^\top$ and $\bar{\alpha}$ be its least squares estimator. If $T  \geq T_0(\epsilon)$, then for any unit vector $w \in \mathcal{S}_{p}$, 
$$
\mathbb{P}\left(\left|w^{\top}
\left(\bar{\alpha}-\alpha\right)\right|>2 \sigma\left\|w^{\top} V_{\ell}^{-1 / 2}\right\|_2 \sqrt{\log \left(\frac{\detrm{V_{u} V_{\ell}^{-1}+I_p}^{1 / 2}}{\delta(\epsilon, T)}\right)}\right) \leq 2 \delta(\epsilon, T),
$$
where $V_{\ell}, V_{u}$ $T_0(\epsilon)$, and $\delta(\epsilon, T)$ are as described in Theorem \ref{thm:spectrum}.
\end{theorem}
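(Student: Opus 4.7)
\textbf{Proof proposal for Theorem \ref{thm:error}.} The natural starting point is the closed-form expression for the residual: since the noise $\eta_t$ is zero-mean and independent of the past, writing $E = [\eta_{p+1},\dots,\eta_T]^\top$ we have $\bar\alpha - \alpha = (\bX^\top \bX)^{-1}\bX^\top E$. Letting $M \coloneqq \bX^\top \bX$ and applying Cauchy--Schwarz in the Mahalanobis norm induced by $M^{-1}$ gives
\begin{align*}
|w^\top(\bar\alpha-\alpha)| \;=\; \bigl|(M^{-1/2}w)^\top (M^{-1/2}\bX^\top E)\bigr| \;\leq\; \|w\|_{M^{-1}}\cdot \|\bX^\top E\|_{M^{-1}}.
\end{align*}
The two factors will then be bounded separately, the first by a deterministic spectral argument and the second by a self-normalized martingale argument.

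For the deterministic factor, I condition on the event of Theorem~\ref{thm:spectrum}, i.e.\ $V_\ell \preceq M \preceq V_u$, which holds with probability at least $1-\delta(\epsilon,T)$. On this event $M^{-1}\preceq V_\ell^{-1}$, so $\|w\|_{M^{-1}} \leq \|V_\ell^{-1/2}w\|_2$. Also $V_\ell + M \preceq 2M$, giving $(V_\ell+M)^{-1}\succeq \tfrac{1}{2}M^{-1}$ and hence $\|\bX^\top E\|_{M^{-1}}^2 \leq 2\|\bX^\top E\|_{(V_\ell+M)^{-1}}^2$. Finally, since $\det(V_\ell+M)\leq \det(V_\ell+V_u)$ on this event, we get $\det(V_\ell+M)^{1/2}\det(V_\ell)^{-1/2}\leq \det(V_u V_\ell^{-1}+I_p)^{1/2}$, a deterministic quantity appearing in the theorem statement.

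For the self-normalized factor, I invoke a sub-Gaussian version of the Abbasi-Yadkori--P\'al--Szepesv\'ari self-normalized concentration inequality applied to the martingale $S_t = \sum_{s=p}^{t-1}\eta_{s+1} X_s$ with predictable regularizer $V_\ell$. Under Assumption~\ref{assm:subgausian_noise}, this yields that with probability at least $1-\delta$,
\begin{align*}
\|\bX^\top E\|_{(V_\ell+M)^{-1}} \;\leq\; \sigma\,\sqrt{2\log\!\left(\frac{\det(V_\ell+M)^{1/2}\det(V_\ell)^{-1/2}}{\delta}\right)}.
\end{align*}
Setting $\delta = \delta(\epsilon,T)$ and combining with the spectral bounds above through a union bound (total failure probability at most $2\delta(\epsilon,T)$) produces, after gathering the $\sqrt{2}\cdot\sqrt{2} = 2$ constants,
\begin{align*}
|w^\top(\bar\alpha-\alpha)| \;\leq\; 2\sigma\,\|V_\ell^{-1/2}w\|_2\,\sqrt{\log\!\left(\frac{\det(V_u V_\ell^{-1}+I_p)^{1/2}}{\delta(\epsilon,T)}\right)},
\end{align*}
which is exactly the claim.

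The main obstacle is the self-normalized tail bound under sub-Gaussian (rather than Gaussian) innovations: the Laplace/method-of-mixtures construction of Abbasi-Yadkori et al.\ requires an MGF bound of the form $\mathbb{E}[\exp(\lambda \eta_t)\mid\mathcal{F}_{t-1}]\leq \exp(\lambda^2\sigma^2/2)$, which is provided by Assumption~\ref{assm:subgausian_noise}; the super-martingale $\exp(\lambda^\top S_t - \tfrac{\sigma^2}{2}\lambda^\top \bX^\top \bX\lambda)$ construction and subsequent Gaussian mixing over $\lambda$ then go through verbatim. Care is needed so that $V_\ell$ is treated as a deterministic regularizer (it depends only on $T,p,\epsilon$ and the process parameters, not on the realization), ensuring the martingale structure is preserved; beyond that, all remaining steps are routine linear algebra (Loewner monotonicity of $\det$ and of the matrix inverse) and a final union bound.
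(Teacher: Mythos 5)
Your proposal is correct and follows essentially the same route as the paper's proof: the Cauchy--Schwarz split $|w^\top(\bar\alpha-\alpha)|\leq \|w^\top M^{-1/2}\|_2\,\|M^{-1/2}\bX^\top E\|_2$, the Abbasi-Yadkori--P\'al--Szepesv\'ari self-normalized bound with regularizer $V_\ell$ for the stochastic factor, and the event of Theorem \ref{thm:spectrum} (giving $V_\ell \preceq \bX^\top\bX \preceq V_u$, hence $(\bX^\top\bX+V_\ell)^{-1}\succeq \tfrac{1}{2}(\bX^\top\bX)^{-1}$ and the determinant bound) combined via a union bound for the $2\delta(\epsilon,T)$ failure probability. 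The constant bookkeeping and the sub-Gaussian MGF justification match the paper's argument as well.
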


\begin{proof}
Write $E = [\eta_{p+1},\dots,\eta_T]^\top$. Using Cauchy-Schwarz, we have
\begin{align*}
    |w^\top (\hat \alpha(T) - \alpha)| &= |w^\top (\bfX^\top \bfX)^{-1} \bfX^\top E| \\
    &\leq \enorm{w^\top (\bfX^\top \bfX)^{-1/2}} \enorm{(\bfX^\top \bfX)^{-1/2} \bfX^\top E}.
\end{align*}
Now we will need to introduce a lemma from \cite{abbasi2011}:
\begin{lemma}
    Let $\{\cF_t\}_{t=0}^\infty$ be a filtration. Let $\{\eta_t\}_{t=1}^\infty$ be a real-valued stochastic process such that $\eta_t$ is $\cF_t$-measurable and is conditionally sub-gaussian with variance proxy $\sigma^2 > 0$. Further, let $\{X_t\}_{t=0}^\infty$ be an $\mathbb{R}^d$-valued stochastic process such that $X_t$ is $\cF_{t-1}$-measurable. Consider any positive definite matrix $V \in \mathbb{R}^{d \times d}$. For any $t \geq 1$, define
    $$
    \bar V_t = V + \sum_{s=1}^t X_s X_s^\top, \quad S_t = \sum_{s=1}^t \eta_s X_s.
    $$
    Then for any $\delta > 0$, with probability at least $1 - \delta$ for all $t \geq 1$,
    $$
    \|S_t\|^2_{\bar V_t^{-1}} \leq 2 \sigma^2 \left( \frac{\det{(\bar V_t)}^{1/2} \det{(V)}^{-1/2}}{\delta} \right).
    $$
\end{lemma}
We will apply this result to bound the second term above. Choosing $V = V_\ell$, $\sum_{s=1}^t X_s X_s^\top = \bfX^\top \bfX$, and $S_t = \bfX^\top E$, we have with probability at least $1 - \delta(\epsilon, T)$
$$
    \|\bfX^\top E\|_{(\bfX^\top \bfX + V_\ell)^{-1}} \leq \sqrt{2\sigma^2 \log{\left( \frac{ \det{(\bfX^\top \bfX + V_\ell)}^{1/2} \det{(V_\ell)}^{-1/2} }{\delta(\epsilon, T)} \right)}}.
$$
Now further condition on the event of Theorem \ref{thm:spectrum}, which occur together with the condition above with probability at least $1 - 2\delta(\epsilon, T)$. Recall, then, that $\bfX^\top \bfX + V_\ell \preceq 2 \bfX^\top \bfX$, and so $(\bfX^\top \bfX + V_\ell)^{-1} \succeq \frac{1}{2} (\bfX^\top \bfX)^{-1}$. This means that
\begin{align*}
    \enorm{(\bfX^\top \bfX)^{-1/2} \bfX^\top E} &=  \|\bfX^\top E\|_{(\bfX^\top \bfX)^{-1}} \\
    &\leq  \sqrt{2} \|\bfX^\top E\|_{(\bfX^\top \bfX + V_\ell)^{-1}} \\
    &\leq 2 \sigma \sqrt{ \log{\left( \frac{ \det{(\bfX^\top \bfX + V_\ell)}^{1/2} \det{(V_\ell)}^{-1/2} }{\delta(\epsilon, T)} \right)} } \\
    &= 2 \sigma \sqrt{ \log{\left( \frac{ \det{(\bfX^\top \bfX V_\ell^{-1} + I_p)}^{1/2}}{\delta(\epsilon, T)} \right)} } \\
    &\leq 2 \sigma \sqrt{ \log{\left( \frac{ \det{(V_u V_\ell^{-1} + I_p)}^{1/2}}{\delta(\epsilon, T)} \right)} }.
\end{align*}
Finally, it is clear that under the conditions in Theorem \ref{thm:spectrum} we have $\enorm{w^\top (\bfX^\top \bfX)^{-1/2}} \leq \enorm{w^\top V_\ell^{-1/2}}$.
\end{proof}

\newpage
\begin{cor} \label{cor:gap}
Let $\epsilon =  {12\frac{p\log(T)}{\tilde{c} \sqrt{T}}}$,  where $\tilde{c}$ is as defined in Theorem \eqref{thm:spectrum}. 
Then, for $T$ such that $ \frac{\log(T)}{\sqrt{T}} < \frac{\tilde{c}}{24p} \left(\frac{ \lambda_{\min}(\Psi)}{\lambda_{\max} (\Gamma)}\right)$ we have
$$
\mathbb{P}\left(\enorm{\bar{\alpha}-\alpha}> Cp \sqrt{\frac{1}{ T \lambda_{\min}(\Psi) }\log \left( \frac{T\lambda_{\max}(\Psi) }{\lambda_{\min}(\Psi)  }\right)}   \right) \leq  \frac{c}{T^{11}},
$$
\end{cor}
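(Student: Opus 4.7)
The statement is obtained by specializing Theorem~\ref{thm:error} to the stated $\epsilon$ and translating its per-direction deviation bound into a bound on the Euclidean norm. The plan has four steps.

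First, I would verify the hypotheses of Theorem~\ref{thm:error} for $\epsilon = 12\, p \log(T)/(\tilde c \sqrt{T})$. The assumed condition $\log(T)/\sqrt{T} < \tilde c \lambda_{\min}(\Psi)/(24\, p\, \lambda_{\max}(\Gamma))$ is equivalent (up to the factor of $2$ in the denominator) to $\epsilon\, \lambda_{\max}(\Gamma) \le \tfrac12 \lambda_{\min}(\Psi)$. By Weyl's inequality this gives
$V_\ell \succeq \tfrac12 \sigma^2 (T-p)\lambda_{\min}(\Psi) I_p \succ 0$ and $V_u \preceq \tfrac32 \sigma^2 (T-p) \lambda_{\max}(\Psi) I_p$, so in particular $V_\ell \succ 0$. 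A short calculation also shows that the condition on $T$ implies $T \ge T_0(\epsilon)$, matching the sample-size requirement of Theorems~\ref{thm:spectrum} and~\ref{thm:error}.

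Second, I would substitute our $\epsilon$ into $\delta(\epsilon,T) = c\exp(-\tilde c \epsilon \sqrt{T}/p)$ to get $\delta(\epsilon, T) = c/T^{12}$, so the failure probability $2\delta$ in Theorem~\ref{thm:error} is at most $c'/T^{11}$.

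Third, I would convert the $w$-pointwise statement of Theorem~\ref{thm:error} into a bound on $\enorm{\bar\alpha - \alpha}$. Inspection of its proof shows that the high-probability event (the spectrum event from Theorem~\ref{thm:spectrum} intersected with the Abbasi-Yadkori self-normalized tail event) does \emph{not} depend on $w$. Consequently, on this event, the inequality of Theorem~\ref{thm:error} holds simultaneously for every unit vector $w$, and I may choose $w = (\bar\alpha - \alpha)/\enorm{\bar\alpha - \alpha}$. Using $\sup_{\enorm{w}=1} \enorm{w^\top V_\ell^{-1/2}} = 1/\sqrt{\lambda_{\min}(V_\ell)}$, this yields
\[
\enorm{\bar\alpha - \alpha} \;\le\; \frac{2\sigma}{\sqrt{\lambda_{\min}(V_\ell)}}\,\sqrt{\log\!\left(\frac{\detrm{V_u V_\ell^{-1}+I_p}^{1/2}}{\delta(\epsilon,T)}\right)}.
\]

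Finally, I would plug in the spectral bounds. Step one gives $1/\sqrt{\lambda_{\min}(V_\ell)} \le \sqrt{2/(\sigma^2 T \lambda_{\min}(\Psi))}$ and $V_u V_\ell^{-1} \preceq (3\lambda_{\max}(\Psi)/\lambda_{\min}(\Psi))\,I_p$, so $\detrm{V_u V_\ell^{-1}+I_p} \le (1 + 3\lambda_{\max}(\Psi)/\lambda_{\min}(\Psi))^p$; step two gives $\log(1/\delta) \le 12\log T + O(1)$. Combining these,
\[
\enorm{\bar\alpha - \alpha}^2 \;\le\; \frac{C\, p}{T\, \lambda_{\min}(\Psi)}\,\log\!\left(\frac{T\,\lambda_{\max}(\Psi)}{\lambda_{\min}(\Psi)}\right),
\]
which (absorbing $\sqrt{p}$ into $p$) is the bound in the statement, holding with probability at least $1 - c/T^{11}$.

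\textbf{Main obstacle.} No new probabilistic ingredients are needed; the only point requiring care is the conversion from the per-$w$ deviation bound to a uniform-in-$w$ statement, which succeeds here precisely because the underlying high-probability event in Theorem~\ref{thm:error} is $w$-independent (so no $\varepsilon$-net/union-bound cost is incurred). Everything else is an algebraic substitution and an application of Weyl's inequality.
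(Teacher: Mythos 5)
Your proposal is correct and follows the paper's overall route: specialize Theorem~\ref{thm:error} to $\epsilon = 12p\log(T)/(\tilde c\sqrt T)$, check that the hypothesis on $T$ forces $\epsilon\lambda_{\max}(\Gamma)\le\tfrac12\lambda_{\min}(\Psi)$ so that $V_\ell\succeq \tfrac12\sigma^2(T-p)\lambda_{\min}(\Psi)I_p$, compute $\delta(\epsilon,T)=c/T^{12}$, and bound $\log\detrm{V_uV_\ell^{-1}+I_p}^{1/2}\le p\log(4\lambda_{\max}(\Psi)/\lambda_{\min}(\Psi))$. The one place you genuinely diverge is the conversion from the per-direction bound to a bound on $\enorm{\bar\alpha-\alpha}$: the paper applies Theorem~\ref{thm:error} with $w=e_i$ for each coordinate $i\in[p]$ and then writes $\enorm{\bar\alpha-\alpha}\le\sqrt{\sum_i|\bar\alpha_i-\alpha_i|^2}$ with a union bound over the $p$ coordinates, which is where the extra factor of $p$ (rather than $\sqrt p$) in the stated bound comes from. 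You instead observe that the underlying high-probability event (the spectrum event of Theorem~\ref{thm:spectrum} intersected with the self-normalized martingale event) does not depend on $w$, so the inequality holds uniformly in $w$ and you may take $w=(\bar\alpha-\alpha)/\enorm{\bar\alpha-\alpha}$; this is a valid reading of the proof of Theorem~\ref{thm:error} (on the spectrum event $(\bfX^\top\bfX)^{-1}\preceq V_\ell^{-1}$ simultaneously for all directions), it avoids any union-bound cost, and it yields a bound smaller by a factor of $\sqrt p$ than the one stated, which of course still implies the corollary. Your verification that the condition on $T$ also gives $T\ge T_0(\epsilon)$ is a point the paper's own proof of this corollary glosses over, so including it is a small improvement in rigor.
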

\begin{proof}
First, note that with the choice of $\epsilon =  {12\frac{p\log(T)}{\tilde{c} \sqrt{T}}}$ we have, 
\begin{align}
     \delta(\epsilon, T) &= c \exp (- \log(T^{12})) \nonumber \\
     &= \frac{c}{T^{12}} \label{eq:delta}
\end{align}
Hence, $-\log( \delta(\epsilon, T) ) =  C \log(T)$.
Further, 
\begin{align}
    \left\|w^{\top} V_{\ell}^{-1 / 2}\right\|_2  &\leq \enorm{V_{\ell}^{-1 / 2}} \nonumber \\
    &= \sqrt{\lambda_{\max}({V_{\ell}^{-1})}} \nonumber \\
    &= \frac{1} {\sqrt{\lambda_{\min}({V_{\ell})}}} \nonumber \\
    &\leq \frac{1}{\sigma \sqrt{(T-p) \left( \lambda_{\min}(\Psi) - \epsilon\lambda_{\max} (\Gamma) \right)} }  \nonumber \\
    &\leq \frac{2}{\sigma \sqrt{(T-p) \lambda_{\min}(\Psi) }},  \label{eq:norm_term}
\end{align}
where the last inequality holds with probability $1- \delta(\epsilon, T)$ using Theorem \ref{thm:spectrum}.  Finally, consider 
\begin{align}
    \log \left(\detrm{V_{u} V_{\ell}^{-1}+I_p}^{1 / 2}\right) = \frac{1}{2}\log \left( \frac{\detrm{V_{u} + V_{\ell} }}{\detrm{V_{\ell}}}\right) \nonumber \\
    = \frac{1}{2}\log \left( \frac{ 2^p\sigma^{2p} (T-p)^p \detrm{\Psi} }{\detrm{V_{\ell}}}\right) \label {eq:bound_log_term}
\end{align}
Note that, again by our assumed condition on $\epsilon$,
\begin{align} 
\detrm{V_{\ell}} &\geq \frac{1}{2}\detrm{\Psi}  \nonumber \\
&\geq   \frac{1}{2} \sigma^{2p} (T-p)^{p} \lambda_{\min}(\Psi)^p \label{eq:bound_det_vl}
\end{align}
Using \eqref{eq:bound_det_vl} and  \eqref{eq:bound_log_term}, we have, 

\begin{align}
    \log \left(\detrm{V_{u} V_{\ell}^{-1}+I_p}^{1 / 2}\right) &\leq  \frac{1}{2}\log \left( \frac{ 2^{p+1}\detrm{\Psi} }{\lambda_{\min}(\Psi)^p  }\right) \nonumber \\
    &\leq  p \log \left( \frac{ 4\lambda_{\max}(\Psi) }{\lambda_{\min}(\Psi)  }\right) \label{eq:logterm}
\end{align} 

From \eqref{eq:logterm}, \eqref{eq:norm_term} and \eqref{eq:delta}, we have,
\begin{align}
    2 \sigma\left\|w^{\top} V_{\ell}^{-1 / 2}\right\|_2 \sqrt{\log \left(\frac{\detrm{V_{u} V_{\ell}^{-1}+I_p}^{1 / 2}}{\delta(\epsilon, T)}\right)} \nonumber \\ \leq \sqrt{\frac{Cp}{ (T-p) \lambda_{\min}(\Psi) }} \sqrt{ \log \left( \frac{4\lambda_{\max}(\Psi) }{\lambda_{\min}(\Psi)  }\right) + \log(T)}.   \label{eq:upper_bound_eps}
\end{align}
Using \eqref{eq:upper_bound_eps} and  Theorem \ref{thm:error}, we get that with probability $1-\frac{2c}{T^{12}}$
\begin{align}
    |\bar{\alpha}_i(T)-\alpha_i| \leq \sqrt{\frac{Cp}{ (T-p) \lambda_{\min}(\Psi) }} \sqrt{ \log \left( \frac{4\lambda_{\max}(\Psi) }{\lambda_{\min}(\Psi)  }\right) + \log(T)},  
\end{align}
For any $i \in [p]$. This implies,
\begin{align}
    \enorm{\bar{\alpha}-\alpha} \leq \sqrt{\sum_{i=1}^p |\hat{\alpha}_i(T)-\alpha_i|^2} \leq \sqrt{\frac{Cp^2}{(T-p) \lambda_{\min}(\Psi) }} \sqrt{ \log \left( \frac{4\lambda_{\max}(\Psi) }{\lambda_{\min}(\Psi)  }\right) + \log(T)},
\end{align}
with probability  $1-\frac{c}{T^{11}}$   which completes the proof. 
\end{proof}
\newpage
\section{Proof of Theorem \ref{thm:beta_id}}
\label{app:beta_proof}
\subsection{Setup}
Recall $\beta^*$ as defined in Proposition \ref{prop:exact_low_rank_linear}, and its estimate $\widehat{\beta}$ defined as 
\begin{equation} \label{eq:beta_hat_eq}
    \widehat{\beta} = {\argmin}_{\beta \in \Reals^{L-1}} \quad \sum_{m=1}^{ N \mult T/L} (y_m - \beta^\top \widehat{F}_m)^2. 
\end{equation}

To establish this bound, we first define (and recall) the following notations, 
\begin{itemize}
\item Recall that $ \bZ_y = \begin{bmatrix} \mathbf{Z}(y_1,L, T,1) & \mathbf{Z}(y_2,L, T, 1) & \dots&  \mathbf{Z}(y_n,L, T,1)\end{bmatrix}$ is the stacked Page matrix of $y_1(t), \dots, y_n(t) ~\forall t \in [T]$.
\item Let  $ \bZ_x$ be the stacked Page matrix of $x_1(t), \dots, x_n(t) ~\forall t \in [T]$.
\item Let  $ \bZ_f$ be the stacked Page matrix of $f_1(t), \dots, f_n(t) ~\forall t \in [T]$. That is, $\bZ_y = \bZ_f + \bZ_x$.
\item Let  $ \bZ'_y \in \Rb^{(L-1) \mult (NT/L)}$, be the  sub-matrix obtained by dropping the $L$-th row from the stacked Page matrix  $ \bZ_y$. Define   $\bZ'_f$ and  $\bZ'_x$ analogously. 
\item Let ${\bU}{\bSigma}{\bV}^\top$ denote the SVD of $\bZ'_f$. 
\item  let  ${\bV}^{\perp}$ and $\bU^\perp$  be matrices  of orthonormal basis vectors that span the null space of $\bZ'_f$ and ${\bZ'}_f^\top$, respectively.  
\item Let  ${\btdU}{\btdSigma}{\btdV}^\top$  denote the top k singular components  of the SVD of  $\bZ'_y$,  while $ \btdU^\perp {\btdSigma}^{\perp} ({\btdV}^{\perp})^\top$  denote the remaining $L-k-1$ components such that  $\bZ'_y ={\btdU}{\btdSigma}{\btdV}^\top+\btdU^\perp {\btdSigma}^{\perp} ({\btdV}^{\perp})^\top $. 
\item Let $\bhZp_f$ be the HSVT estimate of $\bZ'_f$ with parameter $k$. That is $\bhZ'_f = {\btdU}{\btdSigma}{\btdV}^\top $. 
\end{itemize}


\subsection{Deterministic Bound}

First, given the notation above, the solution for \eqref{eq:beta_hat_eq}, can be written as,
\begin{align}
    \sbeta =  \left({{\bhZp}_f}\right)^{\top, \dagger} [{\bZ_y}]_{L \cdot}  =   {\btdU}{\btdSigma}^{-1}{\btdV}^\top  [{\bZ_y}]_{L \cdot}
\end{align}

Then, let's consider the following expansion for $\| \sbeta - \pbeta \|_2$,

\begin{align} \label{eq:parame_est0}
 \| \sbeta - \pbeta  \|_2^2  &=    \| {\btdU}^{\perp}  ( {\btdU}^{\perp})^\top (\sbeta - \pbeta)  + {\btdU}  ( {\btdU})^\top (\sbeta - \pbeta) \|_2^2  \nonumber \\
 &=  \| {\btdU}^{\perp}  ( {\btdU}^{\perp})^\top (\sbeta - \pbeta)  \|_2^2    + \| {\btdU}  ( {\btdU})^\top (\sbeta - \pbeta) \|_2^2  \nonumber \\
 &=  \| {\btdU}^{\perp}  ( {\btdU}^{\perp})^\top (\sbeta - \pbeta)  \|_2^2    + \|  {\btdU}^\top (\sbeta - \pbeta) \|_2^2  \nonumber \\
 &=  \| {\btdU}^{\perp}  ( {\btdU}^{\perp})^\top \pbeta  \|_2^2    + \| {\btdU}^\top (\sbeta - \pbeta) \|_2^2,
\end{align} 

where the last equality follow from the fact that $ ({\btdU}^{\perp})^\top\sbeta = \left(({\btdU}^{\perp})^\top {\btdU}\right){\btdSigma}^{-1}{\btdV}^\top  [{\bZ_y}]_{L \cdot} =   0 $. 
Now,  consider the first term in \eqref{eq:parame_est0}, and note that $\pbeta  = \bU \bU^\top \pbeta$, 
\begin{align}\label{eq:ood_f5}
 \| {\btdU}^{\perp}  ({\btdU}^{\perp})^\top \pbeta  \|_2   &=  \left\| {\bU}^{\perp}  ({\bU}^{\perp})^\top  {\bU} {\bU}^\top \pbeta  + \left( {\btdU}^{\perp}  ({\btdU}^{\perp})^\top - {\bU}^{\perp}  ({\bU}^{\perp})^\top \right)  \pbeta  \right\|_2  \nonumber \\
 &\leq    \left\| \left( {\btdU}^{\perp}  ({\btdU}^{\perp})^\top  - {\bU}^{\perp}  ({\bU}^{\perp})^\top\right) \pbeta  \right\|_2  \nonumber \\
  &\leq   \left \|  {\btdU}^{\perp}  ({\btdU}^{\perp})^\top  - {\bU}^{\perp}  ({\bU}^{\perp})^\top \right\|_2  \left\| \pbeta  \right\|_2   \nonumber \\
   &=   \left \|  {\btdU}  {\btdU}^\top  - {\bU}  {\bU}^\top \right\|_2  \left\| \pbeta  \right\|_2.
\end{align} 
Next, by Wedin $\sin \Theta$ Theorem {(see \cite{davis1970rotation, wedin1972perturbation})} we have: 
\begin{align}\label{eq:davis_kahan2}
 \left \|  {\btdU}  {\btdU}^\top  - {\bU}  {\bU}^\top \right\|_2   \left\| \pbeta  \right\|_2 
&\leq  \frac{\| \bZ'_y-      \bZ'_f \|_2}{\sigma_k(\bZ'_f) } \left\| \pbeta  \right\|_2   \nonumber \\
&=  \frac{\| \bZ'_x \|_2} {\sigma_k(\bZ'_f)}  \left\| \pbeta  \right\|_2. 
\end{align}
That is, 
\begin{align}\label{eq:davis_kahan3}
\|{\btdU}^{\perp}  ({\btdU}^{\perp})^\top \pbeta  \|_2 
&\leq   \frac{\| \bZ'_x \|_2} {\sigma_k(\bZ'_f)}  \left\| \pbeta  \right\|_2. 
\end{align}
What is left is bounding  $\|  {\btdU}^\top (\sbeta - \pbeta) \|_2^2$.  To that end, first consider
\begin{align} \label{eq:parame_est1}
\|  (\bhZp_f)^\top (\sbeta - \pbeta) \|_2^2 &\leq  2 \| (\bhZp_f)^\top\sbeta - (\bZ'_f)^\top  \pbeta \|_2^2  + 2\|   (\bZ'_f)^\top  \pbeta  -  (\bhZp_f)^\top \pbeta  \|_2^2 \nonumber  \\
&\leq  2 \|   (\bhZp_f)^\top \sbeta -  (\bZ'_f)^\top  \pbeta \|_2^2  + 2\|    \bZ'_f  -   \bhZp_f   \|_{2,\infty}^2  \|\pbeta\|_1^2.
\end{align} 
Also, consider 
\begin{align} \label{eq:parame_est2}
\| (\bhZp_f)^\top  (\sbeta - \pbeta) \|_2^2 &=    (\sbeta - \pbeta)^\top  {\btdU}{\btdSigma}^2{\btdU}^\top  (\sbeta - \pbeta)  \nonumber \\
&\ge   \sigma_k(\bhZ'_f)^2  \| {\btdU}^\top  (\sbeta - \pbeta)  \|_2^2. 
\end{align} 
From \eqref{eq:parame_est2} and \eqref{eq:parame_est1} we get, 
\begin{align} \label{eq:parame_est3}
\| {\btdU}^\top  (\sbeta - \pbeta)  \|_2^2   
\leq  
\frac{2}{\sigma_k(\bhZ'_f)^2 }  \left( \|   (\bhZp_f)^\top \sbeta -   (\bZ'_f)^\top  \pbeta \|_2^2  + \|  \bZ'_f  - \bhZp_f   \|_{2,\infty}^2  \|\pbeta\|_1^2\right) .
\end{align} 
First, recall that $[\bZ_y]_{L \cdot} =   [\bZ_f]_{L \cdot} + [\bZ_x]_{L \cdot}$, and that by definition, $(\bZ'_f)^\top  \pbeta  =[\bZ_f]_{L \cdot} $.
Then the term $\|  (\bhZp_f)^\top \sbeta -   (\bZ'_f)^\top  \pbeta  \|_2^2 $ can be bounded as follows. First, consider
\begin{align} \label{eq:parame_est6}
   &\|(\bhZp_f)^\top \sbeta -  (\bZ'_f)^\top  \pbeta \|_2^2  
    \nonumber \\ &= \|(\bhZp_f)^\top \sbeta -   [\bZ_y]_{L \cdot}  \|_2^2  -  \| [\bZ_x]_{L \cdot}  \|_2^2  + 2 \left((\bhZp_f)^\top \sbeta -  (\bZ'_f)^\top  \pbeta  \right)^\top [\bZ_x]_{L \cdot} \nonumber \\
     &\leq   \|(\bhZp_f)^\top \pbeta -  [\bZ_y]_{L \cdot} \|_2^2  -  \| [\bZ_x]_{L \cdot}  \|_2^2  + 2 \left((\bhZp_f)^\top \sbeta -  (\bZ'_f)^\top  \pbeta  \right)^\top [\bZ_x]_{L \cdot}
 \nonumber  \\
     &=   \|(\bhZp_f)^\top \pbeta -  [\bZ_x]_{L \cdot} -  [\bZ_f]_{L \cdot} \|_2^2    -  \| [\bZ_x]_{L \cdot}  \|_2^2  + 2 \left((\bhZp_f)^\top \sbeta -  (\bZ'_f)^\top  \pbeta  \right)^\top [\bZ_x]_{L \cdot} 
 \nonumber  
  \\
     &=    \left\|  \left( \bhZp_f  -  \bZ'_f   \right)^\top\pbeta -    [\bZ_x]_{L \cdot} \ \right\|_2^2  -  \| [\bZ_x]_{L \cdot}  \|_2^2  + 2 \left((\bhZp_f)^\top \sbeta -  (\bZ'_f)^\top  \pbeta  \right)^\top [\bZ_x]_{L \cdot}
\nonumber \\
     &=    \left\|  \left( \bhZp_f  -  \bZ'_f   \right)^\top\pbeta\right\|_2^2   + 2 \left((\bhZp_f)^\top \sbeta - (\bhZp_f)^\top \pbeta \right)^\top [\bZ_x]_{L \cdot}  
\end{align}

Note that $(\bhZp_f)^\top \sbeta  = {\btdV}{\btdSigma}{\btdU}^\top \sbeta = {\btdV}{\btdSigma}{\btdU}^\top  {\btdU}{\btdSigma}^{\dagger}{\btdV}^\top  [{\bZ_y}]_{L \cdot} = {\btdV}{\btdV}^\top  [{\bZ_y}]_{L \cdot}$, and $(\bhZp_f)^\top \pbeta  =   {\btdV} {\btdSigma}{\btdU}^\top \pbeta$, thus, 
\begin{align}
\left((\bhZp_f)^\top \sbeta - (\bhZp_f)^\top \pbeta \right)^\top [\bZ_x]_{L \cdot}   &=    [{\bZ_y}]_{L \cdot}^\top {\btdV}{\btdV}^\top    [\bZ_x]_{L \cdot} - {\pbeta}^\top   {\btdU} {\btdSigma}{\btdV}^\top [\bZ_x]_{L \cdot}    \nonumber\\ 
&=   \left(  [{\bZ_y}]_{L \cdot}^\top {\btdV} - {\pbeta}^\top   {\btdU} {\btdSigma} \right) {\btdV}^\top [\bZ_x]_{L \cdot}   \nonumber\\
&=   \left([{\bZ_f}]_{L \cdot}^\top {\btdV} + [{\bZ_x}]_{L \cdot}^\top {\btdV} - {\pbeta}^\top   {\btdU} {\btdSigma}  \right) {\btdV}^\top [\bZ_x]_{L \cdot}   \nonumber \\ 
&=   \left( {\pbeta}^\top \bZ'_f {\btdV}  - {\pbeta}^\top   \bhZp_f {\btdV}  \right) {\btdV}^\top [\bZ_x]_{L \cdot}  + \enorm{[{\bZ_x}]_{L \cdot}^\top {\btdV}}^2   \nonumber\\
&\leq    \enorm{{\pbeta}^\top \bZ'_f  - {\pbeta}^\top   \bhZp_f } \enorm{{\btdV}^\top [\bZ_x]_{L \cdot}}  + \enorm{[{\bZ_x}]_{L \cdot}^\top {\btdV}}^2  \nonumber \\ 
& \leq  \|  \bZ'_f  - \bhZp_f   \|_{2,\infty}  \|\pbeta\|_1 \enorm{[{\bZ_x}]_{L \cdot}^\top {\btdV}} + \enorm{[{\bZ_x}]_{L \cdot}^\top {\btdV}}^2 
\end{align}

Finally, from \eqref{eq:parame_est3} and \eqref{eq:parame_est6} we get

\begin{align*} 
\| {\btdU}^\top  (\sbeta - \pbeta)  \|_2^2  
 &\leq  \frac{2}{\sigma_k(\bhZ'_f)^2 } \Bigg(  2 \|  \bZ'_f  -   \bhZp_f   \|_{2,\infty}^2  \|\pbeta\|_1^2 \nonumber \\&+  \|  \bZ'_f  - \bhZp_f   \|_{2,\infty}  \|\pbeta\|_1 \enorm{[{\bZ_x}]_{L \cdot}^\top {\btdV}} 
 \nonumber +
 \enorm{[{\bZ_x}]_{L \cdot}^\top {\btdV}}^2   \Bigg) \\
 & \leq   \frac{4}{\sigma_k(\bhZ'_f)^2 } \Bigg(  2 \|  \bZ'_f  -   \bhZp_f   \|_{2,\infty}^2  \|\pbeta\|_1^2   +
 \enorm{[{\bZ_x}]_{L \cdot}^\top {\btdV}}^2   \Bigg).
\end{align*} 

The term $ \enorm{[{\bZ_x}]_{L \cdot}^\top {\btdV}}^2$ can be further decomposed as
\begin{align*}
  \enorm{[{\bZ_x}]_{L \cdot}^\top {\btdV}}^2 	 &\leq   2 \enorm{[{\bZ_x}]_{L \cdot}^\top ({\btdV} {\btdV}^\top -  {\bV} {\bV}^\top)}^2 + 2 \enorm{[{\bZ_x}]_{L \cdot}^\top {\bV}{\bV}^\top}^2 
\end{align*}
Using Wedin $\sin \Theta$ Theorem, 
\begin{align*}
\big\| {\btdV} {\btdV}^\top -  {\bV} {\bV}^\top  \big\|_2 
&\leq  \frac{\| \bZ'_x \|_2}{\sigma_k(\bZ'_f) } 
\end{align*}

Therefore, 

\begin{align} \label{eq:parame_est7_new}
\| {\btdU}^\top  (\sbeta - \pbeta)  \|_2^2  
 & \leq   \frac{8}{\sigma_k(\bhZ'_f)^2 } \Bigg(   \|  \bZ'_f  -   \bhZp_f   \|_{2,\infty}^2  \|\pbeta\|_1^2   +
 \frac{\| \bZ'_x \|^2_2}{\sigma_k(\bZ'_f)^2} \enorm{[{\bZ_x}]_{L \cdot}}^2 +  \enorm{[{\bZ_x}]_{L \cdot}^\top {\bV}}^2    \Bigg).
\end{align}

From \eqref{eq:parame_est0}, \eqref{eq:davis_kahan3}, and \eqref{eq:parame_est7_new}  we have
\begin{align} \label{eq:parame_est_beta_det}
 \| \sbeta - \pbeta  \|_2^2  &\leq    \frac{\| \bZ'_x \|_2^2} {\sigma_k(\bZ'_f)^2}  \left( \left\| \pbeta  \right\|^2_2  +  8 \frac{\enorm{[{\bZ_x}]_{L \cdot}}^2} {\sigma_k(\bhZ'_f)^2} \right) \nonumber\\ 
 &+ 
 \frac{8}{\sigma_k(\bhZ'_f)^2 } \Bigg(  2 \|  \bZ'_f  -   \bhZp_f   \|_{2,\infty}^2  \|\pbeta\|_1^2   +
 \enorm{[{\bZ_x}]_{L \cdot}^\top {\bV}}^2   \Bigg).
\end{align}


\subsection{High probability bound}
Let $q \coloneqq NT/L$ be the number of columns in the stacked page matrix $\bhZp_f$, and let $C>0$ be some positive absolute constant. Define 
\begin{align*}
      E'_1&:= \Big\{ \norm{\bZ'_x}_2 \le {C\sigma_x \sqrt{q} }  \Big \}, 
   \\ E'_2&:= \Big\{  \enorm{[{\bZ_x}]_{L \cdot}}   \le   {C\sigma_x \sqrt{q} } \Big \},
     \\ E'_3&:= \Big\{ \|  \bZ'_f  -   \bhZp_f   \|_{2,\infty}^2 \le  \frac{C\sigma_x^2  q^2 }{ \sigma_k(\bZ_f)^2} \left( {\sigma_x^2 }	+ f^2_{\max} \right) 
+  {C\sigma_x^2 k \log(q) }  \Big\}
     \\ E'_4&:= \Big\{  \enorm{[{\bZ_x}]_{L \cdot}^\top {\bV}}   \le  {C \sigma_x \sqrt{k \log(q)}} \Big \}. 
\end{align*}

\begin{lemma} \label{lemma:prelims_beta}
For some positive constant $c_1 > 0$ and $C > 0$ large enough in definitions of $E'_1, E'_2, E'_3$ and $ E'_4$,
\begin{align*}
	\Pb(E'_1) &\ge 1 - 2 e^{-c_1q},  
\\ \Pb(E'_2) &\ge  1 - 2 e^{-c_1q},  
 \\ \Pb(E'_3) &\ge 1 -  \frac{c_1}{(NT)^{11}},  
	\\ \Pb(E'_4) &\ge 1 - \frac{c_1}{(NT)^{11}}. 
\end{align*} 
\end{lemma}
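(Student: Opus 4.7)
The plan is to bound each of the four events by reusing the concentration tools already established in Appendix~\ref{app:ar_noise_proofs} together with the HSVT deterministic bound of Lemma~\ref{lemma:column_error}. The key observation throughout is that $\bV$ and $\bZ'_f$ are deterministic (they depend only on $f_1,\dots,f_N$), so every randomness comes from the AR-noise Page matrix $\bZ_x$, whose rows are sub-gaussian vectors with variance proxy $\sigma_x^2$ by Lemma~\ref{lem:subg_vector}, and whose operator norm is controlled by Lemma~\ref{lem:op_norm_bound}.

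For $E'_1$, note that $\bZ'_x$ is the submatrix of $\bZ_x$ obtained by dropping the last row, hence $\|\bZ'_x\|_2 \le \|\bZ_x\|_2$. Applying Lemma~\ref{lem:op_norm_bound} (with $t_0=1$) directly yields $\|\bZ'_x\|_2 \le C\sigma_x\sqrt{q}$ with probability at least $1-2e^{-c_1 q}$. For $E'_2$, the vector $[\bZ_x]_{L\cdot}\in\Rb^q$ is a sub-gaussian vector with variance proxy $\sigma_x^2$ by Lemma~\ref{lem:subg_vector} (applied to the length-$q$ vector formed by the $L$-th-row entries of each column of $\bZ_x$, which are drawn from the same stationary AR trajectories). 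A standard norm tail bound for sub-gaussian vectors (Lemma~\ref{lemma:sub_gauss_v_norm}) then gives $\|[\bZ_x]_{L\cdot}\|_2 \le C\sigma_x\sqrt{q}$ with probability at least $1-2e^{-c_1 q}$.

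For $E'_3$, I would simply re-run the argument of Theorem~\ref{thm:hsvt.l2inf} verbatim on the noisy observation $\bZ'_y = \bZ'_f + \bZ'_x$ instead of on $\bZ_y = \bZ_f + \bZ_x$. The three events $E_1,E_2,E_3$ used in the proof of Theorem~\ref{thm:hsvt.l2inf} each hold in the primed setting by exactly the arguments above (for $E_1,E_2$) and by the $\varepsilon$-net / sub-gaussian quadratic-form argument from Lemma~\ref{lemma:prelims} applied to the columns of $\bZ'_x$ (for $E_3$). Noting $\text{rank}(\bZ'_f)\le k$ and $\sigma_k(\bZ'_f)\ge \sigma_k(\bZ_f)$ (since $\bZ'_f$ is obtained by restricting to a subspace of rows, so singular values only decrease but here we use that the assumed spectral lower bound from Assumption~\ref{assm:spectra} transfers up to constants), the deterministic bound $\eqref{eq:determinstic bound}$ combined with these three events delivers exactly the stated $(2,\infty)$ bound with probability at least $1-c_1/(NT)^{11}$.

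For $E'_4$, the matrix $\bV\in\Rb^{q\times k}$ is deterministic with $k$ orthonormal columns. Writing $\bV=[v_1,\dots,v_k]$, we have $\|[\bZ_x]_{L\cdot}^\top\bV\|_2^2 = \sum_{i=1}^k Z_i^2$ with $Z_i = v_i^\top [\bZ_x]_{L\cdot}^\top$. Since $[\bZ_x]_{L\cdot}^\top$ is sub-gaussian with variance proxy $\sigma_x^2$ and each $v_i$ is a unit vector, each $Z_i\sim\subG(\sigma_x^2)$. Lemma~\ref{lemma:norm_sub_gaussian_dep} (used identically in the proof of $E_3$ in Lemma~\ref{lemma:prelims}) then yields $\|[\bZ_x]_{L\cdot}^\top\bV\|_2 \le C\sigma_x\sqrt{k\log q}$ with probability at least $1-c_1/(NT)^{11}$ for $C$ large enough. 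The main subtlety to be careful about is just keeping track of the fact that while rows of $\bZ_x$ are sub-gaussian vectors with correlated entries (because of the AR structure within each block of columns coming from a single series), the variance-proxy bound $\sigma_x^2$ already absorbs all such correlations via the $\MA{\infty}$ expansion in Lemma~\ref{lemma:ar_to_ma}, so no additional work is required beyond invoking the existing tools.
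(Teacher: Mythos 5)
Your proposal is correct and follows essentially the same route as the paper: $E'_1$ and $E'_2$ via the operator-norm bound of Lemma~\ref{lem:op_norm_bound} (the paper bounds $\enorm{[\bZ_x]_{L\cdot}}$ by $\enorm{\bZ_x}$ rather than invoking Lemma~\ref{lemma:sub_gauss_v_norm}, a trivial variation), $E'_3$ by rerunning the Theorem~\ref{thm:hsvt.l2inf} argument on the primed matrices, and $E'_4$ by exactly the same decomposition $\|[\bZ_x]_{L\cdot}^\top\bV\|_2^2=\sum_i Z_i^2$ with Lemma~\ref{lemma:norm_sub_gaussian_dep}. One small slip: dropping a row can only \emph{decrease} singular values, so the inequality $\sigma_k(\bZ'_f)\ge\sigma_k(\bZ_f)$ is backwards; what is actually needed (and what both you and the paper implicitly rely on) is that the balanced-spectra lower bound of Assumption~\ref{assm:spectra} continues to hold, up to constants, for the $(L-1)$-row matrix.
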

\begin{proof}{} 
We bound the probability of events above below.

\medskip
\noindent {\bf Bounding $\bE'_1$ and $\bE'_2$.} Note that $\bZ'_x$ is a  sub-matrix of $\bZ_x$, and its operator norm is bounded by that of  $\bZ_x$. Hence, as the probability is an immediate consequence of Lemma 
\ref{lem:op_norm_bound}. 
The second event probability follow from the fact that   $\enorm{[{\bZ_x}]_{L \cdot}} \leq \enorm{\bZ_x}$. 

\medskip

\noindent {\bf Bounding $\bE'_3$.} This is immediate from Corollary \ref{cor:prelims_2} and its consequence in \eqref{eq:bound_l2infty_norm}. 

\noindent {\bf Bounding $\bE'_4$.}
First, recall that $ [\bZ_x]_{L \cdot}$ is a $q$-dimensional sub-gaussian vector with variance proxy $\sigma^2_x$. That is, for any unit vector $\bv \in \cS^{q-1}$, $ [\bZ_x]_{L \cdot} \bv \sim \subG(\sigma_x^2)$.
Now, recall that $\bV \in \Rb^{q\times k}$ is a set of $k$ orthonormal vectors $\bV_i, i \in [k]$. 
Let $Z_i =  [\bZ_x]_{L \cdot} \bV_i$. Clearly, 
$Z_i \sim \subG(\sigma_x^2)$. Therefore, $[\bZ_x]_{L \cdot} \bV$ is a $k$-dimensional vector with dependent sub-gaussian entries with variance proxy $\sigma_x^2$. Using Lemma \ref{lemma:norm_sub_gaussian_dep}, we have

\begin{align*}
\Pb\Big( \| [\bZ_x]_{L \cdot} \bV  \|_2  > t \Big) & \leq c\exp\left(-\frac{t^2}{16k{\sigma_x^2}} \right).
\end{align*}
Therefore, for choice of $t = C \sigma_x \sqrt{k \log(q)}$ with large enough constant $C \ge 19 $, we have, 
\begin{align*}
\Pb\Big(  \| [\bZ_x]_{L \cdot} {\bV} \|_2  > C \sigma_x \sqrt{k \log(q)}  \Big) & \leq \frac{c}{q^{22}}.
\end{align*}
Recalling that $q > L$, and $q = NT/L$, concludes the proof. 
\end{proof}

Let $E' := E'_1 \cap E'_2 \cap E'_3 \cap E'_4$. 
Then, under event $E$, and using \eqref{eq:parame_est_beta_det}, we have, with probability $1- \frac{c}{(NT)^{11}}$, 
\begin{align*}
    \| \sbeta - \pbeta  \|_2^2  \leq \frac{{C \sigma^2_x}} {\sigma_k(\bZ'_f)^2}       &\Bigg( q\left\| \pbeta  \right\|^2_2  + \frac{q}{\sigma_k(\bhZ'_f)^2} \Bigg) \\  + \frac{{C \sigma^2_x}} {\sigma_k(\bhZ'_f)^2}   \Bigg( &
 \bigg(\frac{q^2 }{ \sigma_k(\bZ_f)^2} \left( {\sigma_x^2 }	+ f^2_{\max}  \right) 
+  { k \log(q) } 
 \bigg)\max\{\|\pbeta\|_1^2, 1\} \Bigg).
\end{align*}

Choosing $L = \sqrt{NT}$, and $k = RG$, recalling Assumption \ref{assm:spectra}, and using Weyl's lemma (\ref{lemma:weyl}) to bound $|\sigma_k(\bhZ'_f)^2- \sigma_k(\bZ'_f)^2|$ we get,

 \begin{align*}
    \| \sbeta - \pbeta  \|_2^2  &\leq    \frac{{C \gamma^2 \sigma^2_x}RG} {NT}       \Bigg( \sqrt{NT} \left\| \pbeta  \right\|^2_2  + 
{RG  \log\left(NT \right)}  \\  &+
 \bigg({ \gamma^2 RG } \left( {\sigma_x^2 }	+ f^2_{\max}  \right) 
+  { RG \log(NT) } 
 \bigg)\|\pbeta\|_1^2  \Bigg) \\ 
  &\leq    \frac{{C \gamma^4 f^2_{\max} \sigma^4_x}R^2G^2 \log(NT)} {\sqrt{NT}} \max\{\|\pbeta\|_1^2, 1\},
\end{align*}

which concludes the proof of Theorem \ref{thm:beta_id}.
\newpage
\section{Proof of Theorem \ref{thm:forecast_error}}
\label{app:fore_error_proof}

In this section, we prove an upper bound on the forecasting error, 
\begin{align*}
\fore(N, T) = \frac{1}{NT} \sum_{n=1}^N \sum_{t=T+1}^{2T} (\hy_n(t) -  \bar{y}_n(t) )^2, 
\end{align*}
where $\bar{y}_n(t) = \expect{y_{n} (t) | y_n(1),\dots,y_n(t-1)}= f_n(t) + \bar{x}_n(t)$, where $\bar{x}_n(t) \coloneqq \expect{x_{n} (t) | x_n(1),\dots,x_n(t-1)}$.
First, recall that for $t > T$, 
$$\hy_n(t) = \hf_n(t) + \hx_n(t) = \widehat{\beta}^\top Y_n({t-1}) + \widehat{\alpha}_n^\top \widetilde{X}_n(t-1), $$ where $Y_n(t-1) = [y_n(t - (L-1)), \dots, y_n(t-1)]$, and $\widetilde{X}_n(t-1)= [\tx_n(t -p), \dots,  \tx_n(t-1)]$. Then, we have, 

\begin{align*}
\frac{1}{NT} \sum_{n=1}^N \sum_{t=T+1}^{2T} (\hy_n(t) - y_n(t))^2   \leq  \frac{2}{NT} \sum_{n=1}^N \sum_{t=T+1}^{2T}  (\hf_n(t) - f_n(t))^2 + (\hx_n(t) - \bar{x}_n(t) )^2.
\end{align*}

Now, we will bound each error term separately. 

\subsection{Bounding Forecasting Error of \textit{f}}

To bound the forecasting error of $f$, we use the following Lemma. 
\begin{lemma} 
\label{lemma:f_forecast_error}
Let the conditions of Theorem \ref{thm:rec_error} hold. 
Then, with probability $1 - \frac{c}{(NT)^{11}}$ 
\begin{align*}
    \frac{1}{NT}  \sum_{n=1}^N \sum_{t=T+1}^{2T} \bigg(\hf_n(t) - f_n(t)\bigg)^2  \leq  C(f_{\max}, \gamma)
{R^3G^3 \sigma^6_x  }  \left( \frac{
    \max\{\|\pbeta\|_1^2, 1\}  \log(NT)  }{\sqrt{NT}}   \right), 
\end{align*}
where $c$ is an absolute constant, and $ C(f_{\max}, \gamma)$ denotes a constant that depends only (polynomially) on model parameters $ f_{\max}, \gamma$.
\end{lemma}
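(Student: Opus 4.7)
The plan is to decompose $\hf_n(t) - f_n(t)$ using the linear recurrence $f_n(t) = \pbeta^\top F_n(t-1)$ (which holds for $t \in \{T+1, \dots, 2T\}$ by Proposition \ref{prop:exact_low_rank_linear} combined with the subspace inclusion Assumption \ref{assumption:subspaceinclusion}), reducing the lemma to controlling a quadratic form in $\sbeta - \pbeta$ plus a pure-noise contribution. Writing $F_n(t-1) = [f_n(t-1), \dots, f_n(t-L+1)]^\top$ and $X_n(t-1) = [x_n(t-1), \dots, x_n(t-L+1)]^\top$ so that $Y_n(t-1) = F_n(t-1) + X_n(t-1)$, the algorithm's definition $\hf_n(t) = \sbeta^\top Y_n(t-1)$ yields
\begin{equation*}
\hf_n(t) - f_n(t) = (\sbeta - \pbeta)^\top F_n(t-1) + (\sbeta - \pbeta)^\top X_n(t-1) + \pbeta^\top X_n(t-1),
\end{equation*}
and $(a+b+c)^2 \leq 3(a^2+b^2+c^2)$ reduces the problem to bounding three averages over $n \in [N]$, $t \in \{T+1,\dots,2T\}$.

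The first contribution $\frac{1}{NT}\sum_{n,t}\bigl((\sbeta-\pbeta)^\top F_n(t-1)\bigr)^2$ I would rewrite as the quadratic form $(\sbeta-\pbeta)^\top \bigl(\sum_{n,t} F_n(t-1) F_n(t-1)^\top\bigr)(\sbeta-\pbeta)/(NT)$ and bound by $\|\sbeta-\pbeta\|_2^2$ times the normalized operator norm of the Gram matrix. Assumption \ref{assumption:subspaceinclusion} forces each $F_n(t-1)$ to lie in the rank-$k$ column space of $\bZ'_f$, and Assumption \ref{assm:spectra} provides the matching upper bound on its singular values, so this normalized operator norm is controlled by a quantity polynomial in $f_{\max}$, $\gamma$, and $k = RG$. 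The cross term $\frac{1}{NT}\sum_{n,t}\bigl((\sbeta-\pbeta)^\top X_n(t-1)\bigr)^2$ admits the same quadratic-form treatment, with the AR-noise Gram matrix handled via Lemma \ref{lem:op_norm_bound}. For the pure-noise term $\frac{1}{NT}\sum_{n,t}(\pbeta^\top X_n(t-1))^2$, Lemma \ref{lem:subg_vector} implies that $\pbeta^\top X_n(t-1)$ is sub-Gaussian with variance proxy at most $\|\pbeta\|_2^2 \sigma_x^2 \leq \|\pbeta\|_1^2 \sigma_x^2$, and Lemma \ref{lemma:norm_sub_gaussian_dep} then concentrates the sum of squares with high probability.

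Finally, I would plug in $\|\sbeta-\pbeta\|_2^2 \leq C\,R^2G^2 \sigma_x^4 \log(NT) \max\{\|\pbeta\|_1^2, 1\}/\sqrt{NT}$ from Theorem \ref{thm:beta_id}, set $L = \sqrt{NT}$ as in the main theorems, combine the three contributions, and apply a union bound to obtain the stated $\tilde{O}(1/\sqrt{NT})$ rate. The hard step will be extracting a sharp operator-norm bound for the stacked-lag Gram matrices whose columns are $F_n(t-1)$ and $X_n(t-1)$ across all $(n,t)$: a naive trace/Frobenius bound loses an extra factor of $L = \sqrt{NT}$ and destroys the decay. For the signal matrix this is salvaged by the rank-$k$ subspace inclusion structure, while for the noise matrix an $\varepsilon$-net argument paralleling the proof of Lemma \ref{lem:op_norm_bound} is needed, with additional care taken to accommodate the Hankel-like overlap between consecutive windows $X_n(t-1)$ and $X_n(t)$ that is absent in the purely Page-matrix setting.
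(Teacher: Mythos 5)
Your three-way pointwise decomposition is a reasonable starting point, but two of the three terms are not actually controlled by the argument you sketch, and both failures are quantitative rather than cosmetic.

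First, the signal term. You bound $\frac{1}{NT}\sum_{n,t}\bigl((\sbeta-\pbeta)^\top F_n(t-1)\bigr)^2$ by $\|\sbeta-\pbeta\|_2^2$ times the normalized operator norm of the Gram matrix $\frac{1}{NT}\sum_{n,t}F_n(t-1)F_n(t-1)^\top$, and claim the latter is polynomial in $f_{\max},\gamma,k$. It is not: Assumption \ref{assm:spectra} only \emph{lower}-bounds $\sigma_k$, and the trace of this Gram matrix is $\sum_{n,t}\|F_n(t-1)\|_2^2 \approx NT\cdot L f_{\max}^2$ concentrated on a rank-$k$ subspace, so the normalized operator norm is of order $L f_{\max}^2/k = \sqrt{NT}f_{\max}^2/k$. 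Multiplying by $\|\sbeta-\pbeta\|_2^2 = \tilde O(1/\sqrt{NT})$ from Theorem \ref{thm:beta_id} gives $\tilde O(1)$, a full factor of $\sqrt{NT}$ worse than the claimed bound. The paper's proof avoids this by exploiting that $\sbeta-\pbeta$ is \emph{not} uniformly small in all directions: only its projection onto the $k$-dimensional column span of $\bZ'_f$ is seen by the $F_n(t-1)$ (by Assumption \ref{assumption:subspaceinclusion}), and that projection has squared norm $\tilde O(1/(NT))$ — this is the content of \eqref{eq:parame_est7_new} inside the proof of Theorem \ref{thm:beta_id}, which must be reused as an intermediate result; the theorem statement alone is insufficient.

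Second, the pure-noise term $\frac{1}{NT}\sum_{n,t}(\pbeta^\top X_n(t-1))^2$. Concentration via Lemma \ref{lemma:norm_sub_gaussian_dep} only brings this close to its \emph{mean}, which is $\Theta(\pbeta^\top\mathrm{Cov}(X_n)\pbeta) = \Theta(\sigma^2\|\pbeta\|_2^2)$ — a constant that does not decay in $N,T$ if you only control it by $\|\pbeta\|_1^2$, as you propose. The lemma's claimed rate therefore fails for this term unless you additionally prove $\|\pbeta\|_2^2 = O(k f_{\max}^2/(\gamma^2\sqrt{NT}))$, which does hold for the particular $\pbeta$ lying in $\mathrm{colspan}(\bZ'_f)$ (via $\pbeta = (\bZ_f'^{\top})^{\dagger}[\bZ_f]_{L\cdot}^\top$ and Assumption \ref{assm:spectra}) but is nowhere in your argument. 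The paper never isolates $\pbeta^\top X_n(t-1)$ as a standalone term: it splits by residue class modulo $L$ so the out-of-sample errors assemble into Page matrices (also sidestepping the Hankel-overlap issue you flag for your second term), applies an out-of-sample HSVT $\bhZo{f}$, and pairs the bulk of the noise with $(\btdUo^{\perp})^\top\sbeta$, which is small by Wedin's theorem and subspace inclusion. You would need to either adopt that route or supply the two missing ingredients above.
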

\begin{proof}
The proof of this lemma is similar to that in \cite{agarwal2022multivariate}, but we adapt it for our different settings.  First, note that, where we assume that $NT/L$ is an integer, 
\begin{align*}
\frac{1}{NT} \sum_{n=1}^N \sum_{t=T+1}^{2T} \bigg(\hf_n(t) - f_n(t)\bigg)^2  = 
\frac{1}{L} \sum_{\ell =1}^{L} \frac{L}{NT}  \sum_{n=1}^N \sum_{m=0}^{T/L-1} \bigg(\hf_n(T + mL + \ell) - f_n(T + mL + \ell)\bigg)^2. 
\end{align*}
In what follows, we provide an upper bound for the inner sum $\frac{L}{NT}  \sum_{n=1}^N \sum_{m=0}^{T/L-1} \bigg(\hf_n(T + mL + \ell) - f_n(T + mL + \ell)\bigg)^2$ for all $\ell \in [L]$. Next, we show, without loss of generality, the case for $\ell = 1$. 
To establish this bound, we first define (and recall) the following notations, 
\begin{itemize}[leftmargin = *]
\item Recall that $ \bZ_y = \begin{bmatrix} \mathbf{Z}(y_1,L, T) & \mathbf{Z}(y_2,L, T) & \dots&  \mathbf{Z}(y_n,L, T)\end{bmatrix}$ is the stacked Page matrix of $y_1(t), \dots, y_n(t) $ for $  t \in [T]$. 
\item Similarly, let $\bZo{y}$ denote the stacked Page matrix of the out-of-sample observations $y_1(t), \dots, y_n(t) $ for $  t \in \{T-L+2, \dots, 2T-L+1\}$, i.e., $\bZo{y} \in \Rb^{L \times NT/L}$.
\item Let  $ \bZ_x$ be the stacked Page matrix of $x_1(t), \dots, x_n(t)$ for $ t \in [T]$, and $ \bZo{x}$ be the stacked Page matrix of $x_1(t), \dots, x_n(t)$ for $ t \in \{T-L+2, \dots, 2T-L+1\}$.
\item Similarly, let  $ \bZ_f$ and  $\bZo{f}$ be defined similarly but for $f_1(t), \dots, f_n(t)$ . 
\item Let  $ \bZ'_y \in \Rb^{(L-1) \times (NT/L)}$, be the a sub-matrix obtained by dropping the $L$-th row from the stacked Page matrix  $ \bZ_y$. Define $\bZo{y}'$,    $\bZ'_f$,  $\bZo{f}'$, $\bZ'_x$,  and $\bZo{x}'$ analogously. 
\item Let ${\bU}{\bSigma}{\bV}^\top$  and ${\bUo}{\bSigmao}{\bVo}^\top$ denote the SVD of $\bZ'_f$ and  $\bZo{f}'$ respectively. 
\item  let  ${\bV}^{\perp}$ and $\bU^\perp$  be matrices  of orthonormal basis vectors that span the null space of $\bZ'_f$ and ${\bZ'}_f^\top$, respectively.  Define ${\bVo}^{\perp}$ and $\bUo^\perp$ similarly for the matrix $\bZo{f}'$. 
\item Let  ${\btdU}{\btdSigma}{\btdV}^\top$  denote the top $k$ singular components  of the SVD of  $\bZ'_y$,  while $ \btdU^\perp {\btdSigma}^{\perp} ({\btdV}^{\perp})^\top$  denote the remaining $L-k-1$ components such that  $\bZ'_y ={\btdU}{\btdSigma}{\btdV}^\top+\btdU^\perp {\btdSigma}^{\perp} ({\btdV}^{\perp})^\top $. 
\item Similarly, Let  ${\btdUo}{\btdSigmao}{\btdVo}^\top$  denote the top k singular components  of the SVD of  $\bZo{y}'$,  while $ \btdUo^\perp {\btdSigmao}^{\perp} ({\btdVo}^{\perp})^\top$  denote the remaining $L-k-1$ components such that  $\bZo{y}' ={\btdUo}{\btdSigmao}{\btdVo}^\top+\btdUo^\perp {\btdSigmao}^{\perp} ({\btdVo}^{\perp})^\top $. 
\item Let $\bhZ_f = {\btdU}{\btdSigma}{\btdV}^\top $ and $\bhZo{f} = {\btdUo}{\btdSigmao}{\btdVo}^\top $. 
\end{itemize}

\vspace{2mm}
Recall that we are interested in a high-probability bound for  the following out-of-sample prediction error:
\begin{align}\label{eq:fore.error_ood_1}
\frac{L}{NT}  \sum_{n=1}^N \sum_{m=0}^{T/L-1} \bigg(\hf_n(T + mL+1) - f_n(T + mL +1 )\bigg)^2.
\end{align}
Recall that   $\hf_n(t) = \widehat{\beta}^\top Y_n({t-1})$  and hence we can rewrite \eqref{eq:fore.error_ood_1} as 
\begin{align}\label{eq:fore.error_ood_2}
\sum_{n=1}^N \sum_{m=0}^{T/L} \bigg(\hf_n(T + mL +1) - f_n(T + mL +1)\bigg)^2 
&=    \enorm{{\bZo{y}'}^{\top}\sbeta  - [\bZo{f}]_{L \cdot}^\top}^2 \\
&=    \enorm{{\bZo{y}'}^{\top}\sbeta  - {\bZo{f}'}^\top \pbeta}^2
\end{align}

Next, we derive a deterministic upper found for the expression above. 

 \vspace{2mm}

\noindent \textbf{Deterministic Bound.} Through adding and subtracting $\bhZo{f}^\top \sbeta$ and  triangle inequality,  we have

\begin{align}\label{eq:fore.error_ood_mr2}
 \enorm{(\bZo{y}')^{\top}\sbeta  - (\bZo{f}')^\top \pbeta}   &\leq    \enorm{(\bZo{y}' - \bhZo{f})^{\top}\sbeta }  + \enorm{{\bhZo{f}}^\top \sbeta  - ({\bZo{f}'})^\top \pbeta}  
\end{align}

Next, we proceed to bound each of the two terms on the right hand side. 

\vspace{2mm}
\textit{First term: $\enorm{(\bZo{y}' - \bhZo{f})^{\top}\sbeta } $}.

\begin{align}\label{eq:ood_f1}
\enorm{(\bZo{y}' - \bhZo{f})^{\top}\sbeta }^2  &=   \|  \btdVo^\perp {\btdSigmao}^{\perp} ({\btdUo}^{\perp})^\top\sbeta \|^2_2   \nonumber \\ 
& \leq    \| {\btdSigmao}^{\perp} \|^2_2    \|({\btdUo}^{\perp})^\top\sbeta \|^2_2.
\end{align}

Note that $  \| {\btdSigmao}^{\perp} \|_2 $ equals the $(k+1)$-th singular value of $\bZo{y}'$.
Using Weyl's inequality ({see Lemma \ref{lemma:weyl}}), and recalling that $\bZo{y}' = \bZo{f}' + \bZo{x}'$, and that the rank of $\bZo{f}'$ is $k$,  we can bound the $(k+1)$-th singular value of $\bZo{y}'$ by the operator norm of $\bZo{x}'$. That is, 
\begin{align}\label{eq:ood_f3}
  \| {\btdSigmao}^{\perp} \|^2_2   &\leq  \|\bZo{x}'\|^2_2.
\end{align} 

Next, we bound the term $\|({\btdUo}^{\perp})^\top\sbeta \|^2_2 $.

\begin{align}\label{eq:ood_f4}
\|({\btdUo}^{\perp})^\top\sbeta \|^2_2 &=  \| {\btdUo}^{\perp} ({\btdUo}^{\perp})^\top\sbeta  \ \|^2_2  \nonumber \\ 
&= \| {\btdUo}^{\perp} ({\btdUo}^{\perp})^\top  \pbeta  + {\btdUo}^{\perp}  ( {\btdUo}^{\perp})^\top (\sbeta - \pbeta)    \|^2_2  \nonumber \\ 
&\leq   2\| {\btdUo}^{\perp}  ({\btdUo}^{\perp})^\top \pbeta  \|^2_2 + 2\| \sbeta - \pbeta  \|^2_2.
\end{align} 

Further expanding the first term,  
\begin{align*}
  \| {\btdUo}^{\perp}  ({\btdUo}^{\perp})^\top \pbeta  \|_2  &=  \| {\btdUo}^{\perp}  ({\btdUo}^{\perp})^\top {\bUo} ({\bUo})^\top \pbeta  \|_2  \\
 &\leq   \left\| {\bUo}^{\perp}  ({\bUo}^{\perp})^\top {\bUo} ({\bUo})^\top \pbeta  \right\|_2  \\ &+  \left\| \left( {\btdUo}^{\perp}  ({\btdUo}^{\perp})^\top- {\bUo}^{\perp}  ({\bUo}^{\perp})^\top \right) {\bUo} ({\bUo})^\top \pbeta  \right\|_2  \\
 &\leq    \left\| \left( {\btdUo}^{\perp}  ({\btdUo}^{\perp})^\top- {\bUo}^{\perp}  ({\bUo}^{\perp})^\top \right)  \pbeta  \right\|_2  \\
  &\leq   \left\|{\btdUo}^{\perp}  ({\btdUo}^{\perp})^\top- {\bUo}^{\perp}  ({\bUo}^{\perp})^\top\right\|_2  \left\| \pbeta  \right\|_2  \\
   &=   \left \|  {\btdUo}  {\btdUo}^\top  - {\bUo}  {\bUo}^\top \right\|_2  \left\| \pbeta  \right\|_2.
\end{align*} 

Where in the first equality we use the fact that $\pbeta = {\bUo}({\bUo})^\top \pbeta$, i.e., $\pbeta$ lives in the column space of $\bZo{f}'$ (Assumption \ref{assumption:subspaceinclusion}).



Next, by Wedin $\sin \Theta$ Theorem {(see \cite{davis1970rotation, wedin1972perturbation})} we bound  $\left\|  {\btdUo}  {\btdUo}^\top  - {\bUo}  {\bUo}^\top \right\|_2$  as follows: 
\begin{align}\label{eq:davis_kahan_fore}
 \left \|   {\btdUo}  {\btdUo}^\top  - {\bUo}  {\bUo}^\top \right\|_2   \left\| \pbeta  \right\|_2 
&\leq  \frac{\| \bZo{x} \|_2}{\sigma_k(  \bZo{f})}  \left\| \pbeta  \right\|_2  
\end{align}

Using this definition, \eqref{eq:ood_f1}, \eqref{eq:ood_f3},  \eqref{eq:ood_f4},  and \eqref{eq:davis_kahan_fore} we have 

\begin{align}\label{eq:det_bound1}
\enorm{(\bZo{y}' - \bhZo{f})^{\top}\sbeta }^2 
&\leq    2 \|\bZo{x}'\|^2_2 
\Bigg(   \frac{\| \bZo{x} \|^2_2 \left\| \pbeta  \right\|^2_2}{\sigma_k(  \bZo{f})^2}  + \| \sbeta - \pbeta  \|^2_2 \Bigg) .
\end{align}

\vspace{2mm}
\textit{Second term: $   \enorm{{\bhZo{f}}^\top \sbeta  - ({\bZo{f}'})^\top \pbeta} $}. To bound the second term, we follow a similar proof to that shown in \cite{agarwal2020principal}.

\begin{align}\label{eq:ood_f21}
  \|  {\bhZo{f}}^\top \sbeta  - ({\bZo{f}'})^\top \pbeta \|_2^2   &=    \|  {\bhZo{f}}^\top \sbeta  - {\bhZo{f}}^\top \pbeta  +  {\bhZo{f}}^\top \pbeta  - ({\bZo{f}'})^\top \pbeta\|_2^2 \nonumber \\
  &\leq  2\|   {\bhZo{f}}^\top (\sbeta - \pbeta) \|_2^2 + 2 \|  ({\bZo{f}'} -\bhZo{f}))^\top \pbeta \|_2^2   \nonumber \\
  &\leq 2\|   {\bhZo{f}}^\top (\sbeta - \pbeta) \|_2^2  +  2 \|  \bZo{f}'   -  \bhZo{f}'   \|_{2,\infty}^2  \|\pbeta\|_1^2.
  \end{align}
Next, we bound the term   $ \| {\bhZo{f}}^\top (\sbeta - \pbeta)  \|_2^2  $.

\begin{align}\label{eq:ood_f23}
\| {\bhZo{f}}^\top (\sbeta - \pbeta)   \|_2^2  &\leq  \| ( {\bhZo{f}} -  \bZo{f}'  + \bZo{f}' )^\top(\sbeta - \pbeta) \|_2^2  \\
&\leq  2 \| ({\bhZo{f}} -  \bZo{f}'  )^\top (\sbeta - \pbeta) \|_2^2  + 2 \|{\bZo{f}'}^\top (\sbeta - \pbeta) \|_2^2  \\
&\leq  2\| \bZo{x}' \|_2^2 \| \sbeta - \pbeta\|_2^2  + 2  \|{\bZo{f}'}^\top (\sbeta - \pbeta) \|_2^2.
\end{align}

Next, we bound $ \|{\bZo{f}'}^\top (\sbeta - \pbeta) \|_2^2$. Recall that ${\bU}$  spans the column space  of  $\bZo{f}'$. Thus ${\bZo{f}'}^\top = {\bZo{f}'}^\top{\bU}{\bU}^\top$, therefore, 
\begin{align}\label{eq:ood_f26}
 \|{\bZo{f}}^\top (\sbeta - \pbeta) \|_2^2  &=   \| {\bZo{f}'}^\top{\bU}{\bU}^\top (\sbeta - \pbeta) \|_2^2 \nonumber \\
 &\leq   \| \bZo{f}' \|_2^2 \|{\bU}{\bU}^\top (\sbeta - \pbeta) \|_2^2.
 \end{align}
Now recall that we denote the top $k$ left singular vectors of $\bZ'_y$ by ${\btdU}$, then consider 

\begin{align}\label{eq:ood_f27}
 \|{\bU}{\bU}^\top  (\sbeta - \pbeta) \|_2^2 &=   \|({\bU}{\bU}^\top + {\btdU}{\btdU}^\top-  {\btdU}{\btdU}^\top )  (\sbeta - \pbeta) \|_2^2  \nonumber \\
 &\leq   2 \|{\bU}{\bU}^\top - {\btdU}{\btdU}^\top \|_2^2   \| \sbeta - \pbeta \|_2^2 +2 \|{\btdU}{\btdU}^\top   (\sbeta - \pbeta) \|_2^2.
 \end{align}
 Recall that, as we prove in \eqref{eq:parame_est7_new} in Appendix \ref{app:beta_proof}, 
 \begin{align} \label{eq:parame_est7_copy}
\| {\btdU} {\btdU}^\top  (\sbeta - \pbeta)  \|_2^2   
 & \leq   \frac{8}{\sigma_k(\bZ'_f)^2 } \Bigg(   \|  \bZ'_f  -   \bhZp_f   \|_{2,\infty}^2  \|\pbeta\|_1^2   +
 \frac{\| \bZ'_x \|^2_2}{\sigma_k(\bZ'_f)^2 } \enorm{[{\bZ_x}]_{L \cdot}}^2 +  \enorm{[{\bZ_x}]_{L \cdot}^\top {\bV}}^2    \Bigg).
\end{align}  

 Using \eqref{eq:ood_f27}, \eqref{eq:parame_est7_copy} and Wedin $\sin \Theta$ Theorem, we obtain,

\begin{align}\label{eq:ood_f28}
 \|{\bU}{\bU}^\top  (\sbeta - \pbeta) \|_2^2  &\leq 
 2 \frac{\| \bZ'_x \|^2_2} {\sigma_k(\bZ'_f)^2}   \left( \| \sbeta - \pbeta \|_2^2 + \frac{8 \enorm{[{\bZ_x}]_{L \cdot}}^2}{\sigma_k(\bZ'_f)^2} \right)\nonumber \\
 &+  \frac{8}{\sigma_k(\bhZp_f)^2 } \Bigg(  \|  \bZ'_f  -   \bhZp_f   \|_{2,\infty}^2  \|\pbeta\|_1^2   +
 \enorm{[{\bZ_x}]_{L \cdot}^\top {\bV}}^2   \Bigg).
 \end{align}

 Therefore, using \eqref{eq:ood_f21}, \eqref{eq:ood_f26}, and \eqref{eq:ood_f28}, we have
\begin{align}\label{eq:ood_f211}
  &\|  {\bhZo{f}}^\top \sbeta  - ({\bZo{f}'})^\top \pbeta \|_2^2   \nonumber
  \\ &\leq 2\|   {\bhZo{f}}^\top (\sbeta - \pbeta) \|_2^2  +  2 \|  \bZo{f}'   -  \bhZo{f}'   \|_{2,\infty}^2  \|\pbeta\|_1^2  \nonumber \\
  &\leq 4 \| \bZo{x}' \|_2^2 \| \sbeta - \pbeta\|_2^2  \nonumber \\ 
  &+ 4  \|\bZo{f}' \|_2^2     \frac{\| \bZ'_x \|^2_2}  {\sigma_k(\bZ'_f)^2}    \| \left( \| \sbeta - \pbeta\|_2^2 + \frac{8 \enorm{[{\bZ_x}]_{L \cdot}}^2}{\sigma_k(\bZ'_f)^2} \right) \nonumber \\ 
 &+ 16  \|\bZo{f}' \|_2^2  \frac{1}{\sigma_k(\bhZp_f)^2 } \Bigg(  2 \|  \bZ'_f  -   \bhZp_f   \|_{2,\infty}^2  \|\pbeta\|_1^2   +
 \enorm{[{\bZ_x}]_{L \cdot}^\top {\bV}}^2   \Bigg)   \nonumber \\ &+  2 \|  \bZo{f}'   -  \bhZo{f}'   \|_{2,\infty}^2  \|\pbeta\|_1^2.
  \end{align}

\vspace{2mm}
\textit{Combining}. Incorporating the two bounds in  \eqref{eq:det_bound1} and \eqref{eq:ood_f211} yields, 

\begin{align}\label{eq:final_det_bound}
\enorm{(\bZo{y}')^{\top}\sbeta  - (\bZo{f}')^\top \pbeta}^2  
&\leq 
c \|\bZo{x}'\|^2_2 
\Bigg(   \frac{\| \bZo{x} \|^2_2 \left\| \pbeta  \right\|^2_2}{\sigma_k(  \bZo{f})^2}  + \| \sbeta - \pbeta  \|^2_2 \Bigg)    \nonumber \\ 
&+ c  \|\bZo{f}' \|_2^2 \left(    \frac{\| \bZ'_x \|^2_2} {\sigma_k(\bZ'_f)^2}    \| \sbeta - \pbeta \|_2^2 
 + \frac{\| \bZ'_x \|^2_2 \enorm{[{\bZ_x}]_{L \cdot}}^2}{\sigma_k(\bZ'_f)^4} \right)   \nonumber \\ 
&+     c\frac{  \|\bZo{f}' \|_2^2 }{\sigma_k(\bhZp_f)^2 } \Bigg(  2 \|  \bZ'_f  -   \bhZp_f   \|_{2,\infty}^2  \|\pbeta\|_1^2   +
 \enorm{[{\bZ_x}]_{L \cdot}^\top {\bV}}^2   \Bigg).  \nonumber \\ 
 &+  c \|  \bZo{f}'   -  \bhZo{f}'   \|_{2,\infty}^2  \|\pbeta\|_1^2.
\end{align}
For some absolute constant $c>0$.

\noindent \textbf{High probability bound.} 
With our choice $L = \sqrt{NT}$, Let $q \coloneqq \sqrt{NT}$ be the number of columns in the stacked page matrix $\bZ_y$ and subsequently $q+1$ to be  the number of columns in the stacked page matrix $\bZo{y}$.  Let $C>0$ be some positive absolute constant, and $ C(f_{\max}, \gamma)$ be a constant that depends only on model parameters $f_{\max}$ and $\gamma$. Define 
\begin{align}
      \bar{E}_1&:= \Big\{ \norm{\bZ'_x}_2 \le {C\sigma_x \sqrt{q} }  \Big \}, \label{eq:E1_def_fore} \\
      \bar{E}_2&:= \Big\{  \enorm{\bZo{x}'}  \le {C\sigma_x \sqrt{q} }  \Big \}, \label{eq:E2_def_fore}
     \\
     \bar{E}_3&:= \Big\{  \enorm{\sbeta - \pbeta}  \le   C(f_{\max}, \gamma)  
    \bigg(  
        \frac{\sigma^4_xG^2R^2 \log(q)} {q} 
    \bigg) 
    \max\{\|\pbeta\|_1^2, 1\}
    \Big \}, \label{eq:E3_def_fore}
     \\ 
     \bar{E}_4&:= \Big\{ \|  \bZ'_f  -   \bhZp_f   \|_{2,\infty}^2 \le  \frac{C\sigma_x^2  q^2 }{ \sigma_k(\bZ_f)^2} \left( {\sigma_x^2 }	+ f_{\max}^2 \right) 
+  {C\sigma_x^2 k \log(q) }  \Big\}
     \label{eq:E4_def_fore}
     \\ 
     \bar{E}_5&:= \Big\{  \|  \bZo{f}'   -  \bhZo{f}'   \|_{2,\infty}^2 \le  \frac{C\sigma_x^2  q^2 }{ \sigma_k(\bZ_f)^2} \left( {\sigma_x^2 }	+ f_{\max} \right) 
+  {C\sigma_x^2 k \log(q) }  \Big\}
     \label{eq:E5_def_fore}
     \\ \bar{E}_6&:= \Big\{  \enorm{[{\bZ_x}]_{L \cdot}^\top {\bV}}   \le  {C \sigma_x \sqrt{k \log(q)}} \Big \}, \label{eq:E6_def_fore}
  \\ \bar{E}_7&:= \Big\{  \enorm{[{\bZ_x}]_{L \cdot}}   \le  {C\sigma_x \sqrt{q} }  \Big \}, \label{eq:E7_def_fore}
\end{align}

First, with $c_1>0$ is an absolute constant,  recall from Lemma \ref{lemma:prelims_beta}, 
\begin{align*}
	\Pb(\bar{E}_1) &\ge 1 - 2 e^{-c_1q}, 
	\\ \Pb(\bar{E}_4) &\ge 1 -  \frac{c_1}{(NT)^{11}},  
	\\ \Pb(\bar{E}_6) &\ge 1 - \frac{c_1}{(NT)^{11}}, 
 	\\ \Pb(\bar{E}_7) &\ge 1 - 2 e^{-c_1q}, 
\end{align*}

Further, note that $\bZo{x}'$ is a  sub-matrix of $\bZo{x}$, hence its operator norm is bounded by that of  $\bZo{x}$. Therefore, as an immediate consequence of Lemma 
\ref{lem:op_norm_bound}, 
\begin{align*}
	\Pb(\bar{E}_2) &\ge 1 - 2 e^{-c_1q}.
\end{align*}
The probability of event $\bar{E}_3$ is bounded by Theorem \ref{thm:beta_id} as 
\begin{align*}
	\Pb(\bar{E}_3) &\ge 1 - \frac{c_1}{(NT)^{11}}.
\end{align*}
Finally, as an immediate results of Corollary \ref{cor:prelims_2} and its consequence in \eqref{eq:bound_l2infty_norm}, we have
\begin{align*}
	\Pb(\bar{E}_5) &\ge 1 - \frac{c_1}{(NT)^{11}}.
\end{align*}

Now consider the event $\bar{E} := \bar{E}_1 \cap \bar{E}_2 \cap \bar{E}_3 \cap \bar{E}_4 \cap \bar{E}_5 \cap \bar{E}_6 \cap \bar{E}_7$. 
Then, under event $\bar{E}$, and using \eqref{eq:final_det_bound}, we have, with probability $1- \frac{c}{(NT)^{11}}$, 
\begin{align}\label{eq:final_prob_bound}
\enorm{(\bZo{y}')^{\top}\sbeta  - (\bZo{f}')^\top \pbeta}^2  
&\leq 
 C(f_{\max}, \gamma)  {\sigma^2_x  }
\Bigg(   \frac{{\sigma^2_x q^2 } \left\| \pbeta  \right\|^2_2}{\sigma_k(  \bZo{f})^2}  +    \bigg(  
        {\sigma^4_xG^2R^2 \log(q)}  
    \bigg) 
    \max\{\|\pbeta\|_1^2, 1\}  \Bigg)    \nonumber \\ 
&+   C(f_{\max}, \gamma)  \|\bZo{f}' \|_2^2 \left(    \frac{{\sigma^2_x }} {\sigma_k(\bZ'_f)^2}      
    \bigg(  
        {\sigma^4_xG^2R^2 \log(q)}  
    \bigg) 
   \max\{\|\pbeta\|_1^2, 1\}
\right)    \nonumber \\ 
&+  c \|\bZo{f}' \|_2^2   \frac{{\sigma^4_x q^2 }} {\sigma_k(\bZ'_f)^4}         \nonumber \\ 
&+     c\frac{  \|\bZo{f}' \|_2^2 }{\sigma_k(\bhZp_f)^2 } \|\pbeta\|_1^2 \left( \frac{C\sigma_x^2  q^2 }{ \sigma_k(\bZ_f)^2} \left( {\sigma_x^2 }	+ f_{\max}^2 \right) 
+  {C\sigma_x^2 k \log(q) }  \right)   \nonumber \\ 
 &+     c\frac{  \|\bZo{f}' \|_2^2 }{\sigma_k(\bhZp_f)^2 }   \sigma^2_x k \log(q)  \nonumber \\ 
 &+  c \frac{\sigma_x^2  q^2 }{ \sigma_k(\bZ_f)^2} \left( {\sigma_x^2 }	+ f_{\max}^2 \right)   \|\pbeta\|_1^2 \\
 &+ c {\sigma_x^2 k \log(q) } \|\pbeta\|_1^2.
\end{align}
Recalling $q = \sqrt{NT}$, $k = RG$, and Assumption \ref{assm:spectra}, we get,

\begin{align}\label{eq:final_prob_bound_2}
\enorm{(\bZo{y}')^{\top}\sbeta  - (\bZo{f}')^\top \pbeta}^2  
&\leq 
 C(f_{\max}, \gamma) 
{R^3G^3 \sigma^6_x  }  
        \max\{\|\pbeta\|_1^2, 1\}\log(NT). 
\end{align}
Therefore, with probability $1 - c/(NT)^{11}$, 
\begin{align}
    \frac{L}{NT}  \sum_{n=1}^N \sum_{m=0}^{T/L-1} \bigg(\hf_n(T + mL +1 ) - f_n(T + mL +1 )\bigg)^2 
\\ \leq  C(f_{\max}, \gamma)
{R^3G^3 \sigma^6_x  }  \left( \frac{
    \max\{\|\pbeta\|_1^2, 1\}  \log(NT)  }{\sqrt{NT}}   \right). 
\end{align}

Further, note that the same argument can be used to bound the sum $\frac{L}{NT}  \sum_{n=1}^N \sum_{m=0}^{T/L-1} \bigg(\hf_n(T + mL + \ell) - f_n(T + mL + \ell)\bigg)^2$ for all $\ell \in [L]$. Thus, with probability $1 - cL/(NT)^{11} > 1 - c/(NT)^{10}  $, we have,
\begin{align}
\frac{2}{NT} \sum_{n=1}^N \sum_{t=T+1}^{2T} \bigg(\hf_n(t) - f_n(t)\bigg)^2  \leq 
 C(f_{\max}, \gamma)
{R^3G^3 \sigma^6_x  }  \left( \frac{
    \max\{\|\pbeta\|_1^2, 1\}  \log(NT)  }{\sqrt{NT}}   \right). 
\end{align}
\end{proof}
\subsection{Bounding Forecasting Error of \textit{x}}
Now, we bound the error of forecasting $x_n(t)$. First, recall $\widetilde{x}_n(t) = y_n(t) -\hf_n(t)$, and let  $\widetilde X_n(t) \coloneqq [\tx_n(t),\dots, \tx_n(t-p+1)]$ and ${X}_n(t) \coloneqq [x_n(t),\dots, x_n(t-p+1)]$.
Now, let $\bX_n$, $\btX_n$ and $\bDelta_n$, all in $\Rb^{(T-p) \times p}$, be   the row-wise concatenations of ${X}_n(t)$, $\widetilde{X}_n(t)$, and $\Delta_n(t) \coloneqq {X}_n(t) - \widehat{X}_n(t)$ for $t \in \{T+p, T+L, \dots, 2T +p -1 \}$. 

Then, we have, where $\bar{x}_n(t) = \expect{x_{n} (t) | x_n(1),\dots,x_n(t-1)}$
\begin{align}
    \sum_{t = T+1}^{2T} {(\hat x_{n}(t) - \bar{x}_n(t))^2} 
    &=  \enorm{ \widehat\alpha_n^\top {\btX}_n^\top   - \alpha_n^\top \bX_n^\top }^2
    \\
    &= \enorm{ \widehat\alpha_n^\top {\btX}_n^\top    - \widehat{\alpha}_n^\top \bX_n^\top + \widehat{\alpha}_n^\top \bX_n^\top - \alpha_n^\top \bX_n^\top }^2 \\
    &\leq 2 \left(\enorm{ \widehat\alpha_n^\top  {\btX}_n^\top    - \widehat{\alpha}_n^\top\bX_n^\top }^2 + \enorm{ \widehat{\alpha}_n^\top \bX_n^\top - \alpha_n^\top \bX_n^\top }^2 \right) \\
    &\leq {2} \left(\enorm{ \widehat\alpha_n^\top {\btX}_n^\top    - \widehat{\alpha}_n^\top\bX_n^\top }^2 + \enorm{ \widehat{\alpha}_n^\top  - \alpha_n^\top }^2 \enorm{\bX_n^\top }^2 \right) \\
    &\leq {4} \left(  \enorm{\bDelta_n }^2   \left(\enorm{ \alpha_n}^2   + \enorm{ \widehat{\alpha}_n  - \alpha_n }^2 \right)   +  \enorm{\bX_n }^2   \enorm{ \widehat{\alpha}_n  - \alpha_n }^2 \right).
\end{align}
First note that 
\begin{align}
    \sum_{n=1}^N\enorm{\bDelta_n}^2  &\leq \sum_{n=1}^N \fnorm{\bDelta_n}^2 \\
    &\leq  p \sum_{n=1}^N \sum_{t = T-p+1}^{2T-1} (x_n(t) -\tx_n(t))^2 \\
    &=   p \sum_{n=1}^N \sum_{t = T-p+1}^{2T-1} (f_n(t) -\hf_n(t))^2.
\end{align}

Therefore, 
\begin{align}
      \sum_{n=1}^N \sum_{t=T+1}^{2T} \bigg(\hx_n(t) - \bar{x}_n(t)\bigg)^2  &\leq  4p  \max_{n}\left(\enorm{ \alpha_n}^2   + \enorm{ \widehat{\alpha}_n  - \alpha_n }^2 \right) \sum_{n=1}^N \sum_{t = T-p+1}^{2T-1}  \left(f_n(t) -\hf_n(t)\right)^2  \\ &+  \sum_{n=1}^N  \enorm{\bX_n }^2 \enorm{ \widehat{\alpha}_n  - \alpha_n }^2  
\end{align}

\noindent \textbf{High probability bound.}
For the high probability bound, consider the event $\tilde{E} = \tilde{E}_1 \cap \tilde{E}_2 \cap \tilde{E}_3$ where

\begin{align}
     \tilde{E}_1&:= \bigcap_{n =1}^N\Big\{ \enorm{ \widehat{\alpha}_n  - \alpha_n }^2  \le     C(f_{\max}, \gamma)  \frac{G R p}{\lambda_{\min}(\Psi)}  \frac{\sigma_x^6}{\sigma^2}  \frac{\log(NT)^2} {\sqrt{NT}} \max\left\{1, \frac{1}{\sigma^2 \lambda_{\min}(\Psi)}\right\}  +      \frac{C p^2}{ T \lambda_{\min}(\Psi) }  \log \left( \frac{T\lambda_{\max}(\Psi) }{\lambda_{\min}(\Psi)  }\right)
 \Big\}
     \label{eq:E1_def_fore_2}
     \\ 
     \tilde{E}_2&:= \Big\{ \sum_{n=1}^N \sum_{t = T-p+1}^{2T-1}   \left(f_n(t) -\hf_n(t)\right)^2   \le  
      C(f_{\max}, \gamma)
{R^3G^3 \sigma^6_x  }  \left( {
    \max\{\|\pbeta\|_1^2, 1\}    \sqrt{NT} \log(NT)}   \right)   \Big\}
     \label{eq:E2_def_fore_2}
     \\ \tilde{E}_3&:= \bigcap_{n =1}^N \Big\{  \enorm{\bX_n }^2 \le   \frac{3}{2} \sigma^2 (T-p) \lambda_{\max}(\Psi) \Big \}, \label{eq:E3_def_fore_2}
\end{align}

Note that $\Pb(\tilde{E}_1) \geq 1- \frac{CN}{T^{11}} $ using Theorems \ref{thm:alpha_id}, \ref{thm:rec_error} and the the union bound. Further, note that $\Pb(\tilde{E}_2) \geq  1- \frac{C}{(NT)^{10}}$   using both Lemma \ref{lemma:f_forecast_error} and Theorem \ref{thm:rec_error}. Further, note that according to Corollary \ref{cor:spectrum}, with probability $1 -\frac{c}{T^{11}}$, we have
\begin{align}
      \enorm{\bX_n }^2   = \lambda_{\max}(\bX_n^\top \bX_n) \leq  \frac{3}{2} \sigma^2 (T-p) \lambda_{\max}(\Psi). 
\end{align}

Therefore, under $\tilde{E}$, and recalling $N < T$,  we have with probability $1- \frac{C}{T^{10}}$, 

\begin{align} \label{eq:final_bound_x}
      &\frac{1}{NT}\sum_{n=1}^N \sum_{t=T+L}^{2T+L} \bigg(\hx_n(t) - x_n(t)\bigg)^2 \\
      &\leq
        C(f_{\max}, \gamma) G^3R^{3} p^2 \frac{\lambda_{\max}(\Psi)}{\lambda_{\min}(\Psi) } \sigma_x^{6} \max\{\|\pbeta\|_1^2, 1\}   \Bigg(  \frac{p \sigma^2 }{T}  \log \left( \frac{T\lambda_{\max}(\Psi) }{\lambda_{\min}(\Psi)  }\right)   
       \\
    &+\frac{ GR\alpha_{\max}^2}{\min\left\{1, {\sigma^2 \lambda_{\min}(\Psi)}\right\} }  \frac{\sigma_x^{6}}{\sigma^2}  \frac{\log(NT)^2} {\sqrt{NT}}   \Bigg). 
\end{align}

where $\alpha_{\max} = \max_{n} \enorm{ \alpha_{n}}$  therefore, with probability $1-\frac{C}{T^{10}}$, we have

\begin{align}
&\frac{2}{NT} \sum_{n=1}^N \sum_{t=T+1}^{2T} \bigg(\hy_n(t) - y_n(t)\bigg)^2  \\ &\leq 
  C(f_{\max}, \gamma) G^3 R^3 p^2 \frac{\lambda_{\max}(\Psi)}{\lambda_{\min}(\Psi) } \sigma_x^{6} \max\{\|\pbeta\|_1^2, 1\}   \Bigg(  \frac{p \sigma^2 }{T}  \log \left( \frac{T\lambda_{\max}(\Psi) }{\lambda_{\min}(\Psi)  }\right)   
       \\
    &+\frac{ GR \enorm{ \alpha_n}^2}{\min\left\{1, {\sigma^2 \lambda_{\min}(\Psi)}\right\} }  \frac{\sigma_x^{6}}{\sigma^2}  \frac{\log(NT)^2} {\sqrt{NT}}   \Bigg) 
\end{align}
which completes the proof. 

\newpage
\section{Helper Lemmas and Definitions}\label{app:defintions}

\begin{definition}
[Sub-gaussian random variable \cite{vershynin2018high, wainwright2019high}]\label{hdef:subg}
A zero-mean random variable $X\in \mathbb{R}$ is said to be sub-gaussian with variance proxy $\sigma^2$ (denoted as $X \sim \text{subG}(\sigma^2)$) if its moment generating function satisfies 
\begin{align*}
    \E[\exp(sX)] \leq \exp \left( \frac{\sigma^2 s^2}{2}\right) & ~\forall s \in \mathbb{R}.
\end{align*}
\end{definition}

\begin{definition}
[Sub-exponential random variable \cite{vershynin2018high, wainwright2019high}]\label{hdef:sube}
A zero-mean random variable $X\in \mathbb{R}$ is said to be sub-exponential with parameter $\nu$ (denoted as $X \sim \text{subE}(\nu)$) if its moment generating function satisfies 
\begin{align*}
    \E[\exp(sX)] \leq \exp \left( \frac{\nu^2 s^2}{2}\right) & \forall |s| \leq \frac{1}{\nu}
\end{align*}
\end{definition}

\begin{lemma}[\cite{wainwright2019high, vershynin2018high, alanqary2021change}]
\label{hlemma:sube-subg-prop}
Let $X \sim \text{subG}(\sigma_s^2)$ and $Z \sim N(0, \sigma_g^2)$. For $i \in [N]$, let $Y_i \sim \text{subE}(\nu_i)$. Then: 
\begin{enumerate}
    \item $Z \sim subG(\sigma_g^2)$
    \item $Z \sim subE(\sigma_g)$
    \item $X^2 - \mathbb{E}[X^2] \sim \text{subE}(16\sigma_s^2)$
    \item $\sum_{i=1}^N Y_i \sim subE(\sum_{i=1}^N \nu_i)$
    
    \vspace{2mm}
    \hspace{-6mm} If $Y_i$ are independent: 
    \item $\sum_{i=1}^N Y_i \sim subE((\sum_{i=1}^N \nu_i^2)^{1/2})$
    
\end{enumerate}
\end{lemma}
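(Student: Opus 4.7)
The lemma collects five standard facts about sub-Gaussian and sub-exponential random variables, so the plan is to verify each claim directly from the moment generating function (MGF) conditions in Definitions \ref{hdef:subg} and \ref{hdef:sube}, without invoking external machinery.

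\emph{Parts (1) and (2) -- Gaussians.} For $Z \sim N(0,\sigma_g^2)$ a direct computation gives $\Ex[\exp(sZ)] = \exp(\sigma_g^2 s^2/2)$ for all $s \in \Rb$. This matches the subG$(\sigma_g^2)$ definition exactly (part 1). For part 2, the same identity holds for all $s$, hence in particular for $|s|\le 1/\sigma_g$, yielding the subE$(\sigma_g)$ bound with $\nu=\sigma_g$.

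\emph{Part (3) -- Centered square is sub-exponential.} I would expand the MGF of $X^2-\Ex[X^2]$ in a Taylor series and bound each moment $\Ex[X^{2k}]$ using the standard sub-Gaussian moment inequality $\Ex[X^{2k}]\le (2\sigma_s^2)^k k!$ derived from $\Ex[\exp(sX)]\le\exp(\sigma_s^2 s^2/2)$. Combining these bounds term by term and controlling the geometric-type series for $|s|\le 1/(16\sigma_s^2)$ produces an upper bound of the form $\exp(C \sigma_s^4 s^2)$ with $C$ absorbing numerical constants so that the variance proxy becomes $16\sigma_s^2$, matching the stated constant. The bookkeeping of constants here is the only step that is not a one-liner, and it is the likely main (minor) obstacle.

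\emph{Parts (4) and (5) -- Sums of sub-exponentials.} For part 5 (independent case), independence factorises the MGF: $\Ex[\exp(s\sum_i Y_i)] = \prod_i \Ex[\exp(sY_i)] \le \prod_i \exp(\nu_i^2 s^2/2) = \exp\!\bigl((\sum_i \nu_i^2)s^2/2\bigr)$, valid whenever $|s|\le \min_i 1/\nu_i$, which is implied by $|s|\le 1/(\sum_i \nu_i^2)^{1/2}$. For part 4 (no independence), I would apply H\"older's inequality with exponents $p_i = (\sum_j \nu_j)/\nu_i$ (so $\sum_i 1/p_i=1$):
\[
\Ex\!\bigl[\exp(s\textstyle\sum_i Y_i)\bigr] \le \prod_i \bigl(\Ex[\exp(s p_i Y_i)]\bigr)^{1/p_i}.
\]
For $|s|\le 1/(\sum_j \nu_j)$ one checks $|s p_i|\le 1/\nu_i$, so the individual sub-exponential bounds apply, and multiplying telescopes to $\exp\!\bigl((\sum_i\nu_i)^2 s^2/2\bigr)$, giving $\sum_i Y_i \sim \mathrm{subE}(\sum_i\nu_i)$.

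The overall structure is therefore: two trivial MGF identifications, one Taylor/moment argument whose only delicate point is tracking constants to land exactly at $16\sigma_s^2$, and two H\"older/independence computations for sums. No step requires any machinery beyond the definitions themselves, so the proof will be short and entirely self-contained.
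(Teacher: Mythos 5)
Your proposal is correct, and there is nothing in the paper to compare it against line by line: the paper states this lemma citation-only (deferring to Wainwright, Vershynin, and Alanqary et al.) and provides no proof, so your write-up supplies exactly the standard arguments those references contain. I verified the nontrivial checks: in part (5), $\bigl(\sum_i \nu_i^2\bigr)^{1/2} \geq \max_i \nu_i$ indeed makes the factorized MGF bound valid on the whole required range $|s| \leq 1/\bigl(\sum_i \nu_i^2\bigr)^{1/2}$; in part (4), your generalized H\"older exponents $p_i = \bigl(\sum_j \nu_j\bigr)/\nu_i$ satisfy $p_i \geq 1$ and $\sum_i 1/p_i = 1$, the condition $|s p_i| \leq 1/\nu_i$ reduces precisely to $|s| \leq 1/\sum_j \nu_j$, and the product telescopes to $\exp\bigl(s^2 (\sum_i \nu_i)^2/2\bigr)$ as claimed; and parts (1)--(2) are immediate since the Gaussian MGF identity holds for all $s$, hence on $|s| \leq 1/\sigma_g$ (note also that $Z$, $X^2 - \Ex[X^2]$, and the sums are all zero-mean, as the paper's Definitions \ref{hdef:subg} and \ref{hdef:sube} require). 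The one place you rightly flag as delicate, part (3), is the standard computation (e.g., Rigollet--H\"utter Lemma 1.12): your stated moment bound $\Ex[X^{2k}] \leq (2\sigma_s^2)^k k!$ is off by a benign factor of $2$ from the usual $\Ex[X^{2k}] \leq 2(2\sigma_s^2)^k k!$, but that factor is absorbed in exactly the bookkeeping you describe, and the Taylor-series argument does land at variance proxy $16\sigma_s^2$ on $|s| \leq 1/(16\sigma_s^2)$. In short: your proof is sound and self-contained, and it makes explicit what the paper handles by citation.
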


\begin{definition}[Sub-gaussian vector] \label{def:sub_gaussian_vector}
A random vector $X \in \mathbb{R}^d$ is a sub-gaussian random vector with parameter $\sigma^2$ if
$$
v^T X \sim  \text{subG}(\sigma_s^2), \forall v \in \mathbb{S}^{d-1}
$$
where $\mathbb{S}^{d-1}=\left\{x \in \mathbb{R}^d:\|x\|=1\right\}$. 
\end{definition}

\begin{lemma}[Norm of sub-gaussian vector] \label{lemma:sub_gauss_v_norm}
Let $X \in \mathbb{R}^d$ be a sub-gaussian random vector with parameter $\sigma^2$. 
 Then, with probability at least $1-\delta$ for $\delta \in(0,1)$ :
$$
\enorm{X} \leq 4 \sigma \sqrt{d} +2 \sigma \sqrt{\log \left(\frac{1}{\delta}\right)}.
$$
This implies that for any $t > 2 \sigma$, 
$$
\enorm{X} \leq t,
$$
with probability at least $1-\exp(-\frac{t^2}{16d\sigma^2})$.
\end{lemma}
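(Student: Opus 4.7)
The plan is to establish the high-probability bound via a standard $\varepsilon$-net argument on the unit sphere $S^{d-1}$, combined with the sub-gaussian tail of one-dimensional marginals, then extract the second bound by a simple substitution of the confidence parameter.

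The starting point is the variational representation $\enorm{X} = \sup_{v \in S^{d-1}} \langle X, v \rangle$. By Definition \ref{def:sub_gaussian_vector}, for every fixed $v \in S^{d-1}$ the random variable $\langle X, v\rangle$ is sub-gaussian with variance proxy $\sigma^2$, and hence, by the standard Chernoff-type tail for $\text{subG}(\sigma^2)$, $\Pb(\langle X, v\rangle \ge s) \le \exp(-s^2/(2\sigma^2))$ for all $s \ge 0$. To pass from this pointwise bound to the supremum over the whole sphere, I will use a minimal $\varepsilon$-net $\mathcal{N}$ of $S^{d-1}$ with $\varepsilon = 1/4$, whose cardinality obeys $|\mathcal{N}| \le (1+2/\varepsilon)^d = 9^d$ by the standard volumetric argument.

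Next I will apply the net approximation: for any unit vector $v$, pick $v' \in \mathcal{N}$ with $\enorm{v - v'} \le 1/4$, so that $\langle X, v\rangle = \langle X, v'\rangle + \langle X, v - v'\rangle \le \langle X, v'\rangle + \tfrac14 \enorm{X}$. Taking a supremum over $v \in S^{d-1}$ and rearranging yields $\enorm{X} \le \tfrac{4}{3} \sup_{v' \in \mathcal{N}} \langle X, v'\rangle$. A union bound over $\mathcal{N}$ combined with the sub-gaussian tail then gives
\begin{equation*}
\Pb\!\left( \sup_{v' \in \mathcal{N}} \langle X, v'\rangle \ge s \right) \le 9^d \exp\!\left(-\frac{s^2}{2\sigma^2}\right).
\end{equation*}
Setting the right side equal to $\delta$ and solving yields $s = \sigma\sqrt{2d\log 9 + 2\log(1/\delta)}$. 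Using $\sqrt{a+b} \le \sqrt{a} + \sqrt{b}$ and $\tfrac{4}{3}\sqrt{2\log 9} \le 4$, $\tfrac{4}{3}\sqrt{2} \le 2$, I obtain, with probability at least $1-\delta$,
\begin{equation*}
\enorm{X} \le 4\sigma \sqrt{d} + 2\sigma \sqrt{\log(1/\delta)}.
\end{equation*}
For the second claim, I will substitute $\delta = \exp\!\left(-t^2/(16 d\sigma^2)\right)$ and verify that the resulting upper bound is at most $t$ under the stated condition on $t$, using that $2\sigma\sqrt{\log(1/\delta)} = \tfrac{t}{2\sqrt{d}} \le \tfrac{t}{2}$ and $4\sigma\sqrt{d} \le \tfrac{t}{2}$ in the relevant regime.

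I do not expect any serious obstacle here since the argument is classical; the only care needed is in the choice of the net parameter $\varepsilon = 1/4$ so that both the $d^{1/2}$ constant and the $\sqrt{\log(1/\delta)}$ constant land on the clean values $4$ and $2$, and in verifying the range of $t$ for which the simplified tail bound in the second claim is valid. A coarser $\varepsilon = 1/2$ net produces the same qualitative statement but with worse numerical constants.
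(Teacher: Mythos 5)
Your $\varepsilon$-net argument for the first display is correct, and your bookkeeping checks out: with a $1/4$-net of cardinality at most $9^d$ you get $\enorm{X} \le \frac{4}{3}\,\sigma\sqrt{2d\log 9 + 2\log(1/\delta)}$, and since $\frac{4}{3}\sqrt{2\log 9} < 4$ and $\frac{4}{3}\sqrt{2} < 2$ this yields exactly the stated constants. The paper states this lemma without proof, so there is nothing to compare against for that part; yours is the standard route, carefully executed.

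The gap is in the second claim. Substituting $\delta = \exp\left(-t^2/(16 d\sigma^2)\right)$ into the first bound gives $\enorm{X} \le 4\sigma\sqrt{d} + t/(2\sqrt{d})$, and this is at most $t$ only when $t \ge 8\sigma\sqrt{d}$ (your own two sufficient inequalities require precisely this), not for all $t > 2\sigma$ as the lemma asserts. Your phrase ``in the relevant regime'' papers over this discrepancy, and no choice of argument can close it, because the second claim as literally stated is false for $2\sigma < t \ll \sigma\sqrt{d}$: take $X$ standard Gaussian in $\Rb^d$ (so $\sigma = 1$), $t = 3$, and $d$ large; then $\Pb(\enorm{X} > t) = \Pb(\chi^2_d > 9) = 1 - e^{-\Omega(d\log d)}$, which exceeds the claimed bound $\exp(-9/(16d)) = 1 - \Theta(1/d)$. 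What is true for all $t$ --- and what the paper actually invokes downstream, since in the proofs of Lemma \ref{lemma:e1} and Lemma \ref{lemma:e3} the tail appears with an absolute constant $c$ in front of the exponential --- is $\Pb(\enorm{X} > t) \le c\exp\left(-t^2/(16 d\sigma^2)\right)$. That version does not come from the net bound but from the sub-exponential route of Lemma \ref{lemma:norm_sub_gaussian_dep}: each coordinate $e_i^\top X$ is $\subG(\sigma^2)$ by Definition \ref{def:sub_gaussian_vector}, hence $\enorm{X}^2 - \E[\enorm{X}^2]$ is $\subE(16 d\sigma^2)$ even with dependent coordinates, and a Chernoff bound at $s = 1/(16 d\sigma^2)$ absorbs $\E[\enorm{X}^2] \le C d\sigma^2$ into the constant $c$. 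So you should either restrict your substitution step to $t \ge 8\sigma\sqrt{d}$ --- which suffices for every application in the paper, where $t$ always dominates $\sigma_x\sqrt{d}$ --- or derive the with-constant version via Lemma \ref{lemma:norm_sub_gaussian_dep} rather than by substitution into the first bound.
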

\begin{definition}
[Stationary process \cite{shumway2000time}]\label{hdef:stationary}
A process $x(t)\in \mathbb{R}$ is said to be stationary if its expectation does not depend on $t$ and its autocovariance function depend only on the time difference. That is, 
\begin{align*}
    \E[x(t)]  &= \mu &\quad \forall t  \\
    \E([x(t) - \mu)(x(t+j) - \mu)] &= \gamma(j)  &\quad \forall t \text{ and any } j. 
\end{align*}
\end{definition}

\vspace{5pt}
\begin{lemma}  [{\bf Weyl's inequality}] \label{lemma:weyl}
Given $\bA, \bB \in \Rb^{m \times n}$, let $\sigma_i$ and $\widehat{\sigma}_i$ be the $i$-th singular values of $\bA$ and $\bB$, respectively, in decreasing order and repeated by multiplicities. 
Then for all $i \in [m \wedge n]$,
\begin{align*}
\abs{ \sigma_i - \widehat{\sigma}_i} &\le \norm{\bA - \bB}_2.
\end{align*}
\end{lemma}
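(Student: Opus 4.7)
The plan is to prove Weyl's inequality via the Courant-Fischer min-max characterization of singular values. Specifically, for any matrix $\bM \in \Rb^{m \times n}$, the $i$-th singular value admits the variational representation
\begin{equation*}
\sigma_i(\bM) = \min_{\substack{S \subseteq \Rb^n \\ \dim(S) = n - i + 1}} \; \max_{\substack{x \in S \\ \|x\|_2 = 1}} \|\bM x\|_2,
\end{equation*}
which follows from the SVD by noting that $\|\bM x\|_2^2 = \sum_j \sigma_j^2 (v_j^\top x)^2$ for the right singular vectors $v_j$. I would briefly justify this formula (or cite it as standard) before using it.

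With this characterization in hand, the core step is a direct triangle-inequality argument. Fix any subspace $S$ of dimension $n - i + 1$ and any unit vector $x \in S$. Then
\begin{equation*}
\|\bA x\|_2 \le \|\bB x\|_2 + \|(\bA - \bB) x\|_2 \le \|\bB x\|_2 + \|\bA - \bB\|_2,
\end{equation*}
using the definition of the operator norm. Taking the maximum over unit vectors $x \in S$ gives
\begin{equation*}
\max_{x \in S,\, \|x\|_2 = 1} \|\bA x\|_2 \le \max_{x \in S,\, \|x\|_2 = 1} \|\bB x\|_2 + \|\bA - \bB\|_2,
\end{equation*}
and then minimizing over all $(n-i+1)$-dimensional subspaces $S$ yields $\sigma_i(\bA) \le \sigma_i(\bB) + \|\bA - \bB\|_2$.

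Swapping the roles of $\bA$ and $\bB$ (since $\|\bA - \bB\|_2 = \|\bB - \bA\|_2$) gives the reverse inequality $\sigma_i(\bB) \le \sigma_i(\bA) + \|\bA - \bB\|_2$, and combining the two yields $|\sigma_i(\bA) - \sigma_i(\bB)| \le \|\bA - \bB\|_2$. There is no real obstacle here; the only subtle point is making sure the min-max formula is stated with the correct dimension ($n - i + 1$ rather than $i$) so that the $i$-th singular value is picked out correctly, and that the argument is indexed over $i \in [m \wedge n]$ to accommodate non-square $\bA$ and $\bB$. This result being a standard textbook fact, I would likely keep the proof short (or simply cite a reference such as Horn and Johnson) rather than rederive the min-max formula in full.
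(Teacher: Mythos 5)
Your proof is correct. Note that the paper itself does not prove this lemma at all: it appears in the ``Helper Lemmas and Definitions'' appendix as a standard fact, stated without proof, so there is no in-paper argument to compare against. Your derivation is the canonical textbook one, and every step checks out: the variational formula $\sigma_i(\bM) = \min_{\dim(S)=n-i+1}\max_{x \in S,\,\|x\|_2=1}\|\bM x\|_2$ is stated with the correct subspace dimension (for $i \in [m \wedge n]$ one has $n-i+1 \ge 1$, so it is well-posed for rectangular matrices), the triangle-inequality step $\|\bA x\|_2 \le \|\bB x\|_2 + \|\bA-\bB\|_2$ passes correctly through the max (the constant $\|\bA-\bB\|_2$ pulls out) and then through the min over subspaces (both sides are minimized over the same family of subspaces), and the symmetrization in $\bA$ and $\bB$ completes the two-sided bound. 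The one place you wave your hands --- justifying the min-max formula itself --- is fine to cite as standard, though if you wanted it self-contained the usual two-line argument works: taking $S = \spant\{v_i,\dots,v_n\}$ from the SVD of $\bM$ shows the min is at most $\sigma_i$, while any $S$ with $\dim(S) = n-i+1$ intersects $\spant\{v_1,\dots,v_i\}$ nontrivially by dimension counting, and a unit vector in the intersection certifies $\max_{x\in S}\|\bM x\|_2 \ge \sigma_i$. Given that the paper treats this as a citable classical result, either your full proof or a reference to Horn and Johnson would be consistent with its usage.
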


\begin{lemma} \label{lemma:norm_sub_gaussian_dep}
    Let $X = [X_1, \dots, X_n]$ where each $X_i, i\in[n]$ is a sub-gaussian (not necessarily independent) random variable with variance proxy $\sigma^2$ and $\expect{X_i^2} = \gamma$. Then with probability at least $1 - c\exp\left(-\frac{t^2}{16n{\sigma^2}} \right)$, 
    \begin{align}
        \enorm{X} \leq t
    \end{align}
\end{lemma}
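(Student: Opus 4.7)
The plan is to reduce the Euclidean-norm tail bound to a concentration statement about $\enorm{X}^2 = \sum_{i=1}^n X_i^2$, and then apply the sub-exponential toolkit summarized in Lemma \ref{hlemma:sube-subg-prop}, taking care to use the version valid for \emph{dependent} summands since no independence is assumed on the coordinates $X_1,\dots,X_n$.

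First, since each coordinate satisfies $X_i \sim \text{subG}(\sigma^2)$, item 3 of Lemma \ref{hlemma:sube-subg-prop} gives that $X_i^2 - \expect{X_i^2}$ is sub-exponential with parameter $16\sigma^2$. The crucial observation is that item 4 of the same lemma bounds sums of \emph{arbitrary} (not necessarily independent) sub-exponential random variables by the sum of their parameters. Applying this to $X_i^2 - \gamma$ yields that $S := \sum_{i=1}^n (X_i^2 - \gamma)$ is sub-exponential with parameter $16n\sigma^2$. Note the $n$-factor (as opposed to $\sqrt{n}$ under independence) is precisely what produces the $16n\sigma^2$ denominator in the exponent of the statement.

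Second, I would apply the standard Chernoff-style tail bound for sub-exponentials: for any mean-zero random variable $Y$ with sub-exponential parameter $\nu$, one has $\Pb(Y > a) \leq \exp\!\left(-\min\{a^2/(2\nu^2),\, a/(2\nu)\}\right)$. Instantiated with $Y = S$ and $\nu = 16n\sigma^2$, and noting that sub-gaussianity with proxy $\sigma^2$ forces $\gamma = \expect{X_i^2} \leq \sigma^2$, we can write
\begin{align*}
\Pb(\enorm{X} > t) \;=\; \Pb\!\left(\textstyle\sum_i X_i^2 > t^2\right) \;=\; \Pb\!\left(S > t^2 - n\gamma\right).
\end{align*}
Provided $t$ lies in the intended regime $t \geq 2\sigma\sqrt{n}$ stated in Lemma \ref{lemma:sub_gauss_v_norm} (the comparable companion lemma), we have $t^2 - n\gamma \geq t^2/2$, which moreover exceeds $\nu = 16n\sigma^2$ once $t$ is large enough, placing us in the exponential tail regime. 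The Chernoff bound then gives $\Pb(\enorm{X} > t) \leq \exp(-(t^2 - n\gamma)/(2\cdot 16 n\sigma^2)) \leq c \exp(-t^2/(16n\sigma^2))$ after absorbing the factor-of-two loss and the $n\gamma$ offset into the absolute constant $c$.

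There is no genuine obstacle; the only subtlety is making sure to invoke the \emph{dependent-sum} version (item 4 of Lemma \ref{hlemma:sube-subg-prop}) rather than the independent-sum version (item 5), and to verify that the regime of $t$ considered is the large-deviation regime of the sub-exponential, so the $a/(2\nu)$ branch of the Bernstein-type bound applies and yields the $\exp(-t^2/(16n\sigma^2))$ rate as stated.
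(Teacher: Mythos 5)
Your overall route is exactly the paper's: write $\enorm{X}^2 - n\gamma$ as a sum of the (dependent) sub-exponential variables $X_i^2 - \gamma \sim \subE(16\sigma^2)$, invoke the dependent-sum closure property (item 4 of Lemma \ref{hlemma:sube-subg-prop}, parameters adding to $16n\sigma^2$ rather than the $\sqrt{n}$-scaling of item 5), and finish with a Chernoff bound. You correctly flag the dependent-versus-independent distinction, which is indeed the one place where a careless reader would go wrong.

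However, your final step contains a genuine quantitative error. You invoke the standard Bernstein-type tail $\Pb(Y > a) \leq \exp\left(-\min\{a^2/(2\nu^2),\, a/(2\nu)\}\right)$, whose exponential branch gives exponent $-a/(2\nu) = -(t^2 - n\gamma)/(32 n \sigma^2)$, and then claim the ``factor-of-two loss'' can be absorbed into the multiplicative constant $c$. It cannot: for any fixed $c$, $\exp\left(-t^2/(32 n \sigma^2)\right) \gg c\,\exp\left(-t^2/(16 n \sigma^2)\right)$ as $t$ grows, so your argument proves the lemma only with $32$ in place of $16$, not the statement as written. (Absorbing the \emph{additive} offset is fine, since $\exp\left(n\gamma/(16 n \sigma^2)\right) = \exp\left(\gamma/(16\sigma^2)\right)$ is an absolute constant once one notes $\gamma \le \sigma^2$ for a mean-zero sub-gaussian — the $n$ cancels, so your regime restriction $t \geq 2\sigma\sqrt{n}$ is not even needed.) The paper sidesteps the issue by not using the pre-packaged two-branch bound: it evaluates the raw MGF bound $\Pb(S > a) \leq \exp\left(\nu^2 s^2/2 - sa\right)$ at the endpoint $s = 1/\nu$, yielding $\Pb(S > a) \leq e^{1/2}\, e^{-a/\nu}$ — exponent $-a/\nu$, not $-a/(2\nu)$, at the cost of only a multiplicative $e^{1/2}$, which \emph{is} absorbable into $c$. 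Replacing your Bernstein invocation with this one-line Chernoff evaluation repairs the proof and recovers the stated constant $16$.
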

\begin{proof}
    From Lemma \ref{hlemma:sube-subg-prop}, we can see that $ \enorm{X}^2 - n\gamma  = \sum_{i=1}^{n} x_i^2 - n\gamma $ is sub-exponential with parameter $16n{\sigma}^2$. Let $d \coloneqq \sum_{i=1}^{n} x_i^2 - n\gamma$, then 
    \begin{align}
            \prob{ d > t'} \leq \frac{\E[\exp(sd)]}{\exp(st')} \leq {\exp \left( \frac{\nu^2 s^2}{2} - st'\right)} \qquad \forall |s| \leq \frac{1}{\nu}.
    \end{align}
    Choosing $t' =  t^2 - n\gamma$, $\nu = 16n{\sigma}^2$, and $s = \frac{1}{16n{\sigma}^2}$, we get,
    \begin{align}
        \prob{ d > t^2 - n \gamma }  = \prob{ \enorm{X}^2 > t^2 } 
        &\leq 
        {
            \exp \left( \frac{1}{2}\right) \exp\left(-\frac{t^2 - n\gamma}{16n{\sigma}^2} \right)
        } 
        \\
        &\leq 
        {
            \exp \left( \frac{1}{2}\right) \exp\left(\frac{\gamma}{16{\sigma}^2} \right) 
            \exp\left(-\frac{t^2}{16n{\sigma^2}} \right)
        } 
        \\
        &\leq 
        {
            c\exp\left(-\frac{t^2}{16n{\sigma^2}} \right)
        } 
    \end{align}
    where the last inequality follows from the fact the second moment of $x_1$ is bounded by its variance proxy up to a multiplicative constant.
\end{proof}
\begin{lemma} \label{lemma:sub_exponential_bound_sum}
    Let $X_1, \dots, X_n$ be $\iid$ sub-exponential random variables with parameter $\nu$. Let $S = \sum_{i=1}^n X_i$. Then 
    \begin{align}
        \prob{|S| \leq nt} \ge   1 - c\exp\left(-\frac{t\sqrt{n}}{\nu} \right).
    \end{align}
\end{lemma}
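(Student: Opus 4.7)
\textbf{Proof proposal for Lemma \ref{lemma:sub_exponential_bound_sum}.}

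The plan is a standard Chernoff/Cramér argument for sub-exponential sums, but with a non-standard choice of the optimization parameter $s$ that produces precisely the stated rate $\exp(-t\sqrt{n}/\nu)$ rather than the sharper Bernstein rate. First I would use the tail bound
\begin{equation*}
\Pb(S \ge nt) \;\le\; \frac{\E[\exp(sS)]}{\exp(snt)}
\end{equation*}
valid for any $s > 0$. By independence of the $X_i$ and the sub-exponential MGF bound (Definition \ref{hdef:sube}), for every $|s| \le 1/\nu$ we have $\E[\exp(sX_i)] \le \exp(\nu^2 s^2/2)$, and hence
\begin{equation*}
\Pb(S \ge nt) \;\le\; \exp\!\bigl(n \nu^2 s^2/2 - snt\bigr), \qquad |s| \le 1/\nu.
\end{equation*}

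The key step is the choice of $s$. Rather than optimizing (which would give the usual two-regime Bernstein bound $\exp(-n\min(t^2/\nu^2, t/\nu)/2)$), I would simply set $s = 1/(\nu\sqrt{n})$. This is admissible since $1/(\nu\sqrt{n}) \le 1/\nu$ for all $n \ge 1$, and substituting gives
\begin{equation*}
\Pb(S \ge nt) \;\le\; \exp\!\bigl(\tfrac{1}{2} - t\sqrt{n}/\nu\bigr) \;=\; c\,\exp\!\bigl(-t\sqrt{n}/\nu\bigr),
\end{equation*}
with $c = e^{1/2}$. Applying the same argument to $-S$ (which is also a sum of $n$ i.i.d.\ sub-exponentials with the same parameter $\nu$) yields an identical bound on $\Pb(S \le -nt)$. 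A union bound then gives $\Pb(|S| \ge nt) \le 2c\,\exp(-t\sqrt{n}/\nu)$, and taking the complement delivers the statement (after absorbing the factor $2$ into the absolute constant $c$).

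There is no substantive obstacle here; the only subtle point is recognizing that the quoted rate is deliberately sub-optimal in the ``small-$t$'' Bernstein regime but uniform in $t$ with a clean $\sqrt{n}$-dependence, which matches the typical exponent produced by picking $s = 1/(\nu\sqrt{n})$ inside the admissible interval $|s| \le 1/\nu$. This is exactly the bound needed upstream in Lemma \ref{lemma:e2}, where the argument bounds $|(T-p)^{-1}\sum(\eta_{t+1}^2 - \sigma^2)|$ by $\epsilon\sigma^2/3$ and reports $c\exp(-\epsilon\sqrt{T-p}/48)$, so the form of the bound is consistent with the application.
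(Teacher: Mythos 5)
Your proposal is correct and follows essentially the same route as the paper: a Chernoff bound with the deliberately sub-optimal choice $s = 1/(\nu\sqrt{n})$, yielding the exponent $1/2 - t\sqrt{n}/\nu$, followed by the same argument applied to $-S$. The only cosmetic difference is that you bound $\E[\exp(sS)]$ by multiplying the individual MGFs directly, whereas the paper first invokes its composition lemma (the sum of $n$ i.i.d.\ $\subE(\nu)$ variables is $\subE(\nu\sqrt{n})$) and then applies the MGF bound at the endpoint $|s| \le 1/(\nu\sqrt{n})$ --- the two computations are identical.
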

\begin{proof}
    Recall that the sum of n $\iid$ sub-exponential random variables is also a  $x$ sub-exponential random variable with parameter $\nu\sqrt{n}$ (see Lemma \ref{hlemma:sube-subg-prop}). Thus, using the definition of sub-exponential random variable, we have
    \begin{align}
            \prob{ S > nt} \leq \frac{\E[\exp(\lambda S)]}{\exp(\lambda nt)} \leq {\exp \left( \frac{n \nu^2 \lambda^2}{2} - \lambda nt\right)} \qquad \forall |s| \leq \frac{1}{\sqrt{n} \nu}.
    \end{align}
    Choosing $\lambda = \frac{1}{\sqrt{n}\nu}$, we get,
    \begin{align}
        \prob{ S > nt} &\leq 
        {
            \exp \left( \frac{1}{2}\right) \exp\left(-\frac{nt}{\sqrt{n}\nu } \right)
        } 
        \\
        &\leq 
        {
            2 \exp\left(\frac{-t\sqrt{n}}{\nu}  \right)
        }. 
    \end{align}
    Applying the same inequality to $-S$ completes the proof. 
\end{proof}

\begin{lemma} \label{lemma:op_diff_pinv}[\cite{wedin1973perturbation}]
 Let $\bA \in \Rb^{m \times n}$ and $\bB=\bA+\bE$. Then
$$
\enorm{\bB^\dag-\bA^\dag} \leqslant 2 \max \left\{\left\|\bA^\dag\right\|_2^2,\left\|\bB^\dag\right\|_2^2\right\}\enorm{E},
$$
 where $\bM^\dag \coloneqq (\bM^\top \bM)^{-1} \bM^\top$ denote the Moore–Penrose inverse of $\bM$.
\end{lemma}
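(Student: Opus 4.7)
\textbf{Proof proposal for Lemma \ref{lemma:op_diff_pinv}.} This is a classical perturbation bound for the Moore–Penrose pseudoinverse due to Wedin, and the plan is to derive it by invoking his identity decomposition of $\bB^\dag - \bA^\dag$, then bounding each summand via operator-norm submultiplicativity. Concretely, starting from $\bB = \bA + \bE$ and using the defining Penrose relations $\bM\bM^\dag\bM = \bM$ and $\bM^\dag\bM\bM^\dag = \bM^\dag$ together with the fact that $\bA\bA^\dag$ and $\bB^\dag\bB$ are orthogonal projectors, one obtains the identity
\begin{equation*}
\bB^\dag - \bA^\dag \;=\; -\,\bB^\dag \bE \bA^\dag \;+\; \bB^\dag(\bB^\dag)^\top \bE^\top (\bI - \bA\bA^\dag) \;+\; (\bI - \bB^\dag\bB)\,\bE^\top (\bA^\dag)^\top \bA^\dag.
\end{equation*}
Verifying this identity is a short algebraic exercise: rewrite $\bB^\dag - \bA^\dag = \bB^\dag(\bA - \bB)\bA^\dag + \bB^\dag(\bI - \bA\bA^\dag) - (\bI - \bB^\dag\bB)\bA^\dag$ and then replace the bare projection factors by their Penrose-symmetric forms $\bB^\dag(\bB^\dag)^\top\bE^\top$ and $\bE^\top(\bA^\dag)^\top\bA^\dag$ respectively, using $\bA^\top(\bI-\bA\bA^\dag)=0$ to insert $\bE^\top$ for free.

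Next I would take $\|\cdot\|_2$ of both sides and apply the triangle inequality and submultiplicativity. Since $\bI - \bA\bA^\dag$ and $\bI - \bB^\dag\bB$ are orthogonal projections, $\|\bI - \bA\bA^\dag\|_2 \leq 1$ and $\|\bI - \bB^\dag\bB\|_2 \leq 1$, so each of the three terms is bounded by one of $\|\bB^\dag\|_2\|\bA^\dag\|_2\|\bE\|_2$, $\|\bB^\dag\|_2^2\|\bE\|_2$, or $\|\bA^\dag\|_2^2\|\bE\|_2$. Using the elementary inequality $\|\bA^\dag\|_2\|\bB^\dag\|_2 \leq \max\{\|\bA^\dag\|_2^2,\|\bB^\dag\|_2^2\}$, summing yields a bound of the form $C \max\{\|\bA^\dag\|_2^2,\|\bB^\dag\|_2^2\}\|\bE\|_2$ for some absolute constant $C$.

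The main (minor) obstacle is sharpening $C$ from the naive value $3$ down to the claimed $2$. This is achieved by observing that the first (cross) term $-\bB^\dag \bE \bA^\dag$ can be split using $\bI = \bB^\dag \bB + (\bI - \bB^\dag\bB)$ (or symmetrically with $\bA\bA^\dag$) so that one piece is absorbed into the second term and the other into the third, producing only two surviving contributions each bounded by $\max\{\|\bA^\dag\|_2^2,\|\bB^\dag\|_2^2\}\|\bE\|_2$. This regrouping is the content of Wedin's original argument in \cite{wedin1973perturbation}, and since the lemma is quoted rather than developed further here, I would cite that reference for the constant and rely on the three-step derivation above for transparency.
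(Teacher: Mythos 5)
The paper gives no proof of this lemma at all: it is imported verbatim from Wedin's 1973 paper as a citation, so your blind attempt is being compared against a pure reference. The constructive core of your proposal is correct. The decomposition
\begin{equation*}
\bB^\dag - \bA^\dag \;=\; -\,\bB^\dag \bE \bA^\dag \;+\; \bB^\dag(\bB^\dag)^\top \bE^\top (\bI - \bA\bA^\dag) \;+\; (\bI - \bB^\dag\bB)\,\bE^\top (\bA^\dag)^\top \bA^\dag
\end{equation*}
does follow exactly as you indicate: write $\bB^\dag - \bA^\dag = -\bB^\dag\bE\bA^\dag + \bB^\dag(\bI - \bA\bA^\dag) - (\bI - \bB^\dag\bB)\bA^\dag$, then use $\bB^\dag = \bB^\dag(\bB^\dag)^\top\bB^\top$ and $\bA^\dag = \bB^\top(\bA^\dag)^\top\bA^\dag$-type Penrose identities together with $\bA^\top(\bI - \bA\bA^\dag) = 0$ and $(\bI - \bB^\dag\bB)\bB^\top = 0$ to trade $\bB^\top$ and $\bA^\top$ for $\bE^\top$. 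The triangle inequality and submultiplicativity then give the bound with $\|\bA^\dag\|_2\|\bB^\dag\|_2 + \|\bA^\dag\|_2^2 + \|\bB^\dag\|_2^2 \le 3\max\{\|\bA^\dag\|_2^2, \|\bB^\dag\|_2^2\}$, i.e.\ constant $3$.

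The gap is your advertised sharpening from $3$ to $2$: the regrouping you sketch does not work. Inserting $\bI = \bB^\dag\bB + (\bI - \bB^\dag\bB)$ between $\bE$ and $\bA^\dag$ in the cross term gives $-\bB^\dag\bE\,\bB^\dag\bB\,\bA^\dag - \bB^\dag\bE(\bI - \bB^\dag\bB)\bA^\dag$, and neither piece algebraically merges with the second or third term: both pieces still begin with $\bB^\dag$, so their ranges lie in the row space of $\bB$, whereas the third term maps into the null space of $\bB$; combining a piece with an existing term therefore only re-invokes the triangle inequality and yields $4\max\{\cdot\}$, which is worse. The symmetric insertion of $\bA\bA^\dag + (\bI - \bA\bA^\dag)$ between $\bB^\dag$ and $\bE$ fails for the dual reason, and insertions directly adjacent to $\bB^\dag$ or $\bA^\dag$ simply annihilate one summand ($\bB^\dag(\bI - \bB\bB^\dag) = 0$, $(\bI - \bA^\dag\bA)\bA^\dag = 0$). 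Wedin's actual mechanism is orthogonality, not absorption: writing $T_1, T_2, T_3$ for the three terms, $T_1$ and $T_2$ map into the row space of $\bB$ while $T_3$ maps into its orthogonal complement, and $T_1, T_3$ vanish on the orthogonal complement of the column space of $\bA$ while $T_2$ vanishes on the column space itself. Hence for a unit vector $x$, with $c = \enorm{\bA\bA^\dag x}$, $s = \sqrt{1-c^2}$, and $M^2 \coloneqq \max\{\|\bA^\dag\|_2^2, \|\bB^\dag\|_2^2\}$,
\begin{equation*}
\enorm{(\bB^\dag - \bA^\dag)x}^2 \;\le\; \bigl(M^2\enorm{\bE}(c+s)\bigr)^2 + \bigl(M^2\enorm{\bE}\,c\bigr)^2 \;\le\; \tfrac{3+\sqrt{5}}{2}\, M^4 \enorm{\bE}^2,
\end{equation*}
which gives the golden-ratio constant $\tfrac{1+\sqrt{5}}{2} \approx 1.62 < 2$. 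Since you explicitly fall back on citing \cite{wedin1973perturbation} for the constant---which is all the paper itself does---your conclusion stands, but you should be aware that your self-contained portion only proves the inequality with $3$ in place of $2$, and the described route from $3$ to $2$ is not a proof.
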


\begin{lemma} \label{lemma:simchowitz}[Lemma 4.2 of \cite{horia2018learning}]
    Let $\{\cF_t\}_{t \geq 0}$ be a filtration, and $\{Z_t\}_{t \geq 1}$ and $\{W_t\}_{t \geq 1}$ be real-valued processes adapted to $\cF_t$ and $\cF_{t + 1}$ respectively. Moreover, assume $W_t | \cF_t$ is mean zero and $\sigma^2$-sub-gaussian. Then, for any positive real numbers $\alpha, \beta$ we have
    \begin{equation}
        \prob{\left\{ \sum_{t=1}^T Z_t W_t \geq \alpha\right\} \cap \left\{ \sum_{t=1}^T Z_t^2 \leq \beta \right\}} \leq \exp{\left(-\frac{\alpha^2}{2 \sigma^2 \beta}\right)}.
    \end{equation}
\end{lemma}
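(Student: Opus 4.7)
The plan is to prove this self-normalized concentration inequality by constructing an exponential supermartingale and then applying Markov's inequality on the intersection event. This is the classical method-of-mixtures approach attributed to de la Peña, with the sub-Gaussian tilt avoiding the need for the more elaborate pseudo-maximal argument of Abbasi-Yadkori et al.

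The first step is to fix an arbitrary $\lambda > 0$ and introduce the process
\begin{equation*}
M_t(\lambda) \;=\; \exp\!\left(\lambda \sum_{s=1}^{t} Z_s W_s \;-\; \frac{\lambda^2 \sigma^2}{2}\sum_{s=1}^{t} Z_s^2\right), \qquad M_0(\lambda) = 1.
\end{equation*}
I will show that $M_t(\lambda)$ is a supermartingale with respect to $\{\cF_{t+1}\}$ with $\E[M_t(\lambda)] \le 1$. The key observation is that $Z_t$ is $\cF_t$-measurable (so it behaves like a constant when conditioning on $\cF_t$) while the conditional sub-Gaussianity of $W_t$ given $\cF_t$ yields $\E[\exp(\lambda Z_t W_t)\mid \cF_t] \le \exp(\lambda^2 Z_t^2 \sigma^2 / 2)$. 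Dividing out the compensator term gives $\E[M_t(\lambda)\mid \cF_t] \le M_{t-1}(\lambda)$, and iterating proves the supermartingale property.

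The second step converts this into a tail bound on the desired intersection event. On the event $A_\alpha \cap B_\beta := \{\sum_{t=1}^{T} Z_t W_t \ge \alpha\} \cap \{\sum_{t=1}^T Z_t^2 \le \beta\}$, the definition of $M_T(\lambda)$ immediately gives the deterministic lower bound $M_T(\lambda) \ge \exp(\lambda \alpha - \lambda^2 \sigma^2 \beta / 2)$. Multiplying both sides by the indicator and taking expectations, together with $\E[M_T(\lambda)] \le 1$, yields
\begin{equation*}
\Pb(A_\alpha \cap B_\beta) \;\le\; \exp\!\left(-\lambda \alpha + \frac{\lambda^2 \sigma^2 \beta}{2}\right).
\end{equation*}

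The third and final step is the optimization. Minimizing the right-hand side over $\lambda > 0$ with the choice $\lambda^\star = \alpha/(\sigma^2 \beta)$ gives the target bound $\exp(-\alpha^2/(2\sigma^2 \beta))$. Because $\lambda$ is selected deterministically rather than data-dependently, no union bound or method-of-mixtures integration is required here, and there is no serious obstacle beyond keeping the filtration indexing consistent (note $W_t$ is $\cF_{t+1}$-measurable, so $M_t$ is adapted to $\cF_{t+1}$ while the one-step conditioning is done against $\cF_t$). The only subtlety is verifying adaptedness and the conditional MGF step, both of which follow directly from the hypotheses; this is where any care is needed but no genuine difficulty arises.
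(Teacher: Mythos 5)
Your proof is correct. Note that the paper does not prove this lemma at all --- it is imported verbatim as Lemma 4.2 of the cited reference \cite{horia2018learning} --- and your exponential-supermartingale argument (tilt $M_t(\lambda)=\exp\bigl(\lambda\sum_{s\le t}Z_sW_s-\tfrac{\lambda^2\sigma^2}{2}\sum_{s\le t}Z_s^2\bigr)$, one-step conditioning on $\cF_t$ using the $\cF_t$-measurability of $Z_t$, Markov on the intersection event, then the deterministic choice $\lambda^\star=\alpha/(\sigma^2\beta)$) is precisely the canonical Chernoff-style proof used in that source. Your filtration bookkeeping is also right: $M_t$ is $\cF_{t+1}$-measurable while $M_{t-1}\exp(-\lambda^2\sigma^2 Z_t^2/2)$ is $\cF_t$-measurable, which is exactly what makes $\E[M_t\mid\cF_t]\le M_{t-1}$ go through, and since $\lambda$ is fixed deterministically no mixture or union step is needed.
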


\end{document}